\def\eqref#1{equation~\ref{#1}}
\def\1{\bm{1}}
\DeclareMathAlphabet{\mathsfit}{\encodingdefault}{\sfdefault}{m}{sl}
\SetMathAlphabet{\mathsfit}{bold}{\encodingdefault}{\sfdefault}{bx}{n}
\newcommand{\E}{\mathbb{E}}
\newcommand{\R}{\mathbb{R}}
\newtheorem{lemma}{Lemma}
\newtheorem{theorem}{Theorem}
\def\reals{\mathbb{R}}
\def\dist{\textnormal{dist}}
\def\tt{t_0}
\colorlet{myyellow}{yellow!60!white}
\newcommand{\highlight}[2][yellow]{\mathchoice%
  {\colorbox{#1}{$\displaystyle#2$}}%
  {\colorbox{#1}{$\textstyle#2$}}%
  {\colorbox{#1}{$\scriptstyle#2$}}%
  {\colorbox{#1}{$\scriptscriptstyle#2$}}}%
\title{Quickly Finding a Benign Region via Heavy Ball Momentum in Non-Convex Optimization}
\author{Jun-Kun Wang \& Jacob Abernethy\\
Georgia Institute of Technology\\
\texttt{\{jimwang,prof\}@gatech.edu} \\
}
\begin{document}

\maketitle

\begin{abstract}
\begin{mdframed}
{\color{red} Erratum: We want to draw the attention of the reader that the proof of Theorem 1 has a bug. We are working on solving the issue
and will update this manuscript if we have a satisfying proof.}
\end{mdframed}
The Heavy Ball Method \citep{P64}, proposed by Polyak over five decades ago, is a first-order method for optimizing continuous functions.
While its stochastic counterpart has proven extremely popular in training deep networks, there are almost no known functions where deterministic Heavy Ball is provably faster than the simple and classical gradient descent algorithm in non-convex optimization.
The success of Heavy Ball has thus far eluded theoretical understanding. 
Our goal is to address this gap, and in the present work we identify
two non-convex problems where we provably show that the Heavy Ball momentum helps the iterate to enter a benign region that contains a global optimal point faster.
We show that Heavy Ball exhibits 
simple dynamics that clearly reveal the benefit of using a larger value of momentum parameter for the problems.
The first of these optimization problems is the phase retrieval problem,
which has useful applications in physical science.
The second of these optimization problems is the cubic-regularized minimization, a critical subroutine required by Nesterov-Polyak cubic-regularized method (\cite{N06}) to find second-order stationary points in general smooth non-convex problems. \\
\end{abstract}

\section{Introduction}

Poylak's Heavy Ball method (\cite{P64}) has been very popular in modern non-convex optimization and deep learning, and the stochastic version (a.k.a. SGD with momentum) has become the de facto algorithm for training neural nets.
Many empirical results show that the algorithm is better than the standard SGD in deep learning (see e.g. \cite{HHS17,KH1918,WRSSR17,SMDH13}), but there are almost no corresponding mathematical results that show a benefit relative to the more standard (stochastic) gradient descent.
Despite its popularity, we still have a very poor justification theoretically for its success in non-convex optimization tasks, and \cite{NKJK18} were able to establish a negative result, showing that Heavy Ball momentum cannot outperform other methods in certain problems.
Furthermore,
even for convex problems 
it appears that strongly convex, smooth, and twice differentiable functions (e.g. strongly convex quadratic functions) are one of just a handful examples for which a provable
speedup over standard gradient descent can be shown (e.g
\citep{LRP16,goh2017why,GFJ15,GLZX19,LR17,LR18,GPS16,SP20,SYLHGJ19,YLL18,CGZ19,LGY20,SGD20,NB15}). 
There are even some negative results when the function is strongly convex but not twice differentiable. That is, Heavy Ball momentum might lead to a divergence in convex optimization (see e.g. \citep{GFJ15,LRP16}).
The algorithm's apparent success in modern non-convex optimization has remained quite mysterious. 

In this paper, we identify two non-convex optimization problems for which the use of Heavy Ball method has a provable advantage over vanilla gradient descent.
The first problem is phase retrieval. It has some useful applications in physical science such as microscopy or astronomy (see e.g. \citep{CSV13}, \citep{AS20}, and \citep{SECCMS15}).
The objective is 
\begin{equation} \label{obj}
\textstyle \min_{w \in \reals^d} f(w):= \frac{1}{4 n} \sum_{i=1}^n  \big(  (x_i^\top w)^2 - y_i \big)^2,
\end{equation}
where $x_i \in \reals^d$ is the design vector and $y_i = (x_i^\top w_*)^2$ is the label of sample $i$.
The goal is to recover $w_*$ up to the sign that is not recoverable \citep{CSV13}. 
Under the Gaussian design setting (i.e. $x_i \sim N(0,I_d)$), it is known that the empirical risk minimizer (\ref{obj}) is $w_*$ or $-w_*$, as long as the number of samples $n$ exceeds the order of the dimension $d$ (see e.g. \cite{BCMN14}).
Therefore, solving (\ref{obj}) allows one to recover the desired vector $w_* \in \reals^d$ up to the sign.
Unfortunately the problem is non-convex which limits our ability to efficiently find a minimizer.
For this problem,
there are many specialized algorithms that aim at achieving a better computational complexity and/or sample complexity to recover $w_*$ modulo the unrecoverable sign 
(e.g. \citep{CLM16,CL14,CLS15,CSV13,CC17,DR18,MWCC17,MXM18,NJS13,QZEW17,TBSSR16,WGE17,WGSC17,YYFZWN18,ZCL17,ZZLC17,ZL15}).
Our goal is not about providing a state-of-the-art algorithm for solving (\ref{obj}). Instead, we treat this problem as a starting point of understanding Heavy Ball momentum in non-convex optimization and hope for getting some insights on why Heavy Ball (Algorithm~\ref{alg:HB} and ~\ref{alg:HB2}) can be faster than the vanilla gradient descent in non-convex optimization and deep learning in practice. If we want to understand why Heavy Ball momentum leads to acceleration for a complicated non-convex problem, we should first understand it in the simplest possible setting.

We provably show that Heavy Ball recovers the desired vector $w_*$, up to a sign flip, given a random isotropic initialization.
Our analysis divides the execution of the algorithm into two stages.
In the first stage, the ratio of the projection of the current iterate $w_t$ on $w_*$ to the projection of $w_t$ on the perpendicular component keeps growing,
which makes the iterate eventually enter a benign region which is strongly convex, smooth, twice differentiable, and contains a global optimal point.
Therefore, in the second stage, Heavy Ball has a linear convergence rate. Furthermore, up to a value, a larger value of the momentum parameter has a faster linear convergence than the vanilla gradient descent in the second stage. Yet, most importantly, we show that Heavy Ball momentum also has an important role in reducing the number of iterations in the first stage, which is when the iterate might be in a non-convex region. We show that the higher the momentum parameter $\beta$, the fewer the iterations spent in the first stage (see also Figure~\ref{fig:ab}). Namely, momentum helps the iterate to enter a benign region faster.
Consequently, using a non-zero momentum parameter leads to a speedup over the standard gradient descent $(\beta = 0)$.
Therefore, our result shows
a provable acceleration relative to the vanilla gradient descent, for computing a \emph{global optimal} solution in non-convex optimization. 

The second of these is solving a class of cubic-regularized problems,
\begin{equation} \label{obj:reg}
\textstyle \min_w f(w) := \frac{1}{2} w^\top A w + b^\top w + \frac{\rho}{3} \| w \|^3,
\end{equation}
where the matrix $A \in \reals^{d \times d}$ is symmetric and possibly indefinite.
Problem (\ref{obj:reg}) is a sub-routine of
the Nesterov-Polyak cubic-regularized method (\cite{N06}), which aims to minimize a non-convex objective $F(\cdot)$ by iteratively solving
\begin{equation} \label{obj:reg2}
\begin{aligned}
\textstyle w_{t+1}  = \arg\min_{w \in \reals^d} 
\{ \nabla F(w_t)^\top (w - w_t)  \textstyle + \frac{1}{2} (w - w_t)^\top \nabla^2 F(w) (w - w_t) + \frac{\rho}{3} \| w - w_t \|^3   \},
\end{aligned}
\end{equation}
With some additional post-processing, the iterate $w_t$ converges to an $(\epsilon_g, \epsilon_h)$ second order stationary point, defined as  $\{w: \| \nabla f(w) \| \leq \epsilon_g \text{ and }  \nabla^2 f(w) \succeq - \epsilon_h I_d \}$ for any small $\epsilon_g, \epsilon_h > 0$.
However, their algorithm needs to compute a matrix inverse to solve 
(\ref{obj:reg}), which is computationally expensive when the dimension is high.
A very recent result due to \cite{CD19} shows that vanilla gradient descent approximately finds the global minimum of (\ref{obj:reg}) under mild conditions, which only needs a Hessian-vector product and can be computed 
in the same computational complexity as computing gradients \citep{P94},
and hence is computationally cheaper than the matrix inversion of the Hessian. 
Our result shows that, similar to the case of phase retrieval, the use of Heavy Ball momentum helps the iterate to enter a benign region of (\ref{obj:reg2}) that contains a global optimal solution faster, compared to vanilla gradient descent.

To summarize, our theoretical results of the two non-convex problems provably show the benefit of using Heavy Ball momentum. 
Compared to the vanilla gradient descent,
the use of momentum helps to accelerate the optimization process. 
The key to showing the acceleration in getting into benign regions of these problems is a family of simple dynamics due to Heavy Ball momentum.
We will argue that the simple dynamics 
are not restricted to the two main problems considered in this paper.
Specifically, the dynamics also naturally arise when solving the problem of top eigenvector computation \citep{GL96} and the problem of saddle points escape  (e.g. \citep{JGNKJ17,WCA20}), which might imply the broad applicability of the dynamics for analyzing Heavy Ball in non-convex optimization.

\begin{minipage}{.45\textwidth}
\begin{algorithm}[H]
\begin{algorithmic}[1]
\small
\caption{Heavy Ball method \newline \citep{P64} (Equivalent version 1) } 
\label{alg:HB}
\STATE Required: step size $\eta$ and momentum parameter $\beta \in [0,1]$.
\STATE Init: $w_{0} = w_{-1} \in \reals^d $
\FOR{$t=0$ to $T$}
\STATE Update iterate $w_{t+1} = w_t - \eta \nabla f(w_t) + \beta ( w_t - w_{t-1} ) $.
\ENDFOR
\end{algorithmic}
\end{algorithm}
\end{minipage}
\hfill
\begin{minipage}{.55\textwidth}
\begin{algorithm}[H]
\begin{algorithmic}[1]
\small
\caption{Heavy Ball method \newline \citep{P64} (Equivalent version 2) } 
\label{alg:HB2}
\STATE Required: step size $\eta$ and momentum parameter $\beta \in [0,1]$.
\STATE Init: $w_{0} \in \reals^d $ and $m_{-1} = 0$.
\FOR{$t=0$ to $T$}
\STATE Update momentum $m_t := \beta m_{t-1} +  \nabla f(w_t)$.
\STATE Update iterate $w_{t+1} := w_t - \eta m_t$.
\ENDFOR
\end{algorithmic}
\end{algorithm}
\end{minipage}

\section{More related works}

\noindent
\textbf{Heavy Ball (HB):} HB has two \emph{exactly} equivalent presentations in the literature (see Algorithm~\ref{alg:HB} and ~\ref{alg:HB2}).
Given the same initialization, both algorithms generate the same sequence of $\{w_t\}$. In Algorithm~\ref{alg:HB2}, we note that the momentum $m_t$ can be written as $m_t=\sum_{s=0}^t \beta^{t-s} \nabla f(w_s)$ and can be viewed as a weighted sum of gradients.
As we described in the opening paragraph, there is little theory of showing a provable acceleration of the method in non-convex optimization. 
The only exception that we are aware of is \citep{WCA20}.
They show that HB momentum can help to escape saddle points faster and find a second-order stationary point faster for smooth non-convex optimization.
They also observed that stochastic HB solves (\ref{obj}) and that using higher values of the momentum parameter $\beta$ leads to faster convergence. 
However, while their work focused on the stochastic setting, their main result required some assumptions on the statistical properties of the sequence of observed gradients; it is not clear whether these would hold in general. 
In appendix~\ref{app:related_works}, we provide a more detailed literature review of HB. To summarize, current results in the literature
imply that we are still very far from understanding deterministic HB in non-convex optimization, let alone understanding the success of 
stochastic HB in deep learning. Hence, this work aims to make progress on a simple question: can we give a precise advantage argument for the acceleration effect of Heavy Ball in the deterministic setting?

\noindent
\textbf{Phase retrieval:}
The optimization landscape of problem (\ref{obj}) and its variants has been studied by \citep{DDP18,S14,SQW16,WSW16}, which shows that as long as the number of samples is sufficiently large, it has no spurious local optima. 
We note that the problem can also be viewed as a special case of matrix sensing (e.g. \cite{LMZ18,GWBNS17,LL20,LMCC19,GBL19,YZQM20}); in Appendix~\ref{app:related_works}, we provide a brief summary of matrix sensing.
For solving phase retrieval, 
\cite{MBCKUZ20} study gradient flow, while
\cite{CCFMY18} show that the standard gradient descent with a random initialization like Gaussian initialization solves (\ref{obj}) and recovers $w_*$ up to the sign. \cite{TV19} show that online gradient descent with a simple random initialization can converge to a global optimal point in an online setting where fresh samples are required for each step. In this paper, we show that Heavy Ball converges even faster than the vanilla gradient descent.
\citet{ZZL16} propose leveraging Nesterov's momentum to solve phase retrieval. However, their approach requires delicate and computationally expensive initialization like spectral initialization so that the initial point is already within the neighborhood of a minimizer.
Similarly, \citet{XCHZ18,XCHZ20} show local convergence of Nesterov's momentem and Heavy Ball momentum for phase retrieval, but require the initial point to be in the neighborhood of an optimal point.
\citet{CNJ18} propose an algorithm that uses Nesterov's momentum together as a subroutine with perturbation for finding a second-order stationary point, which could be applied for solving phase retrieval.
Compared to \citep{ZZL16,CNJ18,XCHZ18,XCHZ20}, we consider \emph{directly} applying gradient descent with \emph{Heavy Ball} momentum (i.e. HB method) to the objective function with \emph{simple} random initialization, e.g. Gaussian initialization, which is what people do in practice and is what we want to understand. The goals of the works are different.


\section{Phase Retrieval}

\subsection{Preliminaries}

Following the works of \cite{CSV13,CCFMY18},
we assume that the design vectors $\{x_i\}$ (which are known a priori) are from Gaussian distribution $x_i \sim N(0,I_d)$.
Furthermore,
without loss of generality, we assume that $w_* = e_1$ (so that $\| w_* \| = 1$), where $e_1$ is the standard unit vector whose first element is $1$.
We also denote 
$\displaystyle
w_{t}^\parallel := w_t[1]        \text{ and } w_{t,\perp} := \begin{bmatrix} w_t[2], \dots , w_t[d] \end{bmatrix}^\top.$
That is, $w_{t}^\parallel$ is the projection of the current iterate $w_t$ on $w_*$, while $w_{t}^\perp$ is the perpendicular component.
Throughout the paper, the subscript $t$ is an index of the iterations while the subscript $i$ is an index of the samples.

\begin{figure*}[t]
\centering
     \subfloat[\footnotesize Obj. value (\ref{obj}) vs. $t$]{%
       \includegraphics[width=0.33\textwidth]{./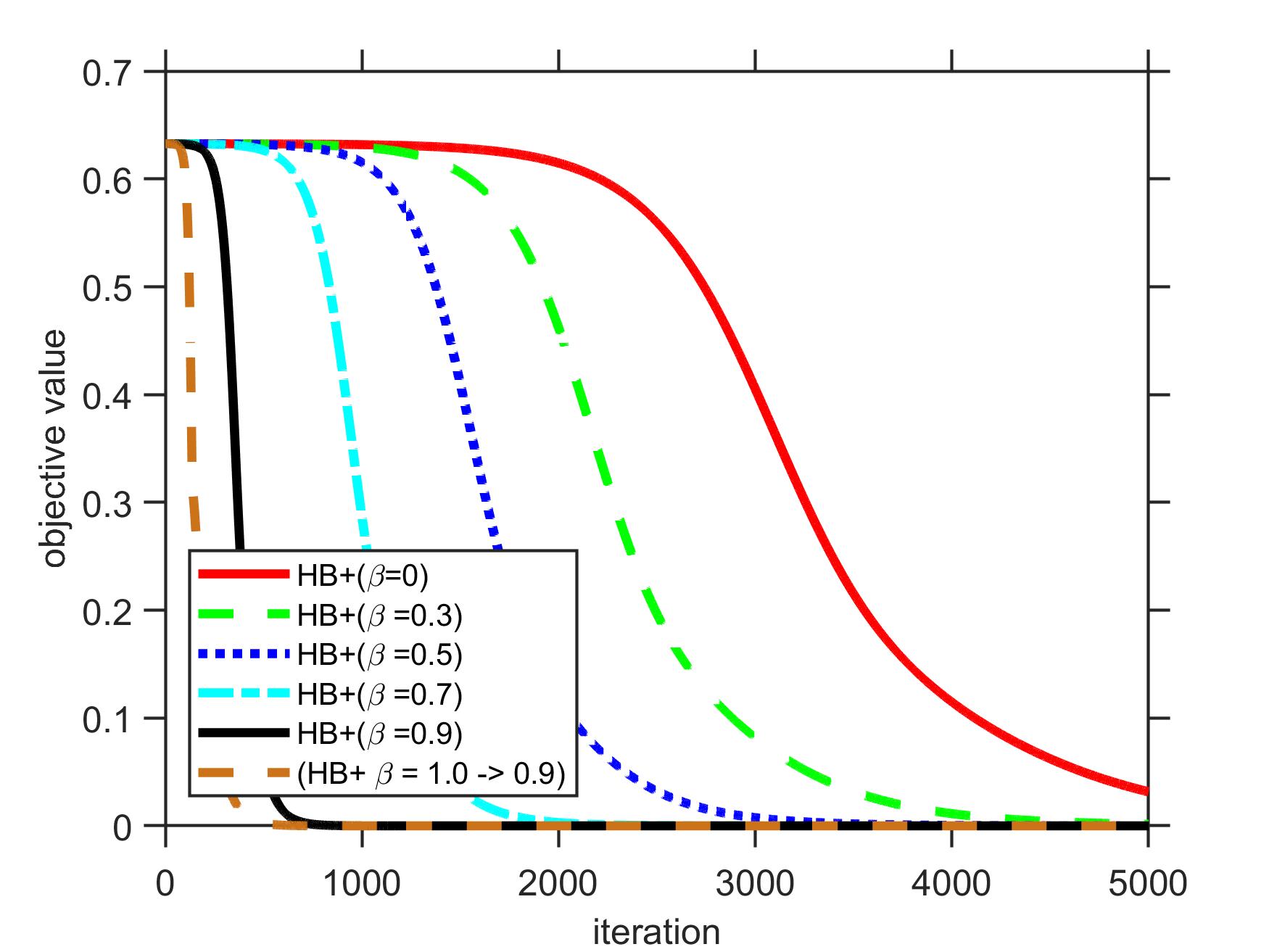}
     }
     \subfloat[\footnotesize $| w_t^\parallel|$ vs. $t$]{%
       \includegraphics[width=0.33\textwidth]{./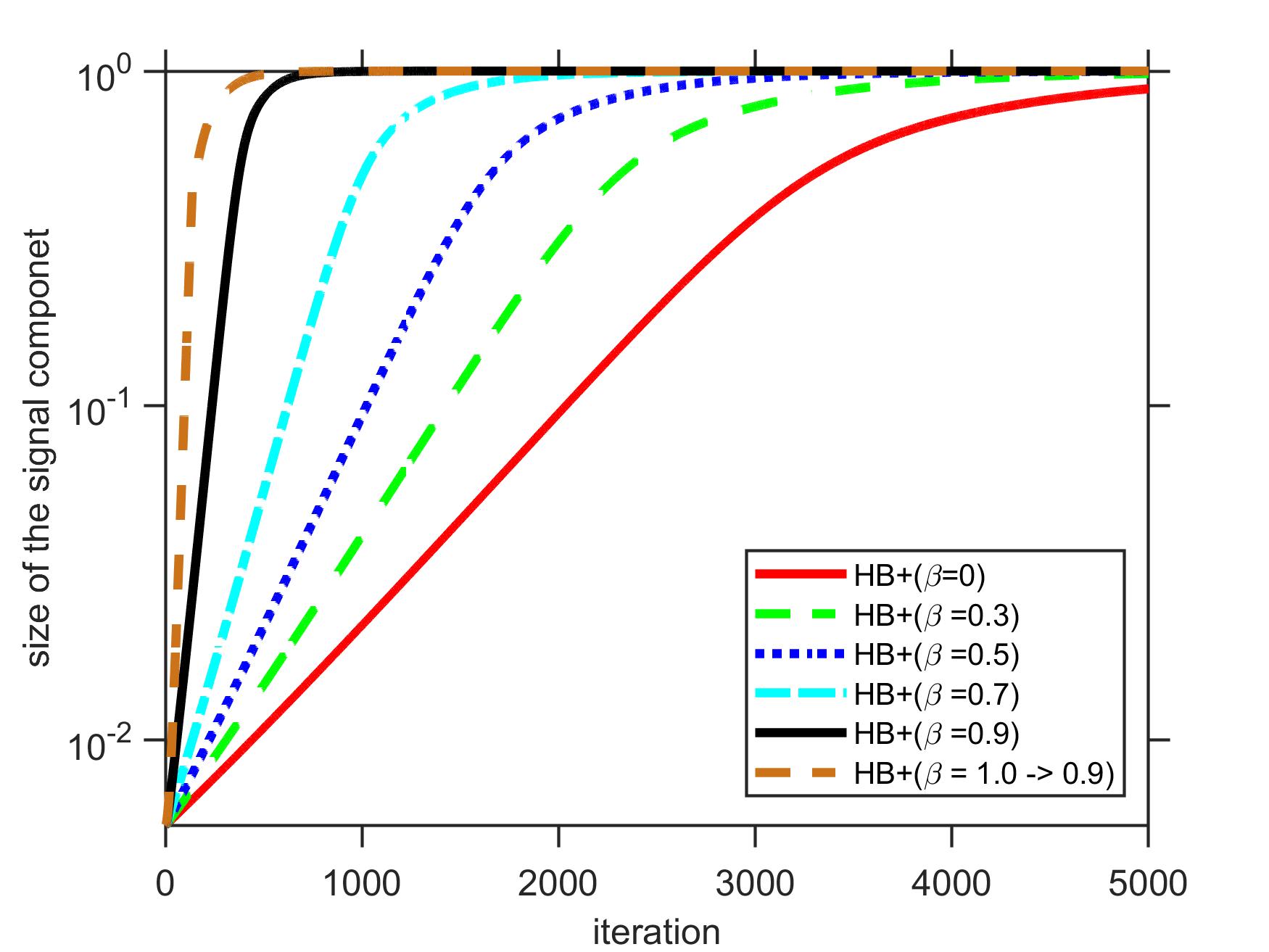}
     }
     \subfloat[\footnotesize $\| w_t^\perp \|$ vs. $t$]{%
       \includegraphics[width=0.33\textwidth]{./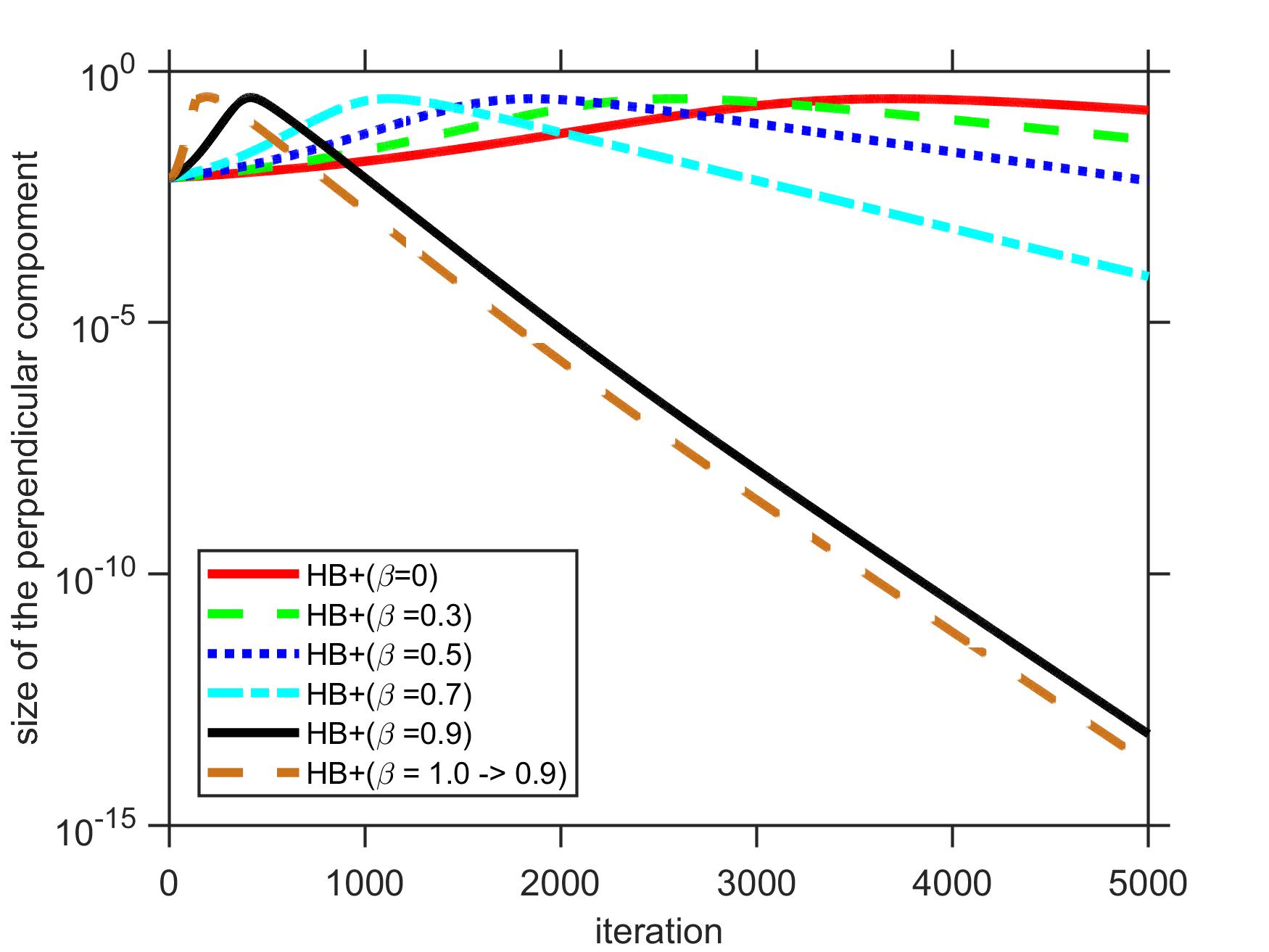}
     }
\caption{
\footnotesize
Performance of HB with different $\beta= \{ 0, 0.3, 0.5, 0.7, 0.9, 1.0 \rightarrow 0.9\}$ for phase retrieval. (Enlarged figures are available in Appendix~\ref{app:exp}.)
Here ``$1.0 \rightarrow 0.9$'' stands for using parameter $\beta=1.0$ in the first few iterations and then switching to using $\beta=0.9$ after that.
In our experiment, for the ease of implementation, we let the criteria of the switch be $\mathbbm{1}\{\frac{f(w_1)-f(w_t)}{f(w_1)} \geq 0.5 \}$, i.e. if the relative change of objective value compared to the initial value has been increased to $50\%$.
Algorithm~\ref{alg:HBreset1} and Algorithm~\ref{alg:HBreset2} in Appendix~\ref{app:exp} describe the procedures.
All the lines are obtained by initializing the iterate at the same point $w_0 \sim \mathcal{N}( 0, \mathcal{I}_d / (10000 d))$ and using the same step size $\eta = 5 \times 10^{-4}$. Here we set $w_* = e_1$
and sample $x_i \sim \mathcal{N}( 0, \mathcal{I}_d )$ with dimension $d=10$ and number of samples $n=200$. We see that the higher the momentum parameter $\beta$, the faster the algorithm enters the linear convergence regime. 
(a):
Objective value (\ref{obj}) vs. iteration $t$.
We see that the higher the momentum parameter $\beta$, the faster the algorithm enters the linear convergence regime.
(b): The size of projection of $w_t$ on $w_*$ over iterations (i.e. $| w_t^\parallel|$ vs. $t$), which is non-decreasing throughout the iterations until reaching an optimal point (here, $\|w_*\| = 1$). (c): The size of the perpendicular component over iterations (i.e. $\| w_t^\perp \|$ vs. $t$), which is increasing in the beginning and then it is decreasing towards zero after some point.
We see that the slope of the curve corresponding to a larger momentum parameter $\beta$ is steeper than that of a smaller one, which confirms Lemma~\ref{lem:agrow} and Lemma~\ref{lem:bdecay}.
}    \label{fig:ab}  
\end{figure*}

Before describing the main results, we would like to provide a preliminary analysis to show how momentum helps.
Applying gradient descent with Heavy Ball momentum (Algorithm~\ref{alg:HB}) to objective (\ref{obj}), we see that the iterate is generated according to
$w_{t+1} 
 = w_t - \eta  \frac{1}{n} \sum_{i=1}^n  \big(  (x_i^\top w_t)^3 - (x_i^\top w_t) y_i    \big) x_i  
 \textstyle + \beta (w_t - w_{t-1}).$
On the other hand,
the population counterpart (i.e. when the number of samples $n$ is infinite) of the update rule 
turns out to be the key to understanding momentum.
The population gradient $\nabla F(w):= \E_{x \sim N(0,I_n)}[ \nabla f(w) ]$ is (proof is available in appendix~\ref{app:pop})
\begin{equation} \label{eq:pg}
\textstyle \nabla F(w)= \big( 3 \| w \|^2 - 1 \big) w  - 2 ( w_*^\top w) w_*.
\end{equation}
Using the population gradient (\ref{eq:pg}), we have the 
population update, $w_{t+1} = w_t - \eta \nabla F(w_t) + \beta (w_t - w_{t-1})  $,
which can be decomposed as follows:
\begin{equation} \label{eq:recur}
\begin{split}
\textstyle  w_{t+1}^\parallel & = \big( 1 + 3 \eta ( 1 - \| w_t \|^2 )  \big)  w_{t}^\parallel
 + \beta (  w_{t}^\parallel -  w_{t-1}^\parallel    )
\\
\textstyle 
 w_{t+1}^\perp & = \big( 1 +  \eta ( 1 - 3 \| w_t \|^2 )  \big)  w_{t}^\perp
 + \beta (  w_{t}^\perp -  w_{t-1}^\perp    ).
\end{split}
\end{equation}
Assume that the random initialization satisfies $\| w_0 \|^2 \ll \frac{1}{3}$.
From the population recursive system (\ref{eq:recur}),
both the magnitude of the signal component $w_t^\parallel$ and the perpendicular component $ w_t^\perp$ grow exponentially in the first few iterations.
\begin{lemma} \label{lem:agrow}
For a positive number $\theta$ and the momentum parameter $\beta \in [0,1]$, if a non-negative sequence $\{ a_{t} \}$ satisfies $a_0 \geq a_{-1} > 0$ and that for all $t \leq T$,
\begin{equation} \label{dym:gold}
\textstyle a_{t+1} \geq ( 1 + \theta) a_t + \beta ( a_t - a_{t-1}),
\end{equation}
then $\{a_t\}$ satisfies 
\begin{equation} \label{dym:effective}
\textstyle a_{t+1} \geq \left(1 + \big(1+  \highlight[myyellow]{ \frac{\beta }{1+\theta} } \big) \theta \right) a_t,
\end{equation}
for every $t=1, \ldots, T+1$.
Similarly, if a non-positive sequence $\{ a_{t} \}$ satisfies $a_0 \leq a_{-1} < 0$ and that for all $t \leq T$,
$a_{t+1} \leq ( 1 + \theta) a_t + \beta ( a_t - a_{t-1})$,
then $\{a_t\}$ satisfies 
\begin{equation} \label{dym:effective2}
\textstyle
a_{t+1} \leq \left( 1 + \big(1+  \highlight[myyellow]{ \frac{\beta }{1+\theta} } \big)\theta \right) a_t,
\end{equation}
for every $t=1, \ldots, T+1$.
\end{lemma}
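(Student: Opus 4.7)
My plan is to rewrite the hypothesized recurrence as $a_{t+1} \geq (1+\theta+\beta) a_t - \beta a_{t-1}$ and then track the evolution of the consecutive ratio $r_t := a_t / a_{t-1}$. Assuming for the moment that all $a_t$ are strictly positive, dividing by $a_t$ yields the one-step update
\begin{equation*}
r_{t+1} \;\geq\; \phi(r_t) \;:=\; (1+\theta+\beta) \;-\; \frac{\beta}{r_t}.
\end{equation*}
Since $\phi$ is strictly increasing on $(0,\infty)$, any lower bound on $r_t$ propagates to a lower bound on $r_{t+1}$, and the target inequality $a_{t+1} \geq \bigl(1 + \theta + \tfrac{\beta\theta}{1+\theta}\bigr) a_t$ is exactly $r_{t+1} \geq \phi(1+\theta)$, since
\begin{equation*}
\phi(1+\theta) \;=\; (1+\theta+\beta) - \frac{\beta}{1+\theta} \;=\; 1 + \theta + \frac{\beta\theta}{1+\theta}.
\end{equation*}

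The induction then runs as follows. For the base case, the hypothesis at $t=0$ together with $a_0 \geq a_{-1}$ gives $a_1 \geq (1+\theta) a_0 + \beta(a_0 - a_{-1}) \geq (1+\theta) a_0 > 0$, so $r_1 \geq 1+\theta$. For the inductive step, assume $r_t \geq 1+\theta$ (so $a_t > 0$). Applying $\phi$ yields $r_{t+1} \geq \phi(1+\theta) = 1 + \theta + \beta\theta/(1+\theta) \geq 1+\theta$, which both delivers the claimed conclusion at index $t$ and preserves the inductive invariant for the next step. Positivity of $a_{t+1}$ follows because $r_{t+1} \geq 1+\theta > 0$ and $a_t > 0$. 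Iterating over the allowable range of $t$ produces (\ref{dym:effective}).

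For the non-positive companion statement, I will simply negate: setting $\tilde a_t := -a_t$ gives a non-negative sequence with $\tilde a_0 \geq \tilde a_{-1} > 0$, and multiplying the hypothesized inequality $a_{t+1} \leq (1+\theta) a_t + \beta(a_t - a_{t-1})$ by $-1$ yields exactly $\tilde a_{t+1} \geq (1+\theta) \tilde a_t + \beta(\tilde a_t - \tilde a_{t-1})$. The first half of the lemma then applies to $\{\tilde a_t\}$, and flipping the sign back gives (\ref{dym:effective2}).

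The main conceptual point — and essentially the only non-routine choice — is selecting the right bootstrap invariant. Trying to carry the stronger bound $r_t \geq 1 + \theta + \beta\theta/(1+\theta)$ through the induction fails at the base case, since $a_0 \geq a_{-1}$ only delivers $r_1 \geq 1+\theta$; but that weaker inequality is already enough to unlock $\phi$'s monotonicity and produce the stronger bound at every subsequent step. Once this is in place, the rest is bookkeeping: checking positivity so the ratio is well-defined, and verifying the algebraic identity $\phi(1+\theta) = 1 + \theta + \beta\theta/(1+\theta)$ that equates the target rate with one application of the ratio map.
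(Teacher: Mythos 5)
Your proof is correct and follows essentially the same route as the paper's: both induct on the weak invariant $a_{t+1} \geq (1+\theta)a_t$ (equivalently $r_{t+1}\geq 1+\theta$), feed the resulting bound $a_t - a_{t-1} \geq \tfrac{\theta}{1+\theta}a_t$ into the momentum term, and read off the strengthened rate each step. Your ratio-map packaging via $\phi$ is a cleaner presentation of the identical algebra; the only nit is that $\phi$ is merely nondecreasing (constant when $\beta=0$), though weak monotonicity already suffices for the argument.
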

One can view $a_t$ in Lemma~\ref{lem:agrow} as the projection of the current iterate $w_t$ onto a vector of interest. The lemma says that with a larger value of the momentum parameter $\beta$, the magnitude of $a_t$ is increasing faster.
It also implies that if the projection due to vanilla gradient descent satisfies $a_{t+1} \geq ( 1 + \theta) a_t$, then the magnitude of the projection only grows faster with the use of Heavy Ball momentum.
The dynamics in the lemma are the keys to showing that Heavy Ball momentum accelerates the process of entering a benign (convex) region. 
The factor $\frac{\beta }{1+\theta} \theta$ in (\ref{dym:effective}) and (\ref{dym:effective2}) represents the contribution due to the use of momentum, and the contribution is larger with a larger value of momentum parameter $\beta$.
Now let us apply Lemma~\ref{lem:agrow} to the recursive system (\ref{eq:recur}) and
pretend that the magnitude of $\| w_t \|$ was a constant for a moment.
Denote $\theta_t:= 3 \eta ( 1 - \| w_t \|^2 )$  and $\tilde{\theta}_t := \eta ( 1 - 3 \| w_t \|^2)$ and notice that $\theta_t > \tilde{\theta}_t > 0$ when $\| w_t \|^2 < \frac{1}{3}$.
We can rewrite the recursive system as
\begin{equation} \label{eq:recur_tmp}
\begin{split}
\textstyle  w_{t+1}^\parallel & = (1+\theta_t) w_{t}^\parallel
 + \beta (  w_{t}^\parallel -  w_{t-1}^\parallel    )
\\ \textstyle 
 w_{t+1}^\perp & = (1+\tilde{\theta}_t)  w_{t}^\perp
 + \beta (  w_{t}^\perp -  w_{t-1}^\perp    ).
\end{split}
\end{equation}
Since the above system is in the form of (\ref{dym:gold}), the dynamics (\ref{dym:effective}) and (\ref{dym:effective2}) in Lemma~\ref{lem:agrow} suggest that the larger the momentum parameter $\beta$, the faster the growth rate of the magnitude of the signal component $w_t^\parallel$ 
and the perpendicular component $w_{t}^\perp$. 
Moreover, the magnitude of the signal component $w_{t}^\parallel$ grows faster than that of the perpendicular component $w_{t}^\perp$.
Both components will grow until the size of iterate $\| w_t \|^2$ is sufficiently large (i.e. $\| w_t \|^2 > \frac{1}{3}$). 
After that, 
the magnitude of the perpendicular component $w_{t}^\perp$ starts decaying, while $|w_{t}^\parallel|$ keeps growing until it approaches $1$. Furthermore,
we have that the larger the momentum parameter $\beta$, the faster the decay rate of the (magnitude of the) perpendicular component $w_t^\perp$.
In other words, $|w_{t}^\parallel|$ converges to $1$ and $w_{t}^\perp$ converges to $0$ quickly. Lemma~\ref{lem:bdecay} in Appendix~\ref{app:supp}, which is a counterpart of Lemma~\ref{lem:agrow}, can be used to explain the faster decay of the magnitude due to a larger value of the momentum parameter $\beta$.

By using the population recursive system (\ref{eq:recur})
and Lemma~\ref{lem:agrow} \& \ref{lem:bdecay} as the tool,
we obtain a high-level insight on how momentum helps (see also Figure~\ref{fig:ab}).
The momentum helps to drive the iterate to enter the neighborhood of a $w_*$ (or $-w_*$) faster.

\subsection{Main results}

We denote 
$ \textstyle
\dist(w_t, w_*) := \min \{ \| w_t - w_* \|_2, \| w_t + w_* \|_2  \}$ as the distance between the current iterate $w_t$ and $w_*$, modulo the unrecoverable sign. Note that both $\pm w_*$ achieve zero testing errors.
It is known that
as long as the number of samples is sufficiently large, i.e. $n \gtrsim d \log d$,
there exists a constant $\vartheta>0$ such that the Hessian satisfies $\nabla^2 f(w) \succeq \vartheta I_d$ 
for all $w \in \mathbbm{B}_{\zeta}(\pm w_*)$ with high probability, where
$\mathbbm{B}_{\zeta}(\pm w_*)$ represents the balls centered at $\pm w_*$ with a radius $\zeta$ (e.g. \citep{MWCC17}). 
So in this paper we consider the case that the local strong convexity holds in the neighborhood $\mathbbm{B}_{\zeta}(\pm w_*)$.

We will divide the iterations into two stages.
The first stage consists of those iterations that satisfy $0 \leq t \leq T_{\zeta}$,
where $T_{\zeta}$ 
is defined as
\begin{equation} 
\textstyle
T_{\zeta}:= \min \{t : 
| |w_t^\parallel| - 1 | \leq \frac{\zeta}{2} \text{ and } \| w_t^\perp \| \leq \frac{\zeta}{2} \}
,
\end{equation}
and $\zeta>0$ is sufficiently small so that it makes $w_{T_{\zeta}}$ be in the neighborhood of $w_*$ or $-w_*$ which is smooth, twice differentiable, strongly convex, see e.g. \citep{MWCC17,S14}.
Observe that if  
$| |w_t^\parallel| - 1 | \leq \frac{\zeta}{2}$ and $\| w_t^\perp \| \leq \frac{\zeta}{2}$, we have that
$\dist( w_t, w_* ) \leq \| w_t - w_* \| \leq | |w_t^\parallel| - 1 | + \| w_t^\perp \| \leq \zeta$.
The second stage consists of those iterations that satisfy $t \geq T_{\zeta}$.
Given that $\dist( w_{T_\zeta}, w_* ) \leq \zeta$
and that the local strong convexity holds in $\mathbbm{B}_{\zeta}(\pm w_*)$,
the iterate $w_t$ would be in a benign region at the start of this stage, which allows linear convergence to a global
optimal point.
That is, we have that 
$\dist( w_t, w_* ) \leq (1 - \nu )^{t - T_{\zeta}} \zeta$ for all $t \geq T_{\zeta}$,
where $1 > \nu >0$ is some number. 
Since the behavior of the momentum method in the second stage can be explained by the existing results (e.g. \cite{XCHZ20}), the goal is to understand why momentum helps to drive the iterate into the benign region faster.

To deal with the case that only finite samples are available in practice,
we will consider some perturbations from the population dynamics (\ref{eq:recur}).
In particular, we consider 
\begin{equation} \label{eq:recur-fin}
\begin{aligned}
\textstyle
 w_{t+1}^\parallel & \textstyle = \big( 1 + 3 \eta ( 1 - \| w_t \|^2 ) + \eta \xi_t \big)  w_{t}^\parallel
 + \beta (  w_{t}^\parallel -  w_{t-1}^\parallel    )
\\ 
\textstyle
 w_{t+1}^\perp[j] & \textstyle = \big( 1 +  \eta ( 1 - 3 \| w_t \|^2 ) + \eta \rho_{t,j}  \big)  w_{t}^\perp[j]
\textstyle
+ \beta (  w_{t}^\perp[j] -  w_{t-1}^\perp[j]    ),
\end{aligned}
\end{equation}
where $\{ \xi_t \}$ and $\{ \rho_{t,j} \}$ for $1\leq j \leq d-1$ are the perturbation terms.
The perturbation terms are used to model the deviation from the population dynamics. In this paper, we 
assume that there exists a small number $c_n > 0$ such that for all iterations $t \leq T_{\zeta}$ and all $j \in [d-1]$,
$\textstyle \max \{ |\xi_t|, |\rho_{t,j}| \} \leq c_n,$
where the value $c_n$ should decay when the number of samples $n$ is increasing and $c_n \approxeq 0$ when there are sufficiently large number of samples $n$. 

\begin{theorem} \label{thm}
Suppose that the approximated dynamics (\ref{eq:recur-fin}) holds with 
$\max \{ |\xi_t|, |\rho_{t,j}| \} \leq c_n$
for all iterations $t \leq T_{\zeta}$ and all dimensions $j \in [d-1]$,
where $c_n \leq \zeta$.
Assume that 
 the initial point $w_0$ satisfies $|w_0^\parallel| \gtrsim \frac{1}{ \sqrt{d \log d} }$ and $\| w_0 \| < \frac{1}{3}$. 
Set the momentum parameter $\beta \in [0,1]$.
Assume that the norm of the momentum $\| m_t\|$ is bounded for all $t \leq T_{\zeta}$, 
i.e. $\| m_t \| \leq c_m$ for some constant $c_m > 0$.
If the step size $\eta$ satisfies $\eta \leq \frac{1}{ 36 \big( 1 + \zeta \big) \max\{ c_m, 1\} }$, 
then Heavy Ball (Algorithm~\ref{alg:HB} \&~\ref{alg:HB2}) takes at most
\[
\displaystyle
T_{\zeta} \lesssim \frac{\log d }{\eta \left( 1+ \highlight[myyellow]{ c_{\eta} \beta} \right)}
\]
number of iterations to enter the benign region $\mathbbm{B}_{\zeta}(w_*)$ or $\mathbbm{B}_{\zeta}(-w_*)$,
where $c_{\eta} := \frac{1}{1 + \eta/2} \leq 1$.
Furthermore, for all $t \geq T_{\zeta}$,
the distance is shrinking linearly for some values of $\eta$ and $\beta<1$.
That is, we have that
$\dist( w_t, w_* ) \leq (1 - \nu )^{t - T_{\zeta}} \zeta$,
for some number $1 > \nu > 0$.
\end{theorem}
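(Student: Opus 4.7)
My plan is to track the two scalar dynamics in (\ref{eq:recur-fin}) separately through two sub-phases, use Lemma~\ref{lem:agrow} as a black box to extract the momentum-boosted growth rate for $|w_t^\parallel|$, and use its counterpart Lemma~\ref{lem:bdecay} for the decay of $\|w_t^\perp\|$ once the iterate is large. First, I would partition $\{0,\dots,T_\zeta\}$ into (i) a growth sub-phase where $\|w_t\|^2 \le 1/3$, so that both $\theta_t := 3\eta(1-\|w_t\|^2)+\eta\xi_t$ and $\tilde\theta_{t,j}:=\eta(1-3\|w_t\|^2)+\eta\rho_{t,j}$ are positive (using $c_n\le \zeta$ and the step-size bound to absorb the perturbations), and (ii) a contraction sub-phase where $\|w_t\|^2 > 1/3$, in which $|w_t^\parallel|$ continues to grow toward $1$ while each $|w_t^\perp[j]|$ begins to shrink.

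In sub-phase (i), the $\parallel$-recursion has exactly the form (\ref{dym:gold}) with $\theta=\theta_t$, so Lemma~\ref{lem:agrow} gives
\[
|w_{t+1}^\parallel| \;\ge\; \Bigl(1 + \bigl(1 + \tfrac{\beta}{1+\theta_t}\bigr)\theta_t\Bigr)|w_t^\parallel|.
\]
Since $\|w_t\|^2\le 1/3$ and the perturbation $|\xi_t|\le c_n\le \zeta$, one can lower bound $\theta_t \ge c_1\eta$ for a universal constant $c_1$, which yields an effective per-step multiplicative factor of at least $1 + c_1\eta(1 + c_\eta\beta)$ with $c_\eta = \tfrac{1}{1+\eta/2}$ (here the $\eta/2$ comes from bounding $1/(1+\theta_t) \ge c_\eta$ for all $\theta_t$ encountered). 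At the same time, the inequality $\theta_t \ge 3\tilde\theta_{t,j}+O(\eta c_n)$ forces $|w_t^\parallel|/\|w_t^\perp\|$ to stay non-decreasing (up to perturbation slack) along the iteration; this, combined with the initialization lower bound $|w_0^\parallel|\gtrsim 1/\sqrt{d\log d}$, keeps the parallel component dominant throughout. Iterating the multiplicative bound starting from $|w_0^\parallel|\gtrsim 1/\sqrt{d\log d}$ until $\|w_t\|^2$ crosses $1/3$ takes
\[
N_1 \;\lesssim\; \frac{\log\bigl(\sqrt{d\log d}\bigr)}{\log\bigl(1 + c_1\eta(1+c_\eta\beta)\bigr)} \;\lesssim\; \frac{\log d}{\eta(1+c_\eta\beta)}
\]
iterations, using $\log(1+x)\asymp x$ for small $x$ (guaranteed by the step-size assumption together with the bound on $\|m_t\|\le c_m$ that keeps $\|w_t\|$ from exploding in a single step).

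For sub-phase (ii) I would invoke the non-positive/decay variant Lemma~\ref{lem:bdecay} applied coordinate-wise to $w_t^\perp[j]$: once $\|w_t\|^2>1/3+\Omega(\zeta)$, the coefficient $1+\tilde\theta_{t,j}$ is strictly below $1$ by an amount $\Theta(\eta)$, and Lemma~\ref{lem:bdecay} promotes this to an effective contraction factor roughly $1 - c_2\eta(1+c_\eta\beta)$. The number of further iterations needed to drive $\|w_t^\perp\|$ below $\zeta/2$ from its peak (which is $O(1)$ by the step-size and momentum-norm bound) is then again $O\!\bigl(\log(d)/[\eta(1+c_\eta\beta)]\bigr)$. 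Meanwhile $|w_t^\parallel|$ continues growing monotonically toward $1$ under the same Lemma~\ref{lem:agrow} bound so long as $\|w_t\|^2 < 1$, and one can use a short case analysis (together with $\eta\le 1/(36(1+\zeta)\max\{c_m,1\})$) to ensure it does not overshoot by more than $\zeta/2$. Summing the two sub-phases delivers the advertised bound on $T_\zeta$. After $T_\zeta$, the iterate sits in $\mathbbm{B}_\zeta(\pm w_*)$, which is strongly convex, smooth, and twice differentiable, so standard local linear convergence results for Heavy Ball (e.g.\ \cite{XCHZ20}) yield $\dist(w_t,w_*)\le (1-\nu)^{t-T_\zeta}\zeta$.

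The hard part is \emph{not} the growth estimate itself but keeping the analysis self-consistent under the three coupled complications: the $\parallel$ and $\perp$ recursions share the common factor $\|w_t\|^2$, so a careful induction must simultaneously maintain (a) $\|w_t\|^2 \le 1 + O(\zeta)$ (to apply the growth lemmas with the right sign of $\theta_t$), (b) $|w_t^\parallel|$ growing at the boosted rate, and (c) $\|w_t^\perp\|$ small enough that the ratio stays favorable despite the perturbations $\xi_t,\rho_{t,j}$. The assumption $\|m_t\|\le c_m$ and the small step size are what prevent the momentum term from pushing $\|w_t\|^2$ past $1$ in a single jump and spoiling the phase structure; making this quantitative — i.e.\ picking explicit constants so that the perturbation slack $\eta c_n$ is dominated by the boosted growth $c_1\eta(1+c_\eta\beta)$ uniformly over $t\le T_\zeta$ — is where the bulk of the careful bookkeeping lies.
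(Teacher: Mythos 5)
Your plan is essentially the one the paper uses: decompose the approach to $T_\zeta$ into a growth phase and a contraction phase, apply Lemma~\ref{lem:agrow} to extract the boosted multiplicative rate $(1+(1+\beta/(1+\theta))\theta)$ for $|w_t^\parallel|$, apply Lemma~\ref{lem:bdecay} coordinate-wise to $w_t^\perp$ once $\|w_t\|^2$ is large, sum the two phase lengths to get $O(\log d / [\eta(1+c_\eta\beta)])$, and then hand off to a standard local Heavy Ball convergence result (\cite{XCHZ20}) once in $\mathbbm{B}_\zeta(\pm w_*)$. Two places where the paper's execution differs from — and sharpens — what you sketch are worth flagging.

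First, the paper uses a \emph{three-stage} decomposition, not two: Stage~1.1 runs until $\|w_t\| \le \sqrt{(2+c_n)/3}$, Stage~1.2 drives $\|w_t^\perp\|$ below $\zeta/2$, and Stage~1.3 separately drives $|w_t^\parallel|$ into $[1-\zeta/2, 1+\zeta/2]$ (handling both the case $|w_t^\parallel| < 1-\zeta/2$ via Lemma~\ref{lem:agrow} and the overshoot case $|w_t^\parallel| > 1+\zeta/2$ via Lemma~\ref{lem:bdecay}). Your ``short case analysis'' at the end of sub-phase (ii) corresponds exactly to Stage~1.3, which the paper treats as a genuine extra stage with its own iteration count $T_a$ — it is not a throwaway.

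Second, and more importantly, your phase boundary at $\|w_t\|^2 = 1/3$ creates an unaccounted-for regime. In the perturbed dynamics the perpendicular coefficient is $1 + \eta(1 - 3\|w_t\|^2) + \eta\rho_{t,j}$, which can still exceed $1$ for $\|w_t\|^2$ slightly above $1/3$ when $\rho_{t,j} > 0$; so Lemma~\ref{lem:bdecay} (which needs a strict, uniform contraction margin) cannot be invoked immediately when $\|w_t\|^2$ crosses $1/3$. You implicitly acknowledge this when you later write ``once $\|w_t\|^2 > 1/3 + \Omega(\zeta)$,'' but you never show that the iterates get there, nor do you control what happens in the slab $1/3 \le \|w_t\|^2 \le 1/3 + \Omega(\zeta)$. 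The paper handles this by placing the Stage~1.1/1.2 boundary at $\|w_t\|^2 = (2+c_n)/3$, which is strictly above $1/3$ by a margin of $1/3$ (not merely $\Omega(\zeta)$), and by proving Lemma~\ref{lem:when} and Lemma~\ref{lem:m_t}: the former shows that each coordinate of $w_t^\perp$ has already started decreasing by the time $\|w_t\|^2$ reaches $(2+c_n)/3$ (which supplies the $b_0 \le b_{-1}$ hypothesis of Lemma~\ref{lem:bdecay}), and the latter bounds the one-step increase $\|w_t\|^2 - \|w_{t-1}\|^2 \le 1/9$ via the $\|m_t\| \le c_m$ assumption so that $\|w_t\|$ does not jump over the target interval. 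These, together with Lemma~\ref{lem:when2} (the analogous ``starts decreasing by $\|w_t\|^2 = (4+c_n)/3$'' statement for the parallel coordinate) and Lemma~\ref{lem:spec} (a spectral-radius argument keeping $\|w_t^\perp\|$ from regrowing in Stage~1.3), are exactly the quantitative bookkeeping you flag as the hard part but leave unexecuted. Note finally that the paper carries an erratum acknowledging a bug in this very proof, so whatever subtle issue is lurking there is likely shared by your proposal, which mirrors the same staged structure.
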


The theorem states that the number of iterations required for 
gradient descent to enter the linear convergence is reduced by a factor of $(1+c_{\eta} \beta)$, which clearly demonstrates that momentum helps to drive the iterate into a benign region faster.
The constant $c_{\eta}$ suggests that the smaller the step size $\eta$, the acceleration due to the use of momentum is more evident. The reduction can be about $\approxeq 1+\beta$ for a small $\eta$. 
After $T_{\zeta}$, Heavy Ball has a locally linear convergence to $w_*$ or $-w_*$. Specifically, if $w_0^\parallel>0$, then we will have that $w_t^\parallel > 0$ for all $t$ and that the iterate will converge to $w_*$; otherwise, we will have $w_t^\parallel < 0$ for all $t$ and that the iterate will converge to $-w_*$. The proof of Theorem~\ref{thm} is in Appendix~\ref{app:newlemma}.

\noindent
\textbf{Remark 1:} 
The initial point is required to satisfy  $\langle w_0, w_* \rangle  \gtrsim \frac{1}{\sqrt{d \log d} }$ and $\| w_0 \| < \frac{1}{3}$.
The first condition can be achieved by generating $w_0$
from a Gaussian distribution or uniformly sampling from a sphere with high probability (see e.g. Chapter 2 of \citep{BHK18}). The second condition can then be satisfied by scaling the size appropriately.
We note that the condition that the norm of the momentum is bounded is also assumed in \citep{WCA20}.

\noindent
\textbf{Remark 2:} 
Our theorem indicates that in the early stage of the optimization process, the momentum parameter $\beta$ can be as large as $1$, which is also verified in the experiment (Figure~\ref{fig:ab}).
However, to guarantee convergence after the iterate is in the neighborhood of a global optimal solution, the parameter $\beta$ must satisfy $\beta<1$ \citep{P64,LRP16}.

\noindent
\textbf{Remark 3:}
The number $\nu$ of the local linear convergence rate due to Heavy Ball momentum
actually depends on the smoothness constant $L$ and strongly convexity constant $\mu$ in the neighborhood of a global solution, as well as the step size $\eta$ and the momentum parameter $\beta$ (see e.g. Theorem~9 in the original paper \citep{P64} or a refined analysis that considers the problem's structure from \citep{XCHZ20}). By setting the step size $\eta = 4 / ( \sqrt{L} + \sqrt{\mu})^2$ and the momentum parameter $\beta = \max\{ | 1 - \sqrt{\eta L} |, | 1 - \sqrt{\eta \mu}| \}^2$, $\nu$ will depend on the squared root of the condition number $\sqrt{\kappa} :=\sqrt{L/\mu}$ instead of $\kappa:=L/\mu$, which means that an optimal local convergence rate is achieved (e.g. \citep{B14}). In general, up to a certain threshold, a larger value of $\beta$ leads to a faster rate than that of standard gradient descent.


\section{Cubic-Regularized problem} \label{sec:cubic}

\subsection{Notations}
We begin by introducing the notations used in this section.
For the symmetric but possibly indefinite matrix $A$, we denote its eigenvalue in the increasing order,
where any $\lambda^{(i)}$ might be negative. 
We denote the eigen-decomposition of $A$ as $A:= \sum_{i=1}^d \lambda^{(i)}(A) v_i v_i^\top$, where each $v_i \in \reals^d$ is orthonormal.
 We also denote $\gamma:= - \lambda^{(1)}(A)$, and $\gamma_+ = \max\{\gamma, 0\} $,
and $\| A \|_2 := \max \{ | \lambda^{(1)}(A) | , | \lambda^{(d)}(A)|\}$.
For any vector $w \in \reals^d$, we denote $w^{(i)}$ as the projection on the eigenvector of $A$, $w^{(i)} = \langle w, v_i \rangle$.
Denote $w_*$ as a global minimizer of the cubic-regularized problem (\ref{obj:reg})
and denote
$\textstyle A_* := A + \rho \| w_* \| I_d.$
Previous works of \cite{N06,CD19} show that the minimizer $w_*$ has a characterization;
it satisfies
$\textstyle \rho \| w_* \| \geq \gamma \text{ and } \nabla f(w_*) = A_* w_* + b = 0.$
Furthermore, the minimizer $w_*$ is unique if $\rho \| w_* \| > \gamma $.
In this paper, we assume that the problem has a unique minimizer so that $\rho \| w_* \| > \gamma $.
The gradient of the cubic-regularized problem (\ref{obj:reg}) is
\begin{equation}
\begin{split}
\textstyle \nabla f(w)    & \textstyle = A w + b + \rho \| w \| w  = A_* (w - w_*) - \rho \big(  \|w_*\| - \| w \|   \big) w.
\end{split}
\end{equation}
By applying the Heavy Ball algorithm (Algorithm~\ref{alg:HB}) to the cubic-regularized problem (\ref{obj:reg}), we see that it generates the iterates via
\begin{equation}
\begin{split}
\textstyle w_{t+1}   = w_t - \eta \nabla f(w_t) + \beta (w_t - w_{t-1})
 \textstyle = (I_d - \eta A - \rho \eta \| w_t \| I_d) w_t - \eta b + \beta (w_t - w_{t-1}).
\end{split}
\end{equation}

\subsection{Entering a benign region faster}

For the cubic-regularized problem, we define a different notion of a benign region from that for the phase retrieval.
The benign region here is smooth, contains the unique global optimal point $w_*$, and satisfies a notion of \emph{one-point strong convexity}
\citep{KLY20,LY17,SYS20},
\begin{equation} \label{eq:one-svx}
 \textstyle  w \in \reals^d:  \langle w - w_* , \nabla f(w) \rangle
 \geq \vartheta \| w - w_* \|^2,  \text{ where } \vartheta > 0 . 
\end{equation}
We note that the standard strong convexity used in the definition of a benign region for phase retrieval could imply the one-point strong convexity here, but not vice versa \citep{HSS20}.

Previous work of \cite{CD19} 
shows that if
the norm of the iterate is sufficiently large, i.e.
$\rho \| w \| \geq \gamma - \delta$
for any sufficiently small $\delta > 0$, the iterate is in the benign region that contains the global minimizer $w_*$.
To see this,
by using the gradient expression of $\nabla f(w)$, we have that
\begin{equation} \label{eq:wdotg}
\begin{split}
 \textstyle \langle w - w_* , \nabla f(w) \rangle 
 \textstyle = & \textstyle ( w - w_* )^\top \big(  A_* + \frac{\rho}{2} ( \|w \| - \| w_* \| ) I_d \big) ( w - w_* )
\\ & \textstyle +  \frac{\rho}{2} \big(  \| w_* \| - \| w \|   \big)^2 (  \| w \| + \| w_* \| ) .
\end{split}
\end{equation}
The first term on the r.h.s. of the equality becomes nonnegative if the matrix $A_* + \frac{\rho}{2} ( \|w \| - \| w_* \| ) I_d $ becomes PSD.
Since $A_* \succeq ( - \gamma + \rho \| w_* \| )I_d $, it means that 
if $\rho \| w \| \geq \gamma - \big( \rho \| w_* \| - \gamma \big)$, the matrix becomes PSD (note that by the characterization, $\rho \| w_* \| - \gamma > 0$).
Furthermore, if the size of iterate
$\rho \| w \|$ satisfies $\rho \| w \| > \gamma - \big( \rho \| w_* \| - \gamma \big)$,
the matrix becomes positive definite and consequently, we have that
$A_* + \frac{\rho}{2} ( \|w \| - \| w_* \| ) I_d \succ \vartheta I_d$ 
for a number $\vartheta > 0$. Therefore, (\ref{eq:wdotg}) becomes
\begin{equation} \label{eq:one-svx}
 \textstyle \langle w - w_* , \nabla f(w) \rangle
 \geq \vartheta \| w - w_* \|^2,  \qquad \vartheta > 0. 
\end{equation}
Therefore, the benign region of the cubic regularized problem can be characterized as
\begin{equation} \label{eq:benign}
\mathbbm{B}:=\{ w \in \reals^d:  \rho \| w \| > \gamma - \big(  \rho \| w_* \| - \gamma \big)  \}.
\end{equation}

What we are going to show is that HB with a larger value of the momentum parameter $\beta$ enters the benign region 
$\mathbbm{B}$ 
faster.
We have the following theorem, which shows that the size of the iterate $\rho \| w_t \|$ will grow very fast to exceed any level below $\gamma$. Furthermore, the larger the momentum parameter $\beta$, the faster the growth, which shows the advantage of Heavy Ball over vanilla gradient descent. 

\begin{theorem} \label{lem:0d}
Fix any number $\delta$ that satisfies $\delta > 0$. 
Define $ T_{\delta} := \min \{t: \rho \| w_{t+1} \| \geq \gamma - \delta \}$.
Suppose that the initialization satisfies $w_0^{(1)} b^{(1)} \leq 0$.
Set the momentum parameter $\beta \in [0,1]$.
If the step size $\eta$ satisfies $\eta \leq \frac{1}{ \| A \|_2 + \rho \|w_*\|}$, 
then Heavy Ball (Algorithm~\ref{alg:HB} \&~\ref{alg:HB2}) takes at most
\[
\displaystyle T_{\delta} \leq 
\frac{2}{ \eta \delta ( 1 + \highlight[myyellow]{ \beta / ( 1+ \eta \delta)} ) } \log \big( 1 +
\frac{ \gamma_+^2  (1 + \beta/ (1+\eta \delta))  }{4 \rho | b^{(1)} |} \big)
\]
number of iterations required to enter the benign region $\mathbbm{B}$.
\end{theorem}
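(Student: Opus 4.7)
The plan is to project the Heavy Ball dynamics onto the eigenvector $v_1$ corresponding to the smallest (most negative) eigenvalue $-\gamma$ of $A$, reduce the analysis to a one-dimensional affine recursion in the scalar $w_t^{(1)} := \langle w_t, v_1\rangle$, and then invoke Lemma~\ref{lem:agrow} to extract the $(1+\beta/(1+\eta\delta))$-speedup due to momentum.

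First I would take the inner product of the Heavy Ball update (displayed just before the theorem) with $v_1$. Using $A v_1=-\gamma v_1$ gives the scalar recursion
\[
w_{t+1}^{(1)} \;=\; \bigl(1 + \eta\gamma - \eta\rho\|w_t\|\bigr)\, w_t^{(1)} \;-\; \eta b^{(1)} \;+\; \beta\bigl(w_t^{(1)} - w_{t-1}^{(1)}\bigr).
\]
For $t \leq T_{\delta}$ the iterate lies outside the benign region, so $\rho\|w_t\| < \gamma-\delta$ and the multiplicative factor exceeds $1+\eta\delta$; the step-size assumption $\eta(\|A\|_2+\rho\|w_*\|)\leq 1$ keeps this same factor bounded above, which I will use to rule out overshoot and sign flips of $w_t^{(1)}$.

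Next I would show by induction on $t$ that $w_t^{(1)}$ has a fixed sign opposite to $b^{(1)}$ and that $|w_t^{(1)}|$ is monotone non-decreasing. The base case uses the hypothesis $w_0^{(1)} b^{(1)} \leq 0$ together with $w_{-1}=w_0$ built into Algorithm~\ref{alg:HB}; the inductive step observes that under sign-consistency and monotonicity, the amplification of $w_t^{(1)}$, the forcing $-\eta b^{(1)}$, and the inertia $\beta(w_t^{(1)}-w_{t-1}^{(1)})$ all push $w_{t+1}^{(1)}$ in the same direction. Passing to $a_t := |w_t^{(1)}|$ then yields the clean one-sided inequality
\[
a_{t+1} \;\geq\; (1+\eta\delta)\,a_t \;+\; \beta\,(a_t - a_{t-1}) \;+\; \eta|b^{(1)}|,
\]
which fits the shape of (\ref{dym:gold}) modulo the non-negative source term $\eta|b^{(1)}|$.

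Applying Lemma~\ref{lem:agrow} to the homogeneous part collapses the two-step recurrence into the one-step amplification $a_{t+1}\geq r\,a_t + \eta|b^{(1)}|$ with $r := 1 + \bigl(1 + \beta/(1+\eta\delta)\bigr)\eta\delta$. Unrolling (equivalently, applying the lemma to the shifted sequence $\tilde a_t := a_t + \eta|b^{(1)}|/(r-1)$) gives the closed form $a_t \geq \eta|b^{(1)}|(r^t-1)/(r-1)$. Since $\|w_{T+1}\|\geq a_{T+1}$, it suffices to choose the smallest $T$ for which $\rho\,a_{T+1}\geq \gamma-\delta$; the case $\gamma\leq 0$ is vacuous since then $\gamma_+=0$ and the benign region is entered at $t=0$. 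Solving this inequality, taking logarithms, and applying the elementary estimate $\log(1+x)\geq x/2$ valid for $x\leq 1$ to lower-bound $\log r$ produces the prefactor $2/(\eta\delta(1+\beta/(1+\eta\delta)))$; the elementary inequality $(\gamma-\delta)\delta \leq \gamma_+^2/4$ (tight at $\delta=\gamma_+/2$) converts the exact growth threshold $(\gamma-\delta)(r-1)/(\rho|b^{(1)}|)$ into the quoted $\gamma_+^2/(4\rho|b^{(1)}|)$ inside the logarithm of the stated bound.

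The main obstacle I anticipate is coordinating three subtleties at once: (i) the multiplicative factor in the $v_1$-recursion is time-varying through $\rho\|w_t\|$, so Lemma~\ref{lem:agrow} must be applied to the pessimistic lower bound $1+\eta\delta$ rather than to an equality; (ii) the source term $\eta|b^{(1)}|$ is not present in Lemma~\ref{lem:agrow} as stated and so calls for a shifted-sequence or direct telescoping argument, with extra care because the lemma's hypothesis $a_{-1}>0$ fails at the typical initialization $w_0=0$ and one must cold-start from $a_1=\eta|b^{(1)}|$; and (iii) the sign-and-monotonicity induction must be maintained in parallel with the growth bound, since a failure of monotonicity would flip the sign of the inertia term $\beta(a_t-a_{t-1})$ and invalidate the hypothesis of Lemma~\ref{lem:agrow}.
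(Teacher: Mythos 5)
Your approach matches the paper's almost exactly: both project the Heavy Ball recursion onto the bottom eigenvector $v_1$, reduce to the one-dimensional affine recursion $a_{t+1}\geq(1+\eta\delta)a_t+\beta(a_t-a_{t-1})+c$ for $t\leq T_\delta$, establish sign-preservation of $w_t^{(1)}$ via an induction (the paper isolates this as Lemma~\ref{lem:signnew}), and then convert the two-step recursion to a one-step exponential growth bound before inverting to get $T_\delta$. One small inaccuracy worth flagging: ``applying Lemma~\ref{lem:agrow} to the homogeneous part'' does not, as stated, yield $a_{t+1}\geq r\,a_t+\eta|b^{(1)}|$ with the source term intact---the lemma as written only gives $a_{t+1}\geq r\,a_t$---but your parenthetical fix (apply the lemma to the shifted sequence $\tilde a_t:=a_t+\eta|b^{(1)}|/(r-1)$, which is strictly positive and still satisfies the two-step hypothesis because the residual constant $c-\eta\delta\cdot c/(r-1)\geq 0$ can be dropped) is sound, and indeed recovers the paper's bound; the paper instead re-runs the induction underlying Lemma~\ref{lem:agrow} directly in the presence of the $+1$ term, which is an equivalent amount of work. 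Also be careful with $\log(1+x)\geq x/2$: the paper needs the range $x\lesssim 2.51$ rather than $x\leq 1$, since $r-1=(1+\beta/(1+\eta\delta))\eta\delta$ can exceed $1$ under the stated step-size bound.
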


Note that 
the case of $\beta = 0$ in Theorem~\ref{lem:0d} reduces to the result of \cite{CD19}
which analyzes vanilla gradient descent.
The lemma implies that the higher the momentum parameter $\beta$, the faster that the iterate enters the benign region for which a linear convergence is possible; see also Figure~\ref{fig:cubic} for the empirical results. 
Specifically, $\beta$ reduces the number of iterations $T_{\delta}$ by a factor of $(1 +\beta/ (1+\eta \delta))$ (ignoring the $\beta$ in the log factor as its effect is small), which also implies that for a smaller step size $\eta$, the acceleration effect due to the use of momentum is more evident. The factor can be approximately $1+\beta$ for a small $\eta$.
Lastly,
the condition $w_0^{(1)} b^{(1)} \leq 0$ in Theorem~\ref{lem:0d} can be satisfied by $w_0= w_{-1} = - r \frac{b}{\| b \|}$ for any $r>0$.
\begin{proof} (sketch; detailed proof is available in Appendix~\ref{app:lem:0d})
The theorem holds trivially when $\gamma \leq 0$, so let us assume $\gamma > 0$.
Recall the notation that $w^{(1)}$ represents the projection of $w$ on the eigenvector $v_1$ of the least eigenvalue $\lambda^{(1)}(A)$, i.e. $w^{(1)} = \langle w , v_1 \rangle$. 
From the update,
we have that
\begin{equation} \label{eq:wd0-main}
\begin{split}
\textstyle
\frac{ w_{t+1}^{(1)} }{ - \eta b^{(1)} }
= ( I_d + \eta \gamma - \rho \eta \| w_t \| ) \frac{ w_{t}^{(1)} }{ - \eta b^{(1)} }
+ 1 + \frac{ \beta (  w_t^{(1)} - w_{t-1}^{(1)} ) }{ - \eta b^{(1)} }.
\end{split}
\end{equation}
Denote
$ \textstyle
a_t  :=  \frac{ w_{t}^{(1)} }{ - \eta b^{(1)} }$
and in the detailed proof we will show that $a_t \geq 0$ for all $t \leq T_{\delta}$.
We can rewrite (\ref{eq:wd0-main}) as 
$\textstyle a_{t+1}  \textstyle = ( 1 + \eta \gamma - \rho \eta \| w_t \| ) a_t + 1 + \beta ( a_t - a_{t-1} ) 
\textstyle  \geq ( 1 + \eta \delta ) a_t + 1 + \beta ( a_t - a_{t-1} ),$
where the inequality is due to that $\rho \| w_t \| \leq \gamma - \delta $ for $t \leq T_{\delta}$.
So we can now see that the dynamics is essentially in the form of (\ref{dym:gold}) 
except that there is an additional $1$ on the r.h.s of the inequality. 
Therefore, we can invoke Lemma~\ref{lem:agrow} to show that the higher the momentum, the faster the iterate enters the benign region.
In Appendix~\ref{app:lem:0d}, we consider the presence of $1$ on the r.h.s  and obtain a tighter bound than what Lemma~\ref{lem:agrow} can provide.
\end{proof}

\begin{figure*}[t]
\centering
     \subfloat[\footnotesize norm $\| w_t \|$ vs. iteration]{%
       \includegraphics[width=0.5\textwidth]{./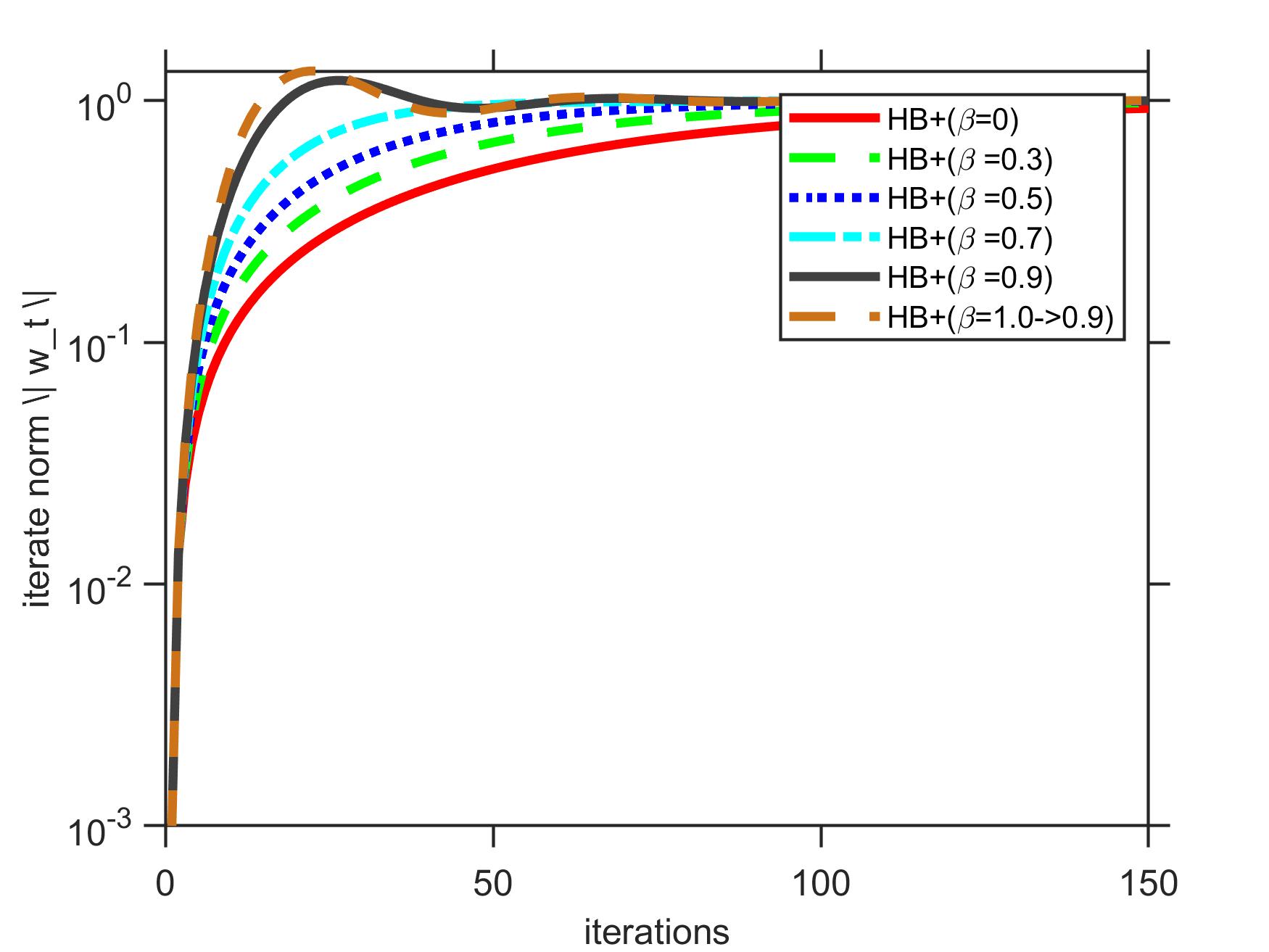}
     }
     \subfloat[\footnotesize $f(w_t) - f(w_*)$ vs. iteration]{%
       \includegraphics[width=0.5\textwidth]{./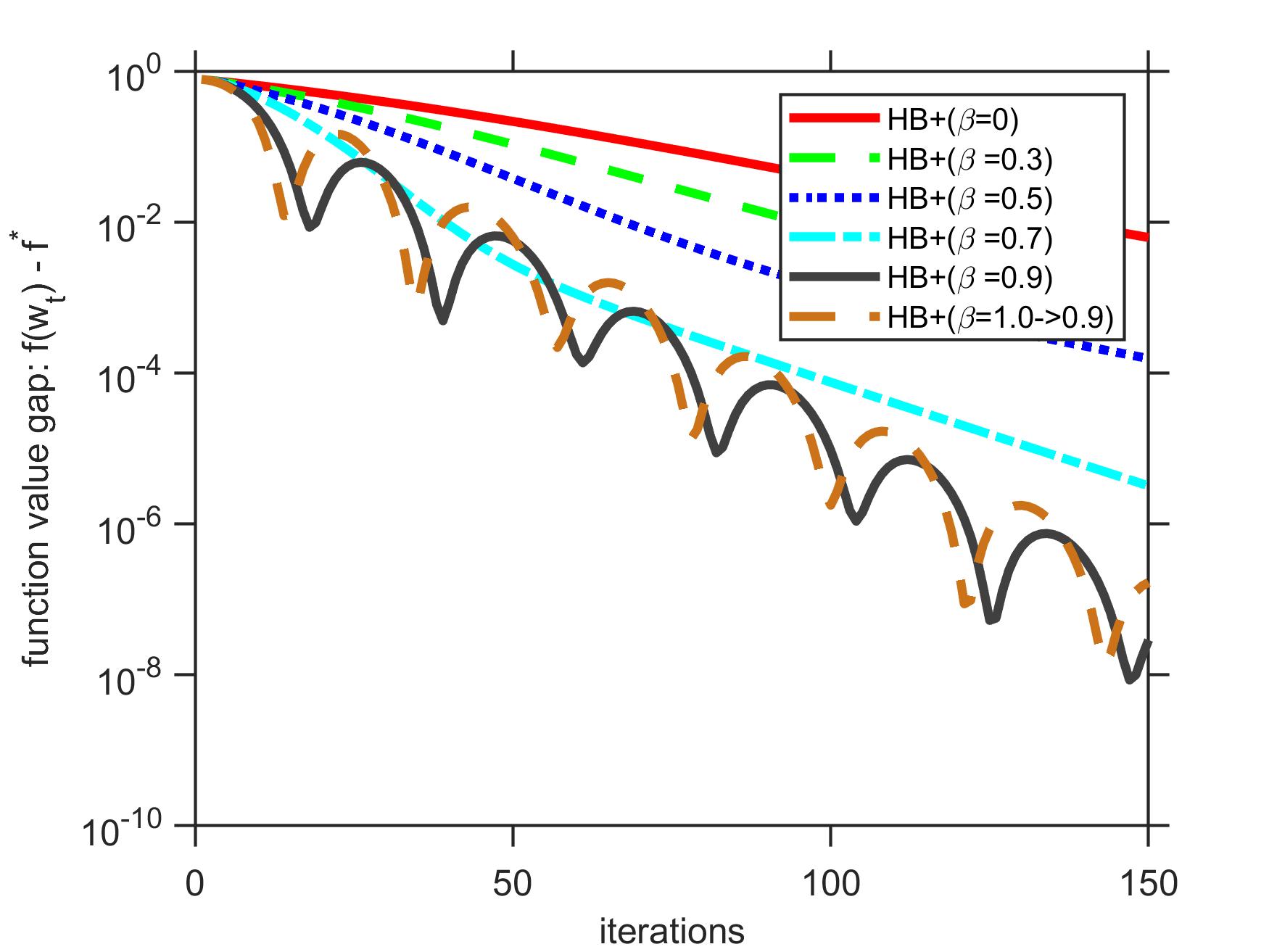}
     }
\caption{ \footnotesize Solving (\ref{obj:reg}) with different values of momentum parameter $\beta$. The empirical result shows the clear advantage of Heavy Ball momentum.
Subfigure (a) shows that larger momentum parameter $\beta$ results in a faster growth rate of $\| w_t \|$, which confirms Lemma~\ref{lem:0d} and shows that it enters the benign region $\mathbbm{B}$ faster with larger $\beta$.
Note that here we have that $\| w_* \| =1$. It suggests that the norm is non-decreasing during the execution of the algorithm for a wide range of $\beta$ except very large $\beta$.
For $\beta=0.9$, the norm starts decreasing only after it arises above $\| w_* \|$. 
Subfigure (b) show that higher $\beta$ also accelerates the linear convergence.
Now let us switch to describe the setup of
the experiment. 
We first set step size $\eta = 0.01$,
dimension $d=4$, $\rho = \| w_* \| = \| A \|_2 = 1$, $\gamma = 0.2$ and $\textbf{gap} = 5 \times 10^{-3}$. Then we set $A = \text{diag}( [-\gamma; -\gamma + \textbf{gap}; a_{33}; a_{44} ]         )$, where the entries $a_{33}$ and $a_{44}$ are sampled uniformly random in
$[-\gamma + \textbf{gap}; \| A \|_2]$.
We draw $\tilde{w} = (A + \rho \| w_* \| I_d)^{-\xi} \theta$, where $\theta \sim \mathcal{N}(0; I_d)$ and $\log_2 \xi$ is uniform on $[-1,1]$. We set $w_* = \frac{ \|w_*\| }{ \| \tilde{w}\| } \tilde{w}$ and $b= - (A + \rho \| w_*\| I_d ) w_*$. The procedure makes $w_*$ the global minimizer of problem instance $(A,b,\rho)$. 
Patterns shown on this figure exhibit for other random problem instances as well.
}    \label{fig:cubic}  
\vspace{-0.05in}
\end{figure*}

We have shown that the momentum helps to enter a benign region that is \emph{one-point strongly convex} to $w_*$ faster.
However, different from the case of phase retrieval, we are not aware of any prior results of showing that the iterate generated by Heavy Ball keeps staying in a region that has the property (\ref{eq:one-svx}) once the iterate is in the region. \citet{CD19} show that for the cubic regularized problem, the iterate generated by vanilla gradient descent
stays in the region
under certain conditions, which leads to a linear convergence rate after it enters the benign region.
Showing that the property holds for HB is not in the scope of this paper, but we empirically observe that Heavy Ball stays in the region.
Subfigure (a) on Figure~\ref{fig:cubic} shows that the norm $\| w_t \|$ is monotone increasing for a wide range of $\beta$, which means that 
the iterate stays in the benign region according to (\ref{eq:benign}). Assuming that the iterate stays in the region, in Appendix~\ref{app:cubic}, we show a locally linear convergence of HB for which up to a certain threshold of $\beta$, the larger the $\beta$ the better the convergence rate.

\section{Discussion and conclusion}

Let us conclude by a discussion about the applicability of the simple dynamics to other non-convex optimization problems. 
Let $A$ be a positive semi-definite matrix. 
Consider applying HB to top eigenvector computations, i.e. solving 
$
\min_{ w \in \reals^d: \| w \| \leq  1} - \frac{1}{2} w^\top A w,$
which is a non-convex optimization problem as it is about maximizing a convex function. The update of HB for this objective is
$w_{t+1} = ( I_d + \eta A) w_t + \beta ( w_t - w_{t-1} ) $.
By projecting the iterate $w_{t+1}$ on an eigenvector $u_i$ of the matrix $A$, we have that
\begin{equation} \label{eq:neweig}
\textstyle \langle w_{t+1}, u_i \rangle = 
( 1 + \eta \lambda_i) \langle w_{t}, u_i \rangle
+ \beta (  \langle w_{t}, u_i \rangle -  \langle w_{t-1}, u_i \rangle ).
\end{equation}
We see that this is in the form of the simple dynamics in Lemma~\ref{lem:agrow} again. So one might be able to show that the larger the momentum parameter $\beta$, the faster the top eigenvector computation. This connection might be used to show that the dynamics of HB momentum \emph{implicitly} helps fast saddle points escape.
In Appendix~\ref{app:more}, we provide further discussion and show some empirical evidence. We conjecture that if a non-convex optimization problem has an underlying structure like the ones in this paper, then HB might be able to exploit the structure and hence makes progress faster than vanilla gradient descent.
\bibliography{Golden}

\begin{thebibliography}{88}
\providecommand{\natexlab}[1]{#1}
\providecommand{\url}[1]{\texttt{#1}}
\expandafter\ifx\csname urlstyle\endcsname\relax
  \providecommand{\doi}[1]{doi: #1}\else
  \providecommand{\doi}{doi: \begingroup \urlstyle{rm}\Url}\fi

\bibitem[Agarwal et~al.(2017)Agarwal, Allen-Zhu, Bullins, Hazan, and
  Ma]{AABHM17}
Naman Agarwal, Zeyuan Allen-Zhu, Brian Bullins, Elad Hazan, and Tengyu Ma.
\newblock Finding approximate local minima faster than gradient descent.
\newblock \emph{STOC}, 2017.

\bibitem[Allen-Zhu \& Li(2018)Allen-Zhu and Li]{AL18}
Zeyuan Allen-Zhu and Yuanzhi Li.
\newblock Neon2: Finding local minima via first-order oracles.
\newblock \emph{NeurIPS}, 2018.

\bibitem[Bai \& Lee(2020)Bai and Lee]{BL20}
Yu~Bai and Jason~D. Lee.
\newblock Beyond linearization: On quadratic and higher-order approximation of
  wide neural networks.
\newblock \emph{ICLR}, 2020.

\bibitem[Bandeira et~al.(2014)Bandeira, Cahill, Mixon, and Nelson]{BCMN14}
Afonso~S. Bandeira, Jameson Cahill, Dustin~G. Mixon, and Aaron~A. Nelson.
\newblock Saving phase: Injectivity and stability for phase retrieval.
\newblock \emph{Applied and Computational Harmonic Analysis}, 2014.

\bibitem[Blum et~al.(2018)Blum, Hopcroft, and Kannan]{BHK18}
Avrim Blum, John Hopcroft, and Ravindran Kannan.
\newblock Foundations of data science.
\newblock \emph{Neural computation}, 2018.

\bibitem[Bubeck(2014)]{B14}
S{\'e}bastien Bubeck.
\newblock Convex optimization: Algorithms and complexity.
\newblock \emph{Foundations and Trends in Machine Learning}, 2014.

\bibitem[Cai et~al.(2016)Cai, Li, and Ma]{CLM16}
T.~Tony Cai, Xiaodong Li, and Zongming Ma.
\newblock Optimal rates of convergence for noisy sparse phase retrieval via
  thresholded wirtinger flow.
\newblock \emph{The Annals of Statistics}, 2016.

\bibitem[Can et~al.(2019)Can, G{\"u}rb{\"u}zbalaban, and Zhu]{CGZ19}
Bugra Can, Mert G{\"u}rb{\"u}zbalaban, and Lingjiong Zhu.
\newblock Accelerated linear convergence of stochastic momentum methods in
  wasserstein distances.
\newblock \emph{ICML}, 2019.

\bibitem[Cand{\'e}s \& Li(2014)Cand{\'e}s and Li]{CL14}
Emmanuel~J. Cand{\'e}s and Xiaodong Li.
\newblock Solving quadratic equations via phaselift when there are about as
  many equations as unknowns.
\newblock \emph{Foundations of Computational Mathematics}, 2014.

\bibitem[Cand{\'e}s et~al.(2013)Cand{\'e}s, Strohmer, and Voroninski]{CSV13}
Emmanuel~J. Cand{\'e}s, Thomas Strohmer, and Vladislav Voroninski.
\newblock Phaselift: Exact and stable signal recovery from magnitude
  measurements via convex programming.
\newblock \emph{Communications on Pure and Applied Mathematics}, 2013.

\bibitem[Cand{\'e}s et~al.(2015)Cand{\'e}s, Li, and Soltanolkotabi]{CLS15}
Emmanuel~J. Cand{\'e}s, Xiaodong Li, and Mahdi Soltanolkotabi.
\newblock Phase retrieval via wirtinger flow: Theory and algorithms.
\newblock \emph{IEEE Transactions on Information Theory}, 2015.

\bibitem[Carmon \& Duchi(2019)Carmon and Duchi]{CD19}
Yair Carmon and John Duchi.
\newblock Gradient descent finds the cubic-regularized nonconvex newton step.
\newblock \emph{SIAM Journal on Optimization}, 2019.

\bibitem[Carmon et~al.(2018)Carmon, Duchi, Hinder, and Sidford]{CDHS18}
Yair Carmon, John Duchi, Oliver Hinder, and Aaron Sidford.
\newblock Accelerated methods for nonconvex optimization.
\newblock \emph{SIAM Journal of Optimization}, 2018.

\bibitem[Chen \& Kolar(2020)Chen and Kolar]{CK20}
You-Lin Chen and Mladen Kolar.
\newblock Understanding accelerated stochastic gradient descent via the growth
  condition.
\newblock \emph{arXiv:2006.06782}, 2020.

\bibitem[Chen \& Cand{\'e}s(2017)Chen and Cand{\'e}s]{CC17}
Yuxin Chen and Emmanuel~J. Cand{\'e}s.
\newblock Solving random quadratic systems of equations is nearly as easy as
  solving linear systems.
\newblock \emph{Communications on Pure and Applied Mathematics}, 2017.

\bibitem[Chen et~al.(2018)Chen, Chi, Fan, Ma, and Yan]{CCFMY18}
Yuxin Chen, Yuejie Chi, Jianqing Fan, Cong Ma, and Yuling Yan.
\newblock Gradient descent with random initialization: Fast global convergence
  for nonconvex phase retrieval.
\newblock \emph{Mathematical Programming}, 2018.

\bibitem[Daneshmand et~al.(2018)Daneshmand, Kohler, Lucchi, and
  Hofmann]{DKLH18}
Hadi Daneshmand, Jonas Kohler, Aurelien Lucchi, and Thomas Hofmann.
\newblock Escaping saddles with stochastic gradients.
\newblock \emph{ICML}, 2018.

\bibitem[Davis et~al.(2018)Davis, Drusvyatskiy, and Paquette]{DDP18}
Damek Davis, Dmitriy Drusvyatskiy, and Courtney Paquette.
\newblock The nonsmooth landscape of phase retrieval.
\newblock \emph{IMA Journal on Numerical Analysis}, 2018.

\bibitem[Diakonikolas \& Jordan(2019)Diakonikolas and Jordan]{DJ19}
Jelena Diakonikolas and Michael~I. Jordan.
\newblock Generalized momentum-based methods: A hamiltonian perspective.
\newblock \emph{arXiv:1906.00436}, 2019.

\bibitem[Du et~al.(2017)Du, Jin, Lee, Jordan, Poczos, and Singh]{DJLJPS18}
Simon~S. Du, Chi Jin, Jason~D. Lee, Michael~I. Jordan, Barnabas Poczos, and
  Aarti Singh.
\newblock Gradient descent can take exponential time to escape saddle points.
\newblock \emph{NIPS}, 2017.

\bibitem[Duchi \& Ruan(2018)Duchi and Ruan]{DR18}
John Duchi and Feng Ruan.
\newblock Solving (most) of a set of quadratic equalities: Composite
  optimization for robust phase retrieval.
\newblock \emph{Information and Inference}, 2018.

\bibitem[Fang et~al.(2019)Fang, Lin, and Zhang]{FLZCOLT19}
Cong Fang, Zhouchen Lin, and Tong Zhang.
\newblock Sharp analysis for nonconvex sgd escaping from saddle points.
\newblock \emph{COLT}, 2019.

\bibitem[Fannjiang \& Strohmer(2020)Fannjiang and Strohmer]{AS20}
Albert Fannjiang and Thomas Strohmer.
\newblock The numerics of phase retrieval.
\newblock \emph{Acta Numerica}, 2020.

\bibitem[Flammarion \& Bach(2015)Flammarion and Bach]{NB15}
Nicolas Flammarion and Francis Bach.
\newblock From averaging to acceleration, there is only a step-size.
\newblock \emph{COLT}, 2015.

\bibitem[Gadat et~al.(2016)Gadat, Panloup, and Saadane]{GPS16}
S{\'e}bastien Gadat, Fabien Panloup, and Sofiane Saadane.
\newblock Stochastic heavy ball.
\newblock \emph{arXiv:1609.04228}, 2016.

\bibitem[Ge et~al.(2015)Ge, Huang, Jin, and Yuan]{GHJY15}
Rong Ge, Furong Huang, Chi Jin, and Yang Yuan.
\newblock Escaping from saddle points --- online stochastic gradient for tensor
  decomposition.
\newblock \emph{COLT}, 2015.

\bibitem[Ghadimi et~al.(2015)Ghadimi, Feyzmahdavian, and Johansson]{GFJ15}
Euhanna Ghadimi, Hamid~Reza Feyzmahdavian, and Mikael Johansson.
\newblock Global convergence of the heavy-ball method for convex optimization.
\newblock \emph{ECC}, 2015.

\bibitem[Gidel et~al.(2019)Gidel, Bach, and Lacoste-Julien]{GBL19}
Gauthier Gidel, Francis Bach, and Simon Lacoste-Julien.
\newblock Implicit regularization of discrete gradient dynamics in linear
  neural networks.
\newblock \emph{NeurIPS}, 2019.

\bibitem[Gitman et~al.(2019)Gitman, Lang, Zhang, and Xiao]{GLZX19}
Igor Gitman, Hunter Lang, Pengchuan Zhang, and Lin Xiao.
\newblock Understanding the role of momentum in stochastic gradient methods.
\newblock \emph{NeurIPS}, 2019.

\bibitem[Goh(2017)]{goh2017why}
Gabriel Goh.
\newblock Why momentum really works.
\newblock \emph{Distill}, 2017.

\bibitem[Golub \& Loan(1996)Golub and Loan]{GL96}
Gene~H. Golub and Charles F.~Van Loan.
\newblock Matrix computations.
\newblock \emph{Johns Hopkins University Press}, 1996.

\bibitem[Gunasekar et~al.(2017)Gunasekar, Woodworth, Bhojanapalli, Neyshabur,
  and Srebro]{GWBNS17}
Suriya Gunasekar, Blake Woodworth, Srinadh Bhojanapalli, Behnam Neyshabur, and
  Nathan Srebro.
\newblock Implicit regularization in matrix factorization.
\newblock \emph{NIPS}, 2017.

\bibitem[Hinder et~al.(2020)Hinder, Sidford, and Sohoni]{HSS20}
Oliver Hinder, Aaron Sidford, and Nimit Sohoni.
\newblock Near-optimal methods for minimizing star-convex functions and beyond.
\newblock \emph{COLT}, 2020.

\bibitem[Hoffer et~al.(2017)Hoffer, Hubara, and Soudry]{HHS17}
Elad Hoffer, Itay Hubara, and Daniel Soudry.
\newblock Train longer, generalize better: closing the generalization gap in
  large batch training of neural networks.
\newblock \emph{NIPS}, 2017.

\bibitem[Jin et~al.(2017)Jin, Ge, Netrapalli, Kakade, and Jordan]{JGNKJ17}
Chi Jin, Rong Ge, Praneeth Netrapalli, Sham~M. Kakade, and Michael~I. Jordan.
\newblock How to escape saddle points efficiently.
\newblock \emph{ICML}, 2017.

\bibitem[Jin et~al.(2018)Jin, Netrapalli, and Jordan]{CNJ18}
Chi Jin, Praneeth Netrapalli, and Michael~I. Jordan.
\newblock Accelerated gradient descent escapes saddle points faster than
  gradient descent.
\newblock \emph{COLT}, 2018.

\bibitem[Jin et~al.(2019)Jin, Netrapalli, Ge, Kakade, and Jordan]{JNGKJ19}
Chi Jin, Praneeth Netrapalli, Rong Ge, Sham~M. Kakade, and Michael~I. Jordan.
\newblock Stochastic gradient descent escapes saddle points efficiently.
\newblock \emph{arXiv:1902.04811}, 2019.

\bibitem[Kidambi et~al.(2018)Kidambi, Netrapalli, Jain, and Kakade]{NKJK18}
Rahul Kidambi, Praneeth Netrapalli, Prateek Jain, and Sham~M. Kakade.
\newblock On the insufficiency of existing momentum schemes for stochastic
  optimization.
\newblock \emph{ICLR}, 2018.

\bibitem[Kleinberg et~al.(2018)Kleinberg, Li, and Yuan]{KLY20}
Robert Kleinberg, Yuanzhi Li, and Yang Yuan.
\newblock An alternative view: When does sgd escape local minima?
\newblock \emph{ICML}, 2018.

\bibitem[Krichene et~al.(2020)Krichene, Caluyay, and Halder]{KCH20}
Walid Krichene, Kenneth~F. Caluyay, and Abhishek Halder.
\newblock Global convergence of second-order dynamics in two-layer neural
  networks.
\newblock \emph{arXiv:2006.07867}, 2020.

\bibitem[Lee et~al.(2019)Lee, Panageas, Piliouras, Simchowitz, Jordan, and
  Recht]{LPPSJR19}
Jason~D. Lee, Ioannis Panageas, Georgios Piliouras, Max Simchowitz, Michael~I.
  Jordan, and Benjamin Recht.
\newblock First-order methods almost always avoid strict saddle-points.
\newblock \emph{Mathematical Programming, Series B}, 2019.

\bibitem[Lessard et~al.(2016)Lessard, Recht, and Packard]{LRP16}
Laurent Lessard, Benjamin Recht, and Andrew Packard.
\newblock Analysis and design of optimization algorithms via integral quadratic
  constraints.
\newblock \emph{SIAM Journal on Optimization}, 2016.

\bibitem[Levy(2016)]{KL16}
Kfir~Y. Levy.
\newblock The power of normalization: Faster evasion of saddle points.
\newblock \emph{arXiv:1611.04831}, 2016.

\bibitem[Li \& Lin(2020)Li and Lin]{LL20}
Huan Li and Zhouchen Lin.
\newblock Provable accelerated gradient method for nonconvex low rank
  optimization.
\newblock \emph{Machine Learning}, 2020.

\bibitem[Li et~al.(2019)Li, Ma, Chen, and Chi]{LMCC19}
Yuanxin Li, Cong Ma, Yuxin Chen, and Yuejie Chi.
\newblock Nonconvex matrix factorization from rank-one measurements.
\newblock \emph{AISTATS}, 2019.

\bibitem[Li \& Yuan(2017)Li and Yuan]{LY17}
Yuanzhi Li and Yang Yuan.
\newblock Convergence analysis of two-layer neural networks with relu
  activation.
\newblock \emph{NeurIPS}, 2017.

\bibitem[Li et~al.(2018)Li, Ma, and Zhang]{LMZ18}
Yuanzhi Li, Tengyu Ma, and Hongyang Zhang.
\newblock Algorithmic regularization in over-parameterized matrix sensing and
  neural networks with quadratic activations.
\newblock \emph{COLT}, 2018.

\bibitem[Liu et~al.(2020)Liu, Gao, and Yin]{LGY20}
Yanli Liu, Yuan Gao, and Wotao Yin.
\newblock An improved analysis of stochastic gradient descent with momentum.
\newblock \emph{arXiv:2007.07989}, 2020.

\bibitem[Loizou \& Richt{\'a}rik(2017)Loizou and Richt{\'a}rik]{LR17}
Nicolas Loizou and Peter Richt{\'a}rik.
\newblock Momentum and stochastic momentum for stochastic gradient, newton,
  proximal point and subspace descent methods.
\newblock \emph{arXiv:1712.09677}, 2017.

\bibitem[Loizou \& Richt{\'a}rik(2018)Loizou and Richt{\'a}rik]{LR18}
Nicolas Loizou and Peter Richt{\'a}rik.
\newblock Accelerated gossip via stochastic heavy ball method.
\newblock \emph{Allerton}, 2018.

\bibitem[Loshchilov \& Hutter(2019)Loshchilov and Hutter]{KH1918}
Ilya Loshchilov and Frank Hutter.
\newblock Decoupled weight decay regularization.
\newblock \emph{ICLR}, 2019.

\bibitem[Ma et~al.(2017)Ma, Wang, Chi, and Chen]{MWCC17}
Cong Ma, Kaizheng Wang, Yuejie Chi, and Yuxin Chen.
\newblock Implicit regularization in nonconvex statistical estimation: Gradient
  descent converges linearly for phase retrieval, matrix completion, and blind
  deconvolution.
\newblock \emph{Foundations of Computational Mathematics}, 2017.

\bibitem[Ma et~al.(2018)Ma, Xu, and Maleki]{MXM18}
Junjie Ma, Ji~Xu, and Arian Maleki.
\newblock Optimization-based amp for phase retrieval: The impact of
  initialization and l2-regularization.
\newblock \emph{IEEE Transactions on Information Theory}, 2018.

\bibitem[Maddison et~al.(2018)Maddison, Paulin, Teh, O’Donoghue, and
  Doucet]{MPTDD18}
Chris~J. Maddison, Daniel Paulin, Yee~Whye Teh, Brendan O’Donoghue, and
  Arnaud Doucet.
\newblock Hamiltonian descent methods.
\newblock \emph{arXiv:1809.05042}, 2018.

\bibitem[Mannellia et~al.(2020)Mannellia, Birolib, Cammarotac, Krzakalab,
  Urbania, and Zdeborov{\'a}]{MBCKUZ20}
Stefano~Sarao Mannellia, Giulio Birolib, Chiara Cammarotac, Florent Krzakalab,
  Pierfrancesco Urbania, and Lenka Zdeborov{\'a}.
\newblock Complex dynamics in simple neural networks: Understanding gradient
  flow in phase retrieval.
\newblock \emph{arXiv:2006.06997}, 2020.

\bibitem[Nesterov \& Polyak(2006)Nesterov and Polyak]{N06}
Yurii Nesterov and B.T. Polyak.
\newblock Cubic regularization of newton method and its global performance.
\newblock \emph{Math. Program., Ser. A 108, 177–205}, 2006.

\bibitem[Netrapalli et~al.(2013)Netrapalli, Jain, and Sanghavi]{NJS13}
Praneeth Netrapalli, Prateek Jain, and Sujay Sanghavi.
\newblock Phase retrieval using alternating minimization.
\newblock \emph{NIPS}, 2013.

\bibitem[Pearlmutter(1994)]{P94}
Barak~A Pearlmutter.
\newblock Fast exact multiplication by the hessian.
\newblock \emph{Neural computation}, 1994.

\bibitem[Polyak(1964)]{P64}
B.T. Polyak.
\newblock Some methods of speeding up the convergence of iteration methods.
\newblock \emph{USSR Computational Mathematics and Mathematical Physics}, 1964.

\bibitem[Qu et~al.(2017)Qu, Zhang, Eldar, and Wright]{QZEW17}
Qing Qu, Yuqian Zhang, Yonina~C. Eldar, and John Wright.
\newblock Convolutional phase retrieval via gradient descent.
\newblock \emph{NIPS}, 2017.

\bibitem[Reddi et~al.(2018)Reddi, Zaheer, Sra, Poczos, Bach, Salakhutdinov, and
  Smola]{RZSPBSS18}
Sashank Reddi, Manzil Zaheer, Suvrit Sra, Barnabas Poczos, Francis Bach, Ruslan
  Salakhutdinov, and Alex Smola.
\newblock A generic approach for escaping saddle points.
\newblock \emph{AISTATS}, 2018.

\bibitem[Safran et~al.(2020)Safran, Yehudai, and Shamir]{SYS20}
Itay Safran, Gilad Yehudai, and Ohad Shamir.
\newblock The effects of mild over-parameterization on the optimization
  landscape of shallow relu neural networks.
\newblock \emph{arXiv:2006.01005}, 2020.

\bibitem[Scieur \& Pedregosa(2020)Scieur and Pedregosa]{SP20}
Damien Scieur and Fabian Pedregosa.
\newblock Universal average-case optimality of polyak momentum.
\newblock \emph{ICML}, 2020.

\bibitem[Sebbouh et~al.(2020)Sebbouh, Gower, and Defazio]{SGD20}
Othmane Sebbouh, Robert~M. Gower, and Aaron Defazio.
\newblock On the convergence of the stochastic heavy ball method.
\newblock \emph{arXiv:2006.07867}, 2020.

\bibitem[Shechtman et~al.(2015)Shechtman, Eldar, Cohen, Chapman, Miao, and
  Segev]{SECCMS15}
Yoav Shechtman, Yonina~C. Eldar, Oren Cohen, Henry~Nicholas Chapman, Jianwei
  Miao, and Mordechai Segev.
\newblock Phase retrieval with application to optical imaging: a contemporary
  overview.
\newblock \emph{IEEE signal processing magazine}, 2015.

\bibitem[Soltanolkotabi(2014)]{S14}
Mahdi Soltanolkotabi.
\newblock Algorithms and theory for clustering and nonconvex quadratic
  programming.
\newblock \emph{Stanford University Ph. D. Dissertation}, 2014.

\bibitem[Staib et~al.(2019)Staib, Reddi, Kale, Kumar, and Sra]{SRRKKS19}
Matthew Staib, Sashank~J. Reddi, Satyen Kale, Sanjiv Kumar, and Suvrit Sra.
\newblock Escaping saddle points with adaptive gradient methods.
\newblock \emph{ICML}, 2019.

\bibitem[Sun et~al.(2015)Sun, Qu, and Wright]{SQW15}
Ju~Sun, Qing Qu, and John Wright.
\newblock When are nonconvex problems not scary?
\newblock \emph{NIPS Workshop on Non-convex Optimization for Machine Learning:
  Theory and Practice}, 2015.

\bibitem[Sun et~al.(2016)Sun, Qu, and Wright]{SQW16}
Ju~Sun, Qing Qu, and John Wright.
\newblock A geometrical analysis of phase retrieval.
\newblock \emph{International Symposium on Information Theory}, 2016.

\bibitem[Sun et~al.(2019)Sun, Yin, Li, Huang, Guan, and Jiang]{SYLHGJ19}
Tao Sun, Penghang Yin, Dongsheng Li, Chun Huang, Lei Guan, and Hao Jiang.
\newblock Non-ergodic convergence analysis of heavy-ball algorithms.
\newblock \emph{AAAI}, 2019.

\bibitem[Sutskever et~al.(2013)Sutskever, Martens, Dahl, and Hinton]{SMDH13}
Ilya Sutskever, James Martens, George Dahl, and Geoffrey Hinton.
\newblock On the importance of initialization and momentum in deep learning.
\newblock \emph{ICML}, 2013.

\bibitem[Tan \& Vershynin(2019)Tan and Vershynin]{TV19}
Yan~Shuo Tan and Roman Vershynin.
\newblock Online stochastic gradient descent with arbitrary initialization
  solves non-smooth, non-convex phase retrieval.
\newblock \emph{arXiv:1910.12837}, 2019.

\bibitem[Tu et~al.(2016)Tu, Boczar, Simchowitz, Soltanolkotabi, and
  Recht]{TBSSR16}
Stephen Tu, Ross Boczar, Max Simchowitz, Mahdi Soltanolkotabi, and Benjamin
  Recht.
\newblock Low-rank solutions of linear matrix equations via procrustes flow.
\newblock \emph{ICML}, 2016.

\bibitem[Wang et~al.(2017{\natexlab{a}})Wang, Giannakis, and Eldar]{WGE17}
Gang Wang, Georgios~B. Giannakis, and Yonina~C. Eldar.
\newblock Solving systems of random quadratic equations via truncated amplitude
  flow.
\newblock \emph{IEEE Transactions on Information Theory}, 2017{\natexlab{a}}.

\bibitem[Wang et~al.(2017{\natexlab{b}})Wang, Giannakis, Saad, and
  Eldar]{WGSC17}
Gang Wang, Georgios~B. Giannakis, Yousef Saad, and Yonina~C. Eldar.
\newblock Solving most systems of random quadratic equations.
\newblock \emph{NIPS}, 2017{\natexlab{b}}.

\bibitem[Wang et~al.(2020)Wang, Lin, and Abernethy]{WCA20}
Jun-Kun Wang, Chi-Heng Lin, and Jacob Abernethy.
\newblock Escaping saddle points faster with stochastic momentum.
\newblock \emph{ICLR}, 2020.

\bibitem[White et~al.(2016)White, Sanghavi, and Ward]{WSW16}
Chris~D. White, Sujay Sanghavi, and Rachel Ward.
\newblock The local convexity of solving systems of quadratic equations.
\newblock \emph{Results in Mathematics}, 2016.

\bibitem[Wilson et~al.(2017)Wilson, Roelofs, Stern, Srebro, , and
  Recht.]{WRSSR17}
Ashia~C Wilson, Rebecca Roelofs, Mitchell Stern, Nathan Srebro, , and Benjamin
  Recht.
\newblock The marginal value of adaptive gradient methods in machine learning.
\newblock \emph{NIPS}, 2017.

\bibitem[Xiong et~al.(2018)Xiong, Chi, Hu, and Zhang]{XCHZ18}
Huaqing Xiong, Yuejie Chi, Bin Hu, and Wei Zhang.
\newblock Convergence analysis of accelerated first-order methods for phase
  retrieval.
\newblock \emph{MTNS}, 2018.

\bibitem[Xiong et~al.(2020)Xiong, Chi, Hu, and Zhang]{XCHZ20}
Huaqing Xiong, Yuejie Chi, Bin Hu, and Wei Zhang.
\newblock Analytical convergence regions of accelerated gradient descent in
  nonconvex optimization under regularity condition.
\newblock \emph{Automatica}, 2020.

\bibitem[Xu et~al.(2018)Xu, Rong, and Yang]{XRY18}
Yi~Xu, Jing Rong, and Tianbao Yang.
\newblock First-order stochastic algorithms for escaping from saddle points in
  almost linear time.
\newblock \emph{NeurIPS}, 2018.

\bibitem[Yang et~al.(2018{\natexlab{a}})Yang, Lin, and Li]{YLL18}
Tianbao Yang, Qihang Lin, and Zhe Li.
\newblock Unified convergence analysis of stochastic momentum methods for
  convex and non-convex optimization.
\newblock \emph{IJCAI}, 2018{\natexlab{a}}.

\bibitem[Yang et~al.(2018{\natexlab{b}})Yang, Yang, Fang, Zhao, Wang, and
  Neykov]{YYFZWN18}
Zhuoran Yang, Lin Yang, Ethan Fang, Tuo Zhao, Zhaoran Wang, and Matey Neykov.
\newblock Misspecified nonconvex statitical optimization for sparse phase
  retrival.
\newblock \emph{Mathematical Programming}, 2018{\natexlab{b}}.

\bibitem[You et~al.(2020)You, Zhu, Qu, and Ma]{YZQM20}
Chong You, Zhihui Zhu, Qing Qu, and Yi~Ma.
\newblock Robust recovery via implicit bias of discrepant learning rates for
  double over-parameterization.
\newblock \emph{arXiv:2006.08857}, 2020.

\bibitem[Zhang et~al.(2017{\natexlab{a}})Zhang, Chi, and Liang]{ZCL17}
Huishuai Zhang, Yuejie Chi, and Yingbin Liang.
\newblock Provable non-convex phase retrieval with outliers: Median truncated
  wirtinger flow.
\newblock \emph{ICML}, 2017{\natexlab{a}}.

\bibitem[Zhang et~al.(2017{\natexlab{b}})Zhang, Zhou, Liang, and Chi]{ZZLC17}
Huishuai Zhang, Yi~Zhou, Yingbin Liang, and Yuejie Chi.
\newblock A nonconvex approach for phase retrieval: Reshaped wirtinger flow and
  incremental algorithms.
\newblock \emph{JMLR}, 2017{\natexlab{b}}.

\bibitem[Zheng \& Lafferty(2015)Zheng and Lafferty]{ZL15}
Qinqing Zheng and John Lafferty.
\newblock A convergent gradient descent algorithm for rank minimization and
  semidefinite programming from random linear measurements.
\newblock \emph{NIPS}, 2015.

\bibitem[Zhou et~al.(2016)Zhou, Zhang, and Liang]{ZZL16}
Yi~Zhou, Huishuai Zhang, and Yingbin Liang.
\newblock Geometrical properties and accelerated gradient solvers of non-convex
  phase retrieval.
\newblock \emph{IEEE Allerton Conference on Communication, Control, and
  Computing}, 2016.

\end{thebibliography}
\bibliographystyle{iclr2021_conference}

\appendix
\section{Related works} \label{app:related_works}
\subsection{Heavy Ball}

We first note that Algorithm~\ref{alg:HB} and Algorithm~\ref{alg:HB2}
generate the same sequence of the iterates $\{ w_t \}$ given the same initialization $w_0$, the same step size $\eta$, and the same momentum parameter $\beta$.

\cite{LRP16} analyze the Heavy Ball algorithm 
for strongly convex quadratic functions by using tools from dynamical systems
and prove its accelerated linear rate.
\cite{GFJ15} also show an $O(1/T)$ ergodic convergence rate for general smooth convex problems,
while \cite{SYLHGJ19} show the last iterate convergence on some classes of convex problems. 
Nevertheless, the convergence rate of both results are not better than gradient descent.
\cite{MPTDD18} and \cite{DJ19} study a class of momentum methods which includes Heavy Ball by a continuous time analysis.
\cite{CGZ19} prove an accelerated linear convergence to a stationary distribution for strongly convex quadratic functions under Wasserstein distance. 
\cite{GLZX19} analyze the stationary distribution of the iterate of a class of momentum methods that includes SGD with momentum for a quadratic function with noise,
as well as studying the condition of its asymptotic convergence.
\cite{LR17} show linear convergence results of the Heavy Ball method for a broad class of least-squares problems.
\cite{LR18} study solving a average consensus problem by HB,
which could be viewed as a strongly convex quadratic function. 
\cite{SGD20} show a convergence result of stochastic HB under a smooth convex setting and show that it can outperform SGD under the assumption that the data is interpolated.
\cite{CK20} study stochastic HB under a growth condition.
\cite{YLL18} show an $O(1/\sqrt{T})$ rate of convergence in expected gradient norm for smooth non-convex problems, but the rate is not better than SGD. \cite{LGY20} provide an improved analysis of SGD with momentum. They show that SGD with momentum can converge as fast as SGD for smooth nonconvex settings in terms of the expected gradient norm. 
\cite{KCH20} show that in the continuous time regime, i.e. infinitesimal step size is used, stochastic HB converges to a stationary solution asymptotically for training a one-hidden-layer network with infinite number of neurons, but the result does not show a clear advantage compared to standard SGD.  
Lastly,
\cite{WCA20} show that the Heavy Ball's momentum can help to escape saddle points faster and find a second order stationary point faster for smooth non-convex optimization. However, while their work focused on the stochastic setting, their main result required two assumptions on the statistical properties of the sequence of observed gradients; it is not clear whether these would hold in general. Specifically, they make an assumption called APAG (\textit{\textbf{A}lmost \textbf{P}ositively \textbf{A}ligned with \textbf{G}radient}), i.e.
$\E_t[ \langle \nabla f(w_t), m_t - g_t \rangle ] \geq - \frac{1}{2} \| \nabla f(w_t) \|^2,$
where $\nabla f(w_t)$ is the deterministic gradient, $g_t$ is the stochastic gradient, and $m_t$ is the stochastic momentum. 
They also make an assumption called APCG (\textit{\textbf{A}lmost \textbf{P}ositively \textbf{C}orrelated with \textbf{G}radient}),
i.e. $\E_t[ \langle \nabla f(w_t), M_t m_t \rangle ] \geq - c' \eta \sigma_{\max}(M_t) \| \nabla f(w_t) \|^2$, where $M_t$ is a PSD matrix that is related to a local optimization landscape.

We also note that there are negative results regarding Heavy Ball 
(see e.g. \cite{LRP16,GFJ15,NKJK18}).

\subsection{Matrix sensing}

Problem (\ref{obj}) can also be viewed as a special case of matrix factorization or matrix sensing.
To see this, one can rewrite (\ref{obj}) as 
$\min_{U \in \reals^{d \times 1} }
 \frac{1}{4 n} \sum_{i=1}^n \big( y_i - \langle A_i, U U^\top  \rangle  \big)^2$,  where $A_i = x_i x_i^\top \in \reals^{ d \times d }$ and the dot product represents the matrix trace. \cite{LMZ18} show that when the matrices $\{A_i\}$ satisfy restricted isometry property (RIP) and $U\in \reals^{d \times d}$, gradient descent can converge to a global solution with a close-to-zero random initialization. Yet, if the matrix $A_i$ is in the form of a rank-one matrix product, the matrix might not satisfy RIP and a modification of the algorithm might be required (\cite{LMZ18}). 
 \cite{LMCC19}, a different group of authors, 
 show that with a carefully-designed initialization (e.g. spectral initialization),
gradient descent will be in a benign region in the beginning and will converge to a global optimal point. In our work, we do not assume that $x_i x_i^\top$ satisfies RIP neither do we assume a carefully-designed initialization like spectral initialization is available.  
\cite{LL20} show a \emph{local} convergence to an optimal solution by Nesterov's momentum for a matrix factorization problem;
the initial point needs to be in the neighborhood of an optimal solution.
In contrast, we study Polyak's momentum and are able to establish \emph{global} convergence to an optimal solution with a simple random initialization. \cite{GWBNS17,GBL19,YZQM20} study implicit regularization of gradient descent for the matrix sensing/matrix factorization problem. The directions are different from ours.

\section{Population gradient} \label{app:pop}

\begin{lemma}
Assume that $\| w_* \| = 1$.
\[
\E_{x \sim N(0, I_d) } \big[ (x^\top w)^3 x - (x^\top w_*)^2 (x^\top w) x                \big] = \big( 3 \| w \|^2 -1   \big) w - 2 ( w_*^\top w ) w_*.
\]
\end{lemma}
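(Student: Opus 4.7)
The plan is to evaluate the two vector expectations on the left-hand side separately and then subtract. Writing them component-wise, for each coordinate $k \in [d]$ we need
\[
\E\big[(x^\top w)^3 x_k\big] \quad \text{and} \quad \E\big[(x^\top w_*)^2 (x^\top w) x_k\big].
\]
Both are expectations of polynomials of degree four in the entries of a standard Gaussian vector, so there are (at least) two clean routes: Isserlis' (Wick's) theorem on fourth moments, or Stein's identity $\E[x_k h(x)] = \E[\partial_k h(x)]$. I would use Stein's identity since it bypasses the bookkeeping of pairings entirely.

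First I would apply Stein's identity with $h(x) = (x^\top w)^3$. The partial derivative is $\partial_k h(x) = 3(x^\top w)^2 w_k$, so
\[
\E\big[(x^\top w)^3 x_k\big] = 3 w_k \, \E[(x^\top w)^2] = 3\|w\|^2 w_k,
\]
using $\E[(x^\top w)^2] = \|w\|^2$ since $x \sim N(0, I_d)$. In vector form this is $3\|w\|^2 w$. Next I would apply Stein's identity to $h(x) = (x^\top w_*)^2 (x^\top w)$, for which $\partial_k h(x) = 2(x^\top w_*)(w_*)_k (x^\top w) + (x^\top w_*)^2 w_k$. Taking the expectation and using $\E[(x^\top w_*)(x^\top w)] = w_*^\top w$ and $\E[(x^\top w_*)^2] = \|w_*\|^2 = 1$ gives
\[
\E\big[(x^\top w_*)^2 (x^\top w) x_k\big] = 2 (w_*^\top w)(w_*)_k + w_k,
\]
or in vector form $2(w_*^\top w) w_* + w$.

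Finally I would subtract the two vector expressions:
\[
3\|w\|^2 w - \big(w + 2(w_*^\top w) w_*\big) = (3\|w\|^2 - 1) w - 2(w_*^\top w) w_*,
\]
which is exactly the desired identity. There is really no hard step here: the only thing to be careful about is invoking Stein correctly (which requires the mild integrability/smoothness of the cubic/quartic polynomials, trivially satisfied against a Gaussian) and using the normalization $\|w_*\| = 1$ at the right moment so that the coefficient of $w$ in the second expectation is $1$ rather than $\|w_*\|^2$. As a sanity check, one could alternatively derive both expectations from Isserlis' formula $\E[x_i x_j x_l x_k] = \delta_{ij}\delta_{lk} + \delta_{il}\delta_{jk} + \delta_{ik}\delta_{jl}$ and obtain the same values, which confirms the answer.
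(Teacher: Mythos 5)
Your proof is correct. You evaluate both vector expectations directly via Stein's identity (Gaussian integration by parts), $\E[x_k h(x)] = \E[\partial_k h(x)]$, applied to $h(x)=(x^\top w)^3$ and $h(x)=(x^\top w_*)^2(x^\top w)$ respectively, which immediately gives $3\|w\|^2 w$ and $2(w_*^\top w)w_* + \|w_*\|^2 w$; using $\|w_*\|=1$ and subtracting yields the claim. The paper takes a different (though related) route: it defines the scalar functions $h(w)=\E[(x^\top w)^4]$ and $g(w)=\E[(x^\top w_*)^2(x^\top w)^2]$, observes that the desired vector expectations are $\tfrac14\nabla_w h(w)$ and $\tfrac12\nabla_w g(w)$, and then computes $h(w)=3\|w\|^4$ and $g(w)=2(w_*^\top w)^2+\|w\|^2\|w_*\|^2$ explicitly — the latter by decomposing $x^\top w_*$ as a linear combination of $x^\top w$ and an independent standard normal. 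Both approaches are short, but yours is leaner: Stein's lemma absorbs all fourth-moment bookkeeping and avoids the Gaussian conditioning/decomposition step entirely, at the (mild) cost of invoking Stein's identity as a black box. The paper's route instead only needs elementary low-order Gaussian moments, trading a slightly longer computation for a more elementary toolkit. In both cases the quantities being differentiated are smooth and polynomially bounded against a Gaussian, so the interchange of gradient and expectation (paper) and the application of Stein's identity (yours) are justified without issue.
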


\begin{proof}
In the following, denote
\[
\begin{aligned}
Q   & := \E[ (x^\top w)^3 x ]
\\R & := \E[ (x^\top w_*)^2 (x^\top w) x  ].
\end{aligned}
\]
Now define $h(w) := \E[ (x^\top w)^4  ]$.
We have that $h(w) = 3 \| w \|^4$ as $x^\top w \sim N(0, \| w \|^2)$ (i.e. the fourth order moment).
Then,
\[
\nabla h(w) = 4 \E[ (x^\top w)^3 x ] = 4 Q.
\]
So 
$Q = \frac{1}{4} \nabla h(w) = 3 \| w \|^2 w.$

For the other term, define $g(w):= \E[ (x^\top w_*)^2 (x^\top w)^2 ]$.
Given that
\begin{equation}
\begin{bmatrix} x^\top w_* \\ x^\top w \end{bmatrix}
\sim N \big(   \begin{bmatrix} 0 \\ 0 \end{bmatrix} , 
\begin{bmatrix} \| w_* \|^2 & w_*^\top w \\ w_*^\top w & \| w \|^2  \end{bmatrix} \big),
\end{equation}
we can write
\[
x^\top w_* \overset{d}{\sim} \theta_1  x^\top w + \theta_2 z,  
\]
where $z \sim N(0,1)$, $\theta_1 := \frac{w_*^\top w}{ \| w \|^2 }$, and 
$\theta_2^2 := \| w_* \|^2 - \frac{ (w_*^\top w)^2  }{ \| w \|^2 }$.
Then,
\[
\begin{aligned}
g(w) & = \theta_1^2 \E[  (x^\top w)^4 ]
+ 2 \theta_1 \theta_2 \E[ (x^\top w)^3 z]
+ \theta_2^2 \E[ z^2 (x^\top w)^2 ]
\\ & 
= 3(w_*^\top w )^2 + \theta_2^2 \E[ z^2 ]  \E[ (x^\top w )^2 ]
\\ & 
= 2(w_*^\top w )^2 + \| w \|^2 \| w_* \|^2.
\end{aligned}
\]
So we have that
\[
\nabla g(w) = 2 \E[ (x^\top w_*)^2 (x^\top w) x ] = 2 R,
\]
which in turn implies that
\[
R = \frac{1}{2} \nabla g(w) = \frac{1}{2} \big(  4 (w_*^\top w) w_* + 2 \| w_* \| w  \big)
= 2 (w_*^\top w ) w_* + \| w_* \|^2 w.
\]
Combining the above results, we have that
\begin{equation}
\nabla F(w) = Q - R = 3 \|w \|^2 w - \| w_* \|^2 w - 2 (w_*^\top w) w_*
= \big( 3 \| w \|^2  -1 \big) w - 2 (w_*^\top w) w_*.
\end{equation}
\end{proof}

\section{Simple lemmas} \label{app:supp}

\noindent
\textbf{Lemma~\ref{lem:agrow}:}
\textit{
For a positive number $\theta$ and the momentum parameter $\beta \in [0,1]$, if a non-negative sequence $\{ a_{t} \}$ satisfies $a_0 \geq a_{-1} > 0$ and that for all $t \leq T$,
\begin{equation} 
\textstyle a_{t+1} \geq ( 1 + \theta) a_t + \beta ( a_t - a_{t-1}),
\end{equation}
then $\{a_t\}$ satisfies 
\begin{equation} 
\textstyle a_{t+1} \geq \left(1 + \big(1+  \highlight[myyellow]{ \frac{\beta }{1+\theta} } \big) \theta \right) a_t,
\end{equation}
for every $t=1, \ldots, T+1$.
Similarly, if a non-positive sequence $\{ a_{t} \}$ satisfies $a_0 \leq a_{-1} < 0$ and that for all $t \leq T$,
$a_{t+1} \leq ( 1 + \theta) a_t + \beta ( a_t - a_{t-1})$,
then $\{a_t\}$ satisfies 
\begin{equation} 
\textstyle
a_{t+1} \leq \left( 1 + \big(1+  \highlight[myyellow]{ \frac{\beta }{1+\theta} } \big)\theta \right) a_t,
\end{equation}
for every $t=1, \ldots, T+1$.
}

\begin{proof}
Let us first prove the first part of the statement.
In the following, we denote $c:= \frac{1}{1+\theta}$.
For the base case $t=1$,
we have that
\begin{equation}
\begin{split}
a_2 \geq (1+\theta) a_1 + \beta (a_1 - a_0 ) 
\geq (1+\theta) a_1 + \beta c \theta a_1,
\end{split}
\end{equation}
where the last inequality holds because
$\textstyle
a_1 - a_0 \geq \theta  c a_1
\iff a_1 \geq \frac{1}{1 - \theta c} a_0,
$
as
$a_1 \geq (1 + \theta) a_0 = \frac{1}{1 - \theta c } a_0$.
Now suppose that it holds at iteration $t$,
$a_{t+1} \geq (1 + (1+\beta c) \theta ) a_t.$
Consider iteration $t+1$, we have that
\begin{equation}
\begin{split}
\textstyle 
a_{t+2} \geq (1 + \theta ) a_{t+1} + \beta ( a_{t+1} - a_t) 
\geq (1 + \theta ) a_{t+1} + \theta c \beta a_{t+1},
\end{split}
\end{equation}
where the last inequality holds because
$\textstyle
 a_{t+1} - a_t \geq \theta  c a_{t+1}
\iff a_{t+1} \geq \frac{1}{1 - \theta c} a_t
$
as
$a_{t+1} \geq (1 + (1+\beta c) \theta ) a_t \geq \frac{1}{1 - \theta c} a_t,$
given the assumption at $t$ and that $c:=\frac{1}{1+\theta}$.

The second part of the statement can be proved similarly.
\end{proof}

\begin{lemma} \label{lem:bdecay}
For a positive number $\theta < 1$ and the momentum parameter $\beta \in [0,1]$ that satisfy $(1 + \frac{\beta}{1-\theta}) \theta < 1$,
if a non-negative sequence $\{ b_{t} \}$ satisfies $b_0 \leq b_{-1}$ and that for all $t \leq T$,
\begin{equation}
\textstyle
b_{t+1} \leq ( 1 - \theta) b_t + \beta ( b_t - b_{t-1}),
\end{equation}
then $\{b_t\}$ satisfies
\begin{equation}
\textstyle
b_{t+1} \leq \left(1 - \big(1+  \highlight[myyellow]{ \frac{\beta}{1-\theta} }  \big) \theta \right) b_t,
\end{equation}
for every $t=1, \ldots, T+1$. Similarly, if a non-positive sequence $\{ b_{t} \}$ satisfies $b_0 \geq b_{-1}$ and that for all $t \leq T$,
$\textstyle
b_{t+1} \geq ( 1 - \theta) b_t + \beta ( b_t - b_{t-1}),$
then $\{b_t\}$ satisfies
\begin{equation}
\textstyle
b_{t+1} \geq \left(1 - \big(1+ \highlight[myyellow]{ \frac{\beta}{1-\theta} }  \big) \theta  \right) b_t,
\end{equation}
for every $t=1, \ldots, T+1$.
\end{lemma}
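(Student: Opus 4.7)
The plan is to mirror the inductive proof of Lemma~\ref{lem:agrow}, replacing the amplification factor $c := \frac{1}{1+\theta}$ by the contraction factor $c := \frac{1}{1-\theta}$ and flipping the direction of every monotonicity statement. I will focus on the non-negative case first; the non-positive case then follows by applying the first statement to the sequence $\{-b_t\}$, which satisfies the corresponding recurrence with the inequality direction reversed.

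The key observation is the algebraic identity $1 + c\theta = \frac{1}{1-\theta}$. It implies that the target conclusion $b_{t+1} \leq \left(1 - (1+\beta c)\theta\right) b_t$, once substituted back into the recurrence, is equivalent to a one-step contraction of the form $b_{t-1} \geq (1+c\theta)\, b_t$, i.e., $b_t \leq (1-\theta)\, b_{t-1}$. Indeed, if this contraction holds at step $t$, then $\beta(b_t - b_{t-1}) \leq -\beta c \theta\, b_t$, and plugging this into the given recurrence $b_{t+1} \leq (1-\theta) b_t + \beta(b_t - b_{t-1})$ produces exactly the improved decay bound at step $t+1$. Thus the whole argument reduces to propagating the weaker contraction $b_{t+1} \leq (1-\theta)\, b_t$ along an induction on $t$.

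For the base case $t=1$, the recurrence combined with the hypothesis $b_0 \leq b_{-1}$ immediately yields $b_1 \leq (1-\theta)\, b_0$, which is exactly the contraction required to bound $b_2$ using the argument above. For the inductive step, assuming $b_{t+1} \leq (1 - (1+\beta c)\theta)\, b_t$, one has a fortiori $b_{t+1} \leq (1-\theta)\, b_t$ since $\beta c \theta \geq 0$, and this is precisely the contraction needed to bound $b_{t+2}$. The standing condition $(1 + \frac{\beta}{1-\theta})\theta < 1$ enters only to ensure the asserted decay factor $1 - (1+\beta c)\theta$ is strictly positive, so that the bound is meaningful and consistent with the sequence remaining non-negative throughout. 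I do not anticipate any genuine obstacle; the argument is essentially Lemma~\ref{lem:agrow} read through a sign-reversing mirror, and the only minor subtlety is the identity $1 + c\theta = \frac{1}{1-\theta}$ that makes the base case slot in cleanly from the hypothesis $b_0 \leq b_{-1}$ without requiring any additional separation between $b_0$ and $b_{-1}$.
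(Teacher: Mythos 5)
Your proposal is correct and follows essentially the same route as the paper's proof of Lemma~\ref{lem:bdecay}: both arguments induct on the improved decay bound, using the identity $1 + \frac{\theta}{1-\theta} = \frac{1}{1-\theta}$ to show that the one-step contraction $b_{t+1} \leq (1-\theta) b_t$ (which follows from the hypothesis $b_0 \leq b_{-1}$ at the base case, and from the improved bound plus $\beta c \theta \geq 0$ at the inductive step) forces $\beta(b_{t+1} - b_t) \leq -\beta c \theta\, b_{t+1}$ and hence the improved bound at the next step. The minor reorganization you describe does not change the substance of the argument.
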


\begin{proof}
Let us first prove the first part of the statement.
In the following, we denote $c:= \frac{1}{1-\theta}$.
For the base case $t=1$,
we have that
\begin{equation}
\begin{split}
& b_2 \leq (1-\theta) b_1 + \beta (b_1 - b_0 ) 
\leq (1-\theta) b_1 - \beta c \theta b_1,
\end{split}
\end{equation}
where the last inequality holds because
\begin{equation}
\begin{split}
& b_1 - b_0 \leq - \theta  c b_1
\iff b_1 \leq \frac{1}{1 + \theta c} b_0,
\end{split}
\end{equation}
as $b_1 \leq (1 - \theta) b_0 = \frac{1}{1 + \theta c} b_0$ due to that $ c = \frac{1}{1-\theta} $.
Now suppose that it holds at iteration $t$.
\[
b_{t+1} \leq (1 - (1+\beta c) \theta ) b_t.
\]
Consider iteration $t+1$, we have that
\begin{equation}
\begin{split}
& b_{t+2} \leq (1 - \theta ) b_{t+1} + \beta ( b_{t+1} - b_t) 
\leq (1 - \theta ) b_{t+1} - \theta c \beta b_{t+1},
\end{split}
\end{equation}
where the last inequality holds because
\begin{equation}
\begin{split}
& b_{t+1} - b_t \leq - \theta  c b_{t+1}
\iff b_{t+1} \leq \frac{1}{1 +\theta c} b_t
\end{split}
\end{equation}
as
$b_{t+1} \leq (1 - (1+\beta c) \theta ) b_t
\leq \frac{1}{1 +\theta c} b_t$, due to the induction at $t$ and that $c = \frac{1}{1-\theta}$.

The second part of the statement can be proved similarly.
\end{proof}

\section{Proof of Theorem~\ref{thm}} \label{app:newlemma}

Recall the recursive system (\ref{eq:recur-fin}).
\begin{equation} 
\begin{split}
 w_{t+1}^\parallel & = \big( 1 + 3 \eta ( 1 - \| w_t \|^2 ) + \eta \xi_t \big)  w_{t}^\parallel
 + \beta (  w_{t}^\parallel -  w_{t-1}^\parallel    )
\\
 w_{t+1}^\perp[j] & = \big( 1 +  \eta ( 1 - 3 \| w_t \|^2 ) + \eta \rho_{t,j}  \big)  w_{t}^\perp[j]
 + \beta (  w_{t}^\perp[j] -  w_{t-1}^\perp[j]    ),
\end{split}
\end{equation}
where $\{ \xi_t \}$ and $\{ \rho_{t,j} \}$ are the perturbation terms.
In the analysis, we will show that the sign of $w_t^\parallel$ never changes during the execution of the algorithm.
Our analysis divides the iterations of the first stage to several sub-stages.
We assume that the size of the initial point $w_0$ is small so that it begins in Stage 1.1.  
\begin{itemize}
\item (Stage 1.1) 
considers the duration when $\| w_t \| \leq \sqrt{ \frac{2+c_n}{3}}$,
which lasts for at most $T_0$ iterations, where $T_0$ is defined in Lemma~\ref{lem:sta1.1}. A by-product of our analysis shows that $| w_{T_0+1}^\parallel| \geq \sqrt{ \frac{2+c_n}{3}}$. 
\item (Stage 1.2) considers the duration when the perpendicular component $\|w_t^\perp\|$ is decreasing and eventually falls below $\zeta/2$, which consists of all iterations $T_0 \leq t \leq T_0 + T_b$,
where $T_b$ is defined in Lemma~\ref{lem:sta1.2}.
\item (Stage 1.3) considers the duration when $|w_t^\parallel|$ is converging to the interval $[1-\frac{\zeta}{2},1+\frac{\zeta}{2}]$, if it was outside the interval,
which consists of all iterations $T_0 + T_b \leq t \leq T_0 + T_b + T_a$,
where $T_a$ is defined in Lemma~\ref{lem:sta1.3}.     
\end{itemize}

In stage 1.1, both the signal component $|w_t^\parallel|$ and $\| w_t^\perp \|$ grow in the beginning (see also Figure~\ref{fig:ab}).
The signal component grows exponentially and consequently it only takes a logarithm number of iterations $T_0+1$ to reach $|w_{T_0+1}^\parallel| > \sqrt{\frac{2+c_n}{3}}$, which also means that $\| w_{T_0+1} \| > \sqrt{\frac{2+c_n}{3}} $.
Moreover, the larger the momentum parameter $\beta$, the smaller the number of $T_0$.
After $\|w_t \|$ passing the threshold, the iterate enters Stage 1.2.
\begin{lemma} \label{lem:sta1.1} (Stage 1.1)
Denote $c_a := \frac{1}{1 + \frac{\eta}{2} }$.
There will be at most $T_0 := \frac{ \log( \sqrt{\frac{2+c_n}{3}} / |w_0^\parallel| )}{ 
\log \big(  1 + (1 + c_a \beta) \eta \frac{2}{3} \big) } $ iterations such that
$w_t^\parallel \leq \sqrt{\frac{2+c_n}{3} }$.
\end{lemma}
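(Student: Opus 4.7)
The approach is to iterate the recursion for the signal component $w_t^\parallel$ throughout the regime $\|w_t\|^2 \leq \tfrac{2+c_n}{3}$, obtain a uniform lower bound on its per-step growth factor, and then invoke Lemma~\ref{lem:agrow} to convert the two-term Heavy Ball recurrence into an exponential single-step lower bound; counting the steps needed to cross the threshold $\sqrt{(2+c_n)/3}$ will give $T_0$.

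First, I would reduce to the case $w_0^\parallel > 0$ (the case $w_0^\parallel < 0$ is handled identically via Lemma~\ref{lem:agrow}'s companion statement for non-positive sequences). While the iterate is still in Stage 1.1, the assumption $\|w_t\|^2 \leq \tfrac{2+c_n}{3}$ together with $|\xi_t|\leq c_n$ gives the uniform lower bound on the linear coefficient in (\ref{eq:recur-fin}):
\[
1 + 3\eta(1-\|w_t\|^2) + \eta\xi_t \;\geq\; 1 + \eta(1-c_n) - \eta c_n \;\geq\; 1 + \theta,
\]
for a constant $\theta>0$ that is independent of $t$ (concretely, for $c_n$ small enough as required in Theorem~\ref{thm}, one can take $\theta = \eta \cdot \tfrac{2}{3}$). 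Thus on Stage~1.1 the recursion for $w_t^\parallel$ satisfies
\[
w_{t+1}^\parallel \;\geq\; (1+\theta)\,w_t^\parallel + \beta(w_t^\parallel - w_{t-1}^\parallel),
\]
which is precisely the hypothesis of Lemma~\ref{lem:agrow}, provided the sequence is positive and nondecreasing.

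Second, I would verify those monotonicity hypotheses by induction in $t$. The base case is immediate from the initialization $w_{-1}=w_0$ in Algorithm~\ref{alg:HB}, which gives $w_0^\parallel = w_{-1}^\parallel > 0$. For the inductive step, if $w_t^\parallel \geq w_{t-1}^\parallel > 0$ then the displayed inequality above gives $w_{t+1}^\parallel \geq (1+\theta)\,w_t^\parallel > w_t^\parallel > 0$, since the momentum term is non-negative. Hence $\{w_t^\parallel\}$ stays positive and monotone nondecreasing throughout Stage~1.1; in particular its sign never flips. Lemma~\ref{lem:agrow} then yields the effective single-step bound
\[
w_{t+1}^\parallel \;\geq\; \bigl(1 + (1 + c_a\beta)\,\theta\bigr)\, w_t^\parallel,
\]
with $c_a = 1/(1+\theta)$. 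Unrolling from $t=0$ and solving $|w_0^\parallel|\cdot(1+(1+c_a\beta)\theta)^{T_0}\geq\sqrt{(2+c_n)/3}$ gives the claimed $T_0$.

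The main obstacle is the accounting of constants. The lemma as stated uses $c_a = 1/(1+\eta/2)$, whereas the naive application of Lemma~\ref{lem:agrow} to $\theta = \eta \cdot \tfrac{2}{3}$ produces $1/(1+2\eta/3)$. Reconciling the two requires either absorbing the discrepancy into a slightly loosened choice of $\theta$ (e.g.\ keeping $\theta$ implicit and lower-bounded only by $\eta/2$ in the denominator while preserving the factor $\eta\cdot\tfrac{2}{3}$ in the numerator) or using the smallness of $c_n$ imposed in Theorem~\ref{thm} to give some room. A secondary but minor subtlety is that the momentum contribution vanishes at $t=0$ because $w_0^\parallel = w_{-1}^\parallel$; this is exactly the case Lemma~\ref{lem:agrow} is written to cover, so nothing extra is needed. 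The remaining by-product, namely that $|w_{T_0+1}^\parallel|\geq \sqrt{(2+c_n)/3}$, drops out directly: since $\{|w_t^\parallel|\}$ is non-decreasing and in particular $\|w_t\|\geq|w_t^\parallel|$, the first $t$ at which Stage~1.1's defining inequality fails is also the first $t$ at which $|w_t^\parallel|$ exceeds $\sqrt{(2+c_n)/3}$.
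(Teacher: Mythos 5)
Your proposal follows exactly the paper's route: lower-bound the linear coefficient in the perturbed recursion by $1 + \tfrac{2\eta}{3}$ using $\|w_t\|^2 \le \tfrac{2+c_n}{3}$ and $c_n \le \tfrac16$, invoke Lemma~\ref{lem:agrow} with $\theta = \tfrac{2\eta}{3}$ (the base case $a_0 \ge a_{-1} > 0$ coming from $w_0 = w_{-1}$) to obtain the effective per-step growth factor, and count iterations until the threshold $\sqrt{(2+c_n)/3}$ is crossed. The constant mismatch you flagged — the statement's $c_a = 1/(1+\eta/2)$ versus the $c_a = 1/(1+2\eta/3)$ that Lemma~\ref{lem:agrow} actually yields — is not a gap in your reasoning but an internal inconsistency of the paper: its own appendix proof explicitly sets $c_a = \tfrac{1}{1+2\eta/3}$ in the final inequality while the lemma header says $\tfrac{1}{1+\eta/2}$.
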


In stage 1.2, we have that $\| w_t \|^2 \geq | w_t^\parallel |^2 \geq \frac{2+c_n}{3} > \frac{1}{3}$
so that the perpendicular component $\| w_t^\perp \|$ is decaying
while the signal component $|w_t^\parallel|$ keeps growing before reaching $1$ 
(see also Figure~\ref{fig:ab}).
In particular, the perpendicular component decays exponentially so that at most additional $T_b$ iterations is needed to fall below $\frac{\zeta}{2}$ (i.e. $\| w_t^\perp \|\leq \frac{\zeta}{2})$, with larger $\beta$ leading to a smaller number of $T_b$.
Notice that if $w_t^\parallel$ satisfies $| |w_t^\parallel| - 1|  \leq \frac{\zeta}{2}$
when $\| w_t^\perp \|$ falls below $\frac{\zeta}{2}$, we immediately have that $T_{\zeta} \leq T_0 + T_b$ and the existing analysis of convex optimization (e.g.\citep{XCHZ20,P64}) can take over.
Otherwise, the iterate enters the next stage, Stage 1.3.

\begin{lemma} \label{lem:sta1.2} (Stage 1.2)
Denote $c_b := \frac{1}{1 - \frac{\eta}{3} }$ and $\bar{\omega}:= \max_t \| w_t^\perp \|
 \leq \sqrt{ \frac{7+c_n}{9}}$.
There will be at most 
$T_b 
\leq \frac{ \log( \frac{ \zeta }{ 2 \bar{\omega}  } ) }{  \log ( 1 - \frac{\eta}{3} (1 + c_b \beta)    )        }
$ 
iterations such that
$\sqrt{ \frac{2+c_n}{3} } \leq |w_t^\parallel|$ and $\| w_t^\perp \| \geq \frac{\zeta}{2}$.
\end{lemma}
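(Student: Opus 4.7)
The plan is to establish a per-coordinate geometric contraction of the perpendicular component via Lemma~\ref{lem:bdecay} and then combine it with a uniform envelope bound on $\|w_t^\perp\|$. First I would fix a coordinate $j \in [d-1]$ and set $b_t := |w_t^\perp[j]|$. The defining condition of Stage~1.2, $|w_t^\parallel| \geq \sqrt{(2+c_n)/3}$, immediately gives $\|w_t\|^2 \geq (2+c_n)/3$; combining this with the perturbation bound $|\rho_{t,j}| \leq c_n$ yields
$$
1 + \eta(1 - 3\|w_t\|^2) + \eta \rho_{t,j} \;\leq\; 1 - \eta(1 + c_n) + \eta c_n \;=\; 1 - \eta \;\leq\; 1 - \tfrac{\eta}{3}.
$$
So the entrywise recursion in (\ref{eq:recur-fin}), after sign-tracking (or after passing to absolute values and absorbing the sign of the momentum term), produces
$$
b_{t+1} \;\leq\; (1 - \tfrac{\eta}{3})\, b_t \;+\; \beta\,(b_t - b_{t-1}),
$$
which matches the hypothesis of Lemma~\ref{lem:bdecay} with $\theta = \eta/3$.

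Next I would invoke Lemma~\ref{lem:bdecay} to promote this inequality into a strict multiplicative contraction
$$
b_{t+1} \;\leq\; \bigl(1 - \tfrac{\eta}{3}(1 + c_b \beta)\bigr)\, b_t, \qquad c_b := \tfrac{1}{1 - \eta/3},
$$
at every step $t$ inside Stage~1.2, and then re-assemble the per-coordinate bounds via $\|w_t^\perp\| = \bigl(\sum_j b_t[j]^2\bigr)^{1/2}$ to obtain the same multiplicative contraction for $\|w_t^\perp\|$ itself. Starting from $\|w_{T_0}^\perp\| \leq \bar\omega$, the required iteration count $T_b$ is the smallest integer with $(1 - \tfrac{\eta}{3}(1+c_b\beta))^{T_b}\bar\omega \leq \zeta/2$, which rearranges exactly to the announced logarithmic bound. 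The remaining piece is the uniform envelope $\bar\omega \leq \sqrt{(7+c_n)/9}$; I would derive it by tracking $\|w_t\|^2 = (w_t^\parallel)^2 + \|w_t^\perp\|^2$ through Stage~1.1, where the perpendicular coefficient is at most $1 + \eta$, and then accounting for at most one Heavy Ball step that crosses the threshold $|w_t^\parallel| = \sqrt{(2+c_n)/3}$, using the step-size restriction $\eta \leq 1/(36(1+\zeta)\max\{c_m,1\})$ from Theorem~\ref{thm} to keep the one-step overshoot controlled.

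The main obstacle is verifying the initial condition $b_0 \leq b_{-1}$ required by Lemma~\ref{lem:bdecay}: throughout Stage~1.1 the perpendicular component has been \emph{growing}, so the momentum $\beta(w_t^\perp - w_{t-1}^\perp)$ inherited at the instant Stage~1.2 begins still points outward and $b_{T_0+1}$ may briefly exceed $b_{T_0}$. A clean application therefore requires either delaying the start of the contraction induction to the first $t^\star \geq T_0$ with $b_{t^\star} \leq b_{t^\star-1}$ (absorbing the $O(1)$ buffer into $\bar\omega$), or proving a mild extension of Lemma~\ref{lem:bdecay} that tolerates a single non-monotone initial step. A secondary technical difficulty is that the coefficient $1 + \eta(1 - 3\|w_t\|^2)$ couples all coordinates through $\|w_t\|^2$, so some care is needed to make the per-coordinate induction self-consistent; the cleanest fix is to use the coarse lower bound $\|w_t\|^2 \geq (2+c_n)/3$ uniformly across coordinates, as done above, and then verify that the resulting contraction implies $\|w_t\|^2 \geq (2+c_n)/3$ remains preserved throughout Stage~1.2 because $|w_t^\parallel|$ is itself non-decreasing by the parallel recursion together with Lemma~\ref{lem:agrow}.
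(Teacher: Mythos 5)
Your plan follows the paper's own strategy closely: fix a coordinate $j$, use $\|w_t\|^2 \geq (2+c_n)/3$ plus $|\rho_{t,j}| \leq c_n$ to obtain a per-coordinate decay recursion of the form $b_{t+1} \leq (1-\eta/3)\,b_t + \beta(b_t - b_{t-1})$, invoke Lemma~\ref{lem:bdecay} with $\theta = \eta/3$ to upgrade this to a geometric contraction with rate $1 - \tfrac{\eta}{3}(1+c_b\beta)$, square and sum over $j$, and divide. So the skeleton and the arithmetic are right.

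You correctly flag the two genuine technical issues, but the resolutions you sketch are not quite the ones the paper uses and are not fully carried out. For the initial condition $b_0 \leq b_{-1}$ required by Lemma~\ref{lem:bdecay}, the paper does not ``delay the induction to the first $t^\star$ with $b_{t^\star}\leq b_{t^\star-1}$'' nor extend Lemma~\ref{lem:bdecay}; instead it invokes a separate supporting result, Lemma~\ref{lem:when}, which proves (by examining the discrete derivative $w_{t+1}^\perp[j] - w_t^\perp[j]$ of the perturbed recursion) that the magnitude of $w_t^\perp[j]$ has \emph{already started decreasing at or before the time $\|w_t\|^2$ first reaches $(2+c_n)/3$}, i.e.\ before Stage~1.2 even begins. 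This means the momentum term inherited at the start of Stage~1.2 does \emph{not} point outward (contrary to your worry), so Lemma~\ref{lem:bdecay} applies immediately without any buffer iterations. If you did try to delay the start as you suggest, you would still owe a separate argument that the decrease begins within $O(1)$ steps, which is exactly what Lemma~\ref{lem:when} provides. For the envelope $\bar\omega$, your idea (bound the one-step overshoot using the step-size restriction) is the right spirit, but the paper makes it precise through a dedicated estimate, Lemma~\ref{lem:m_t}, which shows $\|w_t\|^2 - \|w_{t-1}\|^2 \leq 1/9$ under the bounded-momentum-norm hypothesis $\|m_t\|\leq c_m$ and the stated step-size constraint; combined with the Lemma~\ref{lem:when} conclusion that $\|w_\tau\|^2 \leq (2+c_n)/3$ at the time $b_t$ starts decreasing, this gives the stated bound on $\bar\omega$ without having to re-analyze the full Stage~1.1 trajectory. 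Without these two lemmas (or equivalent quantitative replacements), your proof as written leaves these steps open.
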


In stage 1.3, we show that $|w_t^\parallel|$ converges towards $1$ linearly, given that $\| w_t^\perp \| \leq \frac{\zeta}{2}$. Specifically, after at most additional $T_a$ iterations,
we have that $| |w_t^\parallel| - 1| \leq \frac{\zeta}{2}$ and that larger $\beta$ reduces the number of iterations $T_a$.

\begin{lemma} \label{lem:sta1.3} (Stage 1.3)
Denote $c_{\zeta,g} := \frac{1}{1 + \eta \frac{\zeta}{2}}$, $c_{\zeta,d} := \frac{1}{1 - \eta \zeta}$, and $\omega:= \max_t | w_t^\parallel |
 \leq \sqrt{ \frac{4+c_n}{3}}$.
There will be at most $T_a := 
\max \big(  \frac{ \log ( (1-\zeta/2) / \sqrt{ \frac{2+c_n}{3} } )   }{ 
\log ( 1 + \eta \frac{\zeta}{2} (1 + c_{\zeta,g} \beta ) ) }       , \frac{ \log (\frac{1+\zeta/2}{  \omega  }  ) }{ 
\log ( 1 - \eta \zeta (1 + c_{\zeta,d} \beta ) ) }
   \big)
$ iterations such that
$ | |w_t^\parallel| - 1 | \geq \frac{\zeta}{2}$ and $\| w_t^\perp \| \leq \frac{\zeta}{2}$.
\end{lemma}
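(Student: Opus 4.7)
The plan is to split Stage~1.3 into two complementary sub-cases, based on whether $|w_t^\parallel| \le 1 - \zeta/2$ (sub-case~A, \emph{growth}) or $|w_t^\parallel| \ge 1 + \zeta/2$ (sub-case~B, \emph{decay}), and to invoke Lemma~\ref{lem:agrow} and Lemma~\ref{lem:bdecay} respectively. Throughout Stage~1.3 the perpendicular component is already small, $\|w_t^\perp\| \le \zeta/2$, and I assume without loss of generality that $w_t^\parallel > 0$ (the sign does not flip).

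For sub-case~A, I would lower bound the multiplier of $w_t^\parallel$ in the finite-sample recursion. Since $\|w_t\|^2 \le (1-\zeta/2)^2 + (\zeta/2)^2 = 1 - \zeta + \zeta^2/2$ and $|\xi_t| \le c_n \le \zeta$, a direct calculation yields $1 + 3\eta(1-\|w_t\|^2) + \eta\xi_t \ge 1 + \eta\zeta/2$ (using $\zeta \le 1$). Setting $a_t := w_t^\parallel$ and $\theta := \eta\zeta/2$, the recursion fits the hypothesis of Lemma~\ref{lem:agrow}; the monotonicity $a_0 \ge a_{-1} > 0$ is inherited from the growth of $|w_t^\parallel|$ established in Stages~1.1 and~1.2. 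Lemma~\ref{lem:agrow} then produces the per-step growth factor $1 + (1 + c_{\zeta,g}\beta)\,\eta\zeta/2$, and counting the iterations needed to grow from the lower bound $\sqrt{(2+c_n)/3}$ on $|w_{T_0+T_b}^\parallel|$ up to $1 - \zeta/2$ gives exactly the first argument inside the max defining $T_a$.

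For sub-case~B, I would work with the shifted sequence $b_t := w_t^\parallel - 1 \ge \zeta/2 > 0$. Rewriting the recursion gives $b_{t+1} = c_t b_t + (c_t - 1) + \beta(b_t - b_{t-1})$, where $c_t := 1 + 3\eta(1-\|w_t\|^2) + \eta\xi_t$. Since $\|w_t\|^2 \ge (1+\zeta/2)^2 \ge 1 + \zeta$ and $|\xi_t| \le c_n \le \zeta$, we get $c_t - 1 \le -2\eta\zeta \le -\eta\zeta$. Because $(c_t-1) \le 0$ and $b_t \ge 0$, dropping the nonpositive ``drift'' term yields the cleaner upper bound $b_{t+1} \le (1-\eta\zeta)\,b_t + \beta(b_t - b_{t-1})$, which matches the hypothesis of Lemma~\ref{lem:bdecay} with $\theta := \eta\zeta$. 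Applying that lemma produces the per-step contraction $1 - (1+c_{\zeta,d}\beta)\,\eta\zeta$, and the number of iterations needed for $b_t$ to shrink from its worst-case initial value $\omega - 1$ (with $\omega \le \sqrt{(4+c_n)/3}$) down to $\zeta/2$ gives the second argument inside the max.

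The main obstacles I expect are: (i)~verifying the Lemma~\ref{lem:bdecay} monotonicity hypothesis $b_0 \le b_{-1}$ at the instant the iterate first overshoots into sub-case~B, since $w_t^\parallel$ is still increasing at that moment of entry; this requires a one-step ``damping'' argument exploiting the step-size constraint $\eta \le 1/(36(1+\zeta)\max\{c_m,1\})$ and the momentum bound $\|m_t\| \le c_m$ from Theorem~\ref{thm} to show that $b_t$ begins decreasing within a bounded number of steps that can be absorbed into $T_B$; (ii)~justifying the a~priori overshoot envelope $\omega \le \sqrt{(4+c_n)/3}$, which again follows by plugging the step-size bound into a single update together with $\|m_t\| \le c_m$; and (iii)~arguing that $\max(T_A, T_B)$ genuinely suffices, which is legitimate because the set $\{t:\,||w_t^\parallel|-1| \ge \zeta/2\}$ decomposes as the disjoint union of the two sub-case intervals, and under the inherited monotone growth together with the monotone decay established in sub-case~B, the iterate traverses each sub-case at most once rather than oscillating between them.
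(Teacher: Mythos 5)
Your handling of sub-case~A (the undershoot $|w_t^\parallel|\le 1-\zeta/2$) matches the paper's proof essentially line for line, and you correctly flag the monotonicity precondition of Lemma~\ref{lem:bdecay} and the a~priori envelope $\omega\le\sqrt{(4+c_n)/3}$ as genuine loose ends (the paper handles the latter via Lemma~\ref{lem:when2} and is itself somewhat cavalier about the former).

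The gap is in sub-case~B. You center at the fixed point and analyze $b_t:=w_t^\parallel-1$, obtaining the recursion $b_{t+1}\le(1-\eta\zeta)b_t+\beta(b_t-b_{t-1})$ and hence a per-step contraction of $b_t$. Counting how long it takes $b_t$ to shrink from $\omega-1$ to $\zeta/2$ yields a bound of the form $\log\!\big(\tfrac{\zeta/2}{\omega-1}\big)\big/\log\!\big(1-\eta\zeta(1+c_{\zeta,d}\beta)\big)$, which is \emph{not} the second argument of the max in $T_a$: the lemma claims $\log\!\big(\tfrac{1+\zeta/2}{\omega}\big)\big/\log\!\big(1-\eta\zeta(1+c_{\zeta,d}\beta)\big)$. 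For small $\zeta$ (the regime of interest) your numerator $\bigl|\log\!\big(\tfrac{\zeta/2}{\omega-1}\big)\bigr|$ diverges like $\log(1/\zeta)$ while the lemma's numerator $\bigl|\log\!\big(\tfrac{1+\zeta/2}{\omega}\big)\bigr|$ stays bounded by $\log\omega$, so your bound is strictly (and substantially) weaker. The paper never centers: because the update for $w_t^\parallel$ is multiplicative, $a_{t+1}=(1+3\eta(1-\|w_t\|^2)+\eta\xi_t)\,a_t+\beta(a_t-a_{t-1})$, the multiplier itself drops below $1-\eta\zeta$ once $a_t\ge 1+\zeta/2$; the paper therefore applies Lemma~\ref{lem:bdecay} directly to $a_t$, obtaining genuine geometric decay of $a_t$ (valid until it crosses $1+\zeta/2$, at which point one stops counting), and the numerator $\log\!\big(\tfrac{1+\zeta/2}{\omega}\big)$ falls out immediately. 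To match the lemma you should drop the shift by $1$ and contract $a_t$ itself, exactly as you already did for the undershoot case.
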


By combining the result of Lemma~\ref{lem:sta1.1}, Lemma~\ref{lem:sta1.2} and Lemma~\ref{lem:sta1.3}, we have that
\begin{equation}
\begin{split}
T \leq  T_0 + T_b + T_a &  = \frac{ \log( \frac{ \sqrt{\frac{2+c_n}{3}} }{ |w_0^\parallel| }) }{ 
\log \big(  1 + \eta \frac{ 2}{3} (1 + c_a \beta)  \big) } 
 +
\frac{ \log( \frac{ \zeta }{ 2 \bar{\omega}  } ) }{  \log ( 1 - \frac{\eta}{3} (1 + c_b \beta)    )        } 
\\ & +   
\max \big(  \frac{ \log ( \frac{1-\zeta/2}{\sqrt{ \frac{2+c_n}{3} } })   }{ 
\log ( 1 + \eta \frac{\zeta}{2} (1 + c_{\zeta,g} \beta ) ) }       , \frac{ \log (\frac{1+\zeta/2}{ \omega }  ) }{ 
\log ( 1 - \eta \zeta (1 + c_{\zeta,d} \beta ) ) }
   \big)
\\ &
\lesssim 
\frac{ 6 \log( \frac{ \sqrt{\frac{2+c_n}{3}} }{ |w_0^\parallel| })  }{ 5
\eta  (1 + c_a \beta) }    
 +
\frac{ 6 \log(  \frac{ 2 \bar{\omega}  }{ \zeta  } )}{  \eta (1 + c_b \beta)      } 
+   
\max \big(  \frac{ 2 \log ( \frac{1-\zeta/2}{ \sqrt{ \frac{2+c_n}{3} } } )   }{ 
 \eta \zeta (1 + c_{\zeta,g} \beta ) ) }       , \frac{ 2 \log (\frac{ \ \omega  }{1+\zeta/2}  ) }{ \eta \zeta (1 + c_{\zeta,d} \beta ) ) }
   \big)
\\ &
\lesssim \frac{\log d}{ \eta \left( 1 + c_{\eta} \beta \right) },
\end{split}
\end{equation}
where the last inequality uses that $|w_0^\parallel| \gtrsim \frac{1}{ \sqrt{d \log d}} $ due to the random isotropic initialization.
The detailed proof of Lemma~\ref{lem:sta1.1}-\ref{lem:sta1.3} is available in the following subsections.

After $t \geq T_{\zeta}$, the iterate enters a benign region that is locally strong convex, smooth, twice differentiable, and contains $w_*$ (or $-w_*$) 
\citep{MWCC17,WSW16},
which allows us
to use the existing result of gradient descent with Heavy Ball momentum
for showing its linear convergence.
In particular, the optimization landscape in the neighborhood of $\pm w_*$ 
and the known convergence result of Heavy Ball  
(e.g. \cite{P64,XCHZ20}) 
can be used
to show that for all $t> T_{\zeta}$, 
$
\textstyle
\dist( w_t, w_* ) \leq (1 - \nu)^{t - T_{\zeta}} \dist( w_{T_{\zeta}}, w_* )
\leq (1 - \nu)^{t - T_{\zeta}} \zeta, 
$
for some number $1 > \nu > 0$.

\subsection{Stage 1.1}

\noindent
\textbf{Lemma: \ref{lem:sta1.1} (Stage 1.1)} 
\textit{
Denote $c_a := \frac{1}{1 + \frac{\eta}{2} }$.
There will be at most $T_0 := \lceil \frac{ \log( \sqrt{\frac{2+c_n}{3}} / |w_0^\parallel| )}{ 
\log \big(  1 + (1 + c_a \beta) \eta \frac{2}{3} \big) } \rceil$ iterations such that
$\|w_t\| \leq \sqrt{\frac{2+c_n}{3} }$. Furthermore, we have that
$| w_{T_0+1}^\parallel| > \sqrt{ \frac{2+c_n}{3}}$.
} 
\begin{proof}
Let us first assume that $w_t^\parallel > 0$ and denote $a_t:= w_t^\parallel$.
By using that $\| w_t \| \leq \sqrt{ \frac{2+c_n}{3}}$ in this stage,
we can lower-bound the growth rate of $a_t$ as
\begin{equation} \label{sta1.1-1}
\begin{split}
a_{t+1} & \geq \big( 1 + 3 \eta (1 - \| w_t \|^2 )  - \eta | \xi_t |  \big) a_t +
\beta (a_t - a_{t-1})
\\ & \geq  \big( 1 + 3 \eta ( 1 - \frac{2+c_n}{3} ) - \eta c_n \big) a_t +
\beta (a_t - a_{t-1})
\\ & \geq  \big( 1 + \eta \frac{ 2}{3} \big) a_t + \beta (a_t - a_{t-1})
\\ & \geq  \big( 1 + (1+ c_{a} \beta) \eta \frac{ 2}{3} \big) a_t ,
\end{split}
\end{equation}
where in the second to last inequality, we use $c_n \leq \frac{1}{6}$,
and in the last inequality we use Lemma~\ref{lem:agrow} and that
\begin{equation}
c_a = \frac{1}{1 + \eta \frac{ 2}{3} }.
\end{equation}
Observe that $\big( 1 + (1+ c_{a} \beta) \eta \frac{ 2}{3} \big) > 1$.
Consequently, the sign of $w_t^\parallel$ never change in this stage.
So for $w_t^\parallel \geq \sqrt{\frac{2+c_n}{3} }$
it takes number of iterations at most 
\begin{equation}
T_0 := \lceil \frac{ \log( \frac{\sqrt{ \frac{2+c_n}{3}}}{ |w_0^\parallel| } ) }{ 
\log \big(  1 + (1 + c_a \beta) \eta \frac{ 2}{3}  \big) } \rceil
\leq \frac{2 \log( \frac{\sqrt{ \frac{2+c_n}{3}}}{ | w_0^\parallel| } )}{ (1 + c_a \beta) \eta \frac{ 2}{3}  }.
\end{equation}
Similarly, when $w_t^\parallel < 0$, 
we can show that after at most $T_0$ iterations, $w_t^\parallel$ falls below $- \sqrt{\frac{2+c_n}{3} }$.
Since $\|w_t\| > | w_t^\parallel |$, it means that there will be at most $T_0$ iterations such that $\| w_t \| \leq \sqrt{\frac{2+c_n}{3} }$.
\end{proof}

\subsection{Stage 1.2}

\noindent
\textbf{Lemma~\ref{lem:sta1.2}: (Stage 1.2)}
\textit{
Denote $c_b := \frac{1}{1 - \frac{\eta}{3} }$
 and $\bar{\omega}:= \max_t \| w_t^\perp \|
  \leq \sqrt{\frac{7+c_n}{9} }$.
There will be at most 
$T_b := \lceil \frac{ \log( \frac{ \zeta }{ 2 \bar{\omega}    } ) }{  \log ( 1 - \frac{\eta}{3} (1 + c_b \beta)    )        } \rceil$
iterations such that
$\sqrt{ \frac{2+c_n}{3} } \leq |w_t^\parallel|$ and $\| w_t^\perp \| \geq \frac{\zeta}{2}$.
}

\begin{proof}
Let $t'$ be the last iteration of the previous stage. 
Denote $a_t := |w_t^\parallel|$. 
In this stage, we have that $a_t$ keeps increasing until $\| w_t \|^2 \gtrsim 1$. Moreover, $w_t^\parallel$ remains the same sign as the previous stage.
Now fix an element $j \neq 1$ and denote $b_t:= | w_{t}^\perp[j] |$.
From Lemma~\ref{lem:when}, we know that the magnitude $b_t:=| w_{t}^\perp[j] |$ has started decreasing in this stage.
Furthermore, we can show the decay of $b_t$ as follows.
If $w_t[j], w_{t-1}[j] > 0$,
\begin{equation} \label{sta1.2-0-p}
\begin{split}
w_{t+1}[j] & \leq \big( 1 + \eta ( 1 - 3 \| w_t \|^2)  + \eta | \rho_{t,j} | \big) w_t[j]
+ \beta ( w_t[j] - w_{t-1}[j] )
\\ & \leq \big( 1 + \eta \big( 1 - 3 (\frac{2+c_n}{3}) \big)  + \eta c_n \big) w_t[j]
+ \beta ( w_t[j] - w_{t-1}[j] )
\\ & \leq \big( 1 - \frac{\eta}{3} \big) w_t[j]
+ \beta ( w_t[j] - w_{t-1}[j] )
\\ & \leq \big( 1 - \frac{\eta}{3} (1 + c_b \beta)  \big) w_t[j],
\end{split}
\end{equation}
where in the last inequality we used Lemma~\ref{lem:bdecay},
as the condition $(1+\frac{\beta}{1-\frac{\eta}{3}}) \frac{\eta}{3} < 1$ is satisfied, and we denote that
\begin{equation}
c_b := \frac{1}{1 - \eta / 3 }.
\end{equation}
On the other hand,
if $w_t[j], w_{t-1}[j] < 0$,
\begin{equation} \label{sta1.2-0-n}
\begin{split}
w_{t+1}[j] & \geq \big( 1 + \eta ( 1 - 3 \| w_t \|^2)  + \eta | \rho_{t,j} | \big) w_t[j]
+ \beta ( w_t[j] - w_{t-1}[j] )
\\ & \geq \big( 1 + \eta \big( 1 - 3 (\frac{2+c_n}{3}) \big)  + \eta c_n \big) w_t[j]
+ \beta ( w_t[j] - w_{t-1}[j] )
\\ & \geq \big( 1 - \frac{\eta}{3} \big) w_t[j]
+ \beta ( w_t[j] - w_{t-1}[j] )
\\ & \geq \big( 1 - \frac{\eta}{3} (1 + c_b \beta)  \big) w_t[j],
\end{split}
\end{equation}
where in the last inequality we used Lemma~\ref{lem:bdecay}, as the condition $(1+\frac{\beta}{1-\frac{\eta}{3}}) \frac{\eta}{3} < 1$ is satisfied,
 and we denote that
\begin{equation}
c_b := \frac{1}{1 - \eta / 3 }.
\end{equation}
The inequalities of (\ref{sta1.2-0-p}) and (\ref{sta1.2-0-n}) 
allow us to write
$|w_{t+1}[j] | \leq  \big( 1 - \frac{\eta}{3} (1 + c_b \beta)  \big) |w_t[j]|$. 
Taking the square of both sides and summing all dimension $j \neq 1$, we have that
\begin{equation}
\| w_{t+1}^\perp \|^2 \leq \big( 1 - \frac{\eta}{3} (1 + c_b \beta)  \big)^2 \| w_t^\perp \|^2.
\end{equation}
Consequently, for $\| w_t^\perp \|$ to fall below $\zeta/2$, 
it takes at most 
\begin{equation}
T_b := \lceil \frac{ \log( \frac{ \zeta / 2 }{ \| w_{t'+1}^\perp \|  } ) }{  \log ( 1 - \frac{\eta}{3} (1 + c_b \beta)    )        } \rceil
\leq \lceil \frac{ \log( \frac{ \zeta }{ 2 \bar{\omega}  } ) }{  \log ( 1 - \frac{\eta}{3} (1 + c_b \beta)    )        } \rceil
\end{equation}
iterations.

Lastly, Lemma~\ref{lem:when} implies that at the time that the magnitude
of $w_{t}^\perp[j]$ starts decreasing, $\| w_t \|^2 \leq \frac{2+c_n}{3}$.
Combining this with $\| w_t \|^2 - \| w_{t+1} \|^2 \leq \frac{1}{9}$ for all $t$ by Lemma~\ref{lem:m_t}, we know that $\| w_{t'+1} \|^2 \leq \frac{2+c_n}{3} + \frac{1}{9}$,
which in turn implies that 
$\bar{\omega}:= \max_t \| w_t^\perp \|
  \leq \sqrt{\frac{7+c_n}{9} } $.


\end{proof}

\subsection{Stage 1.3}

\noindent
\textbf{Lemma~\ref{lem:sta1.3}: (Stage 1.3)}
\textit{
Denote $c_{\zeta,g} := \frac{1}{1 + \eta \frac{\zeta}{2}}$, $c_{\zeta,d} := \frac{1}{1 - \eta \zeta}$, and $\omega:= \max_t | w_t^\parallel |
\leq \sqrt{\frac{4+c_n}{3}}$.
There will be at most $T_a := 
\max \big(  \frac{ \log ( (1-\zeta/2) / \sqrt{ \frac{2+c_n}{3} } )   }{ 
\log ( 1 + \eta \frac{\zeta}{2} (1 + c_{\zeta,g} \beta ) ) }       , \frac{ \log (\frac{1+\zeta/2}{ \omega }  ) }{ 
\log ( 1 - \eta \zeta (1 + c_{\zeta,d} \beta ) ) }
   \big)
$ iterations such that
$ | |w_t^\parallel| - 1 | \geq \frac{\zeta}{2}$ and $\| w_t^\perp \| \leq \frac{\zeta}{2}$.
}

\begin{proof}
Denote $t'$ the last iteration of the previous stage. We have that 
$t' \leq T_0 + T_b $.
Since $w_t^\parallel$ does not change the sign in stage 1.1 and 1.2,
w.l.o.g, we assume that $w_t^\parallel>0$.
Denote $a_t := w_t^\parallel$.
We consider
$a_{t'}$ in two cases: $a_{t'} \leq 1 - \frac{\zeta}{2}$ and $a_{t'} \geq 1 + \frac{\zeta}{2}$.
If $a_{t'}   \leq 1-\frac{\zeta}{2}$,
then we have that for all $t$ in this stage, $\| w_t \|^2 \leq a_t^2 + \| w_t^\perp \|^2 \leq (1-\frac{\zeta}{2})^2 + (\frac{\zeta}{2})^2 \leq 1 - \frac{\zeta}{2}$, for any sufficiently small $\zeta$.
So $a_t$ grows as follows.
\begin{equation}
\begin{split}
a_{t+1} &  \geq \big( 1 + 3 \eta (1 - \| w_t \|^2)  - \eta | \xi_t |  \big) a_t +
\beta (a_t - a_{t-1})
\\ & \geq  \big( 1 + 3 \eta \frac{\zeta}{2} - \eta c_n \big) a_t +
\beta (a_t - a_{t-1})
\\ & \overset{(a)}{\geq}  \big( 1 + \eta \frac{\zeta}{2} \big) a_t + \beta (a_t - a_{t-1})
\\ & \overset{(b)}{\geq}  (1 + \eta \frac{\zeta}{2} (1 + c_{\zeta,g} \beta ) ) a_t,
\end{split}
\end{equation}
where (a) is by $\zeta \geq c_n$ and (b) is due to that Lemma~\ref{lem:agrow} and that
\begin{equation}
c_{\zeta,g} = \frac{1}{1 + \eta \frac{\zeta}{2}}. 
\end{equation} 
Consequently,
it takes at most
\begin{equation}
\lceil \frac{ \log \frac{1-\zeta/2}{ \sqrt{ \frac{2+c_n}{3} } }   }{ 
\log ( 1 + \eta \frac{\zeta}{2} (1 + c_{\zeta,g} \beta ) ) } \rceil
\end{equation}
number of iterations in this stage for $a_t$ to rise above $1-\frac{\zeta}{2}$.
On the other hand, if $a_{t'} \geq 1 + \frac{\zeta}{2}$, 
then we can lower bound $\| w_t \|^2$ in this stage as 
$ \| w_t \|^2 \geq a_t^2 \geq (1+\frac{\zeta}{2})^2$.
We have that
\begin{equation}
\begin{split}
a_{t+1} &  \leq \big( 1 + 3 \eta (1 - \| w_t \|^2)  + \eta | \xi_t |  \big) a_t +
\beta (a_t - a_{t-1})
\\ & \leq  \big( 1 - 3 \eta \zeta + c_n \big) a_t +
\beta (a_t - a_{t-1})
\\ & \overset{(a)}{ \leq}  \big( 1 - \eta \zeta \big) a_t + \beta (a_t - a_{t-1})
\\ & \overset{(b)}{ \leq}  (1 -\eta \zeta (1 + c_{\zeta,d} \beta ) ) a_t, 
\end{split}
\end{equation}
where (a) uses $ \zeta \geq c_n$ and (b) uses Lemma~\ref{lem:bdecay} 
\begin{equation}
c_{\zeta,d} := \frac{1}{1 - \eta \zeta}. 
\end{equation} 
That is, $a_t$ is decreasing towards $1 + \zeta/2$.
Denote $\omega := \max_{t} |w_{t}^\parallel |$.
we see that it takes at most
\begin{equation}
\lceil \frac{ \log \frac{1+\zeta/2}{ \omega }   }{ 
\log ( 1 - \eta \zeta (1 + c_{\zeta,d} \beta ) ) } \rceil
\end{equation}
number of iterations in this stage for $a_t$ to fall below $1+\zeta/2$.
Lastly, Lemma~\ref{lem:when2} implies that at the time that the magnitude
of $w_{t}^\parallel$ starts decreasing, $\| w_t \|^2 \leq \frac{4+c_n}{3} $,
which in turn implies that 
$\omega:= \max_t | w_t^\parallel |  \leq \sqrt{\frac{4+c_n}{3}  }$.

On the other hand, by Lemma~\ref{lem:spec}, the magnitude of the perpendicular component is bounded in this stage, and hence $\| w_t^\perp \|$ keeps staying below $\zeta/2$.

Similar analysis holds for $w_t^\parallel < 0$, hence we omitted the details.

\end{proof}
\subsection{Some supporting lemmas}

\begin{lemma} \label{lem:when}
Let $t_0$ be the first time such that $\| w_{t_0} \|^2 \geq \frac{2+c_n}{3}$.
Then, there exists a time $\tau \leq t_0$ 
such that
$w_{\tau+1}^\perp[j] \leq w_{\tau}^\perp[j]$, if $w_{\tau}^\perp[j] \geq 0$;
Similarly,
$w_{\tau+1}^\perp[j] \geq w_{\tau}^\perp[j]$, if $w_{\tau}^\perp[j] \leq 0$.
\end{lemma}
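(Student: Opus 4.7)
Abbreviate $\Delta_t := w_{t+1}^\perp[j] - w_t^\perp[j]$ for the fixed coordinate $j$; the conclusion asks for some $\tau \leq t_0$ such that $w_{\tau+1}^\perp[j]$ lies between $0$ and $w_\tau^\perp[j]$, i.e., the $j$-th perpendicular coordinate takes at least one step that does not strictly move it away from zero. Rewriting the perpendicular-component recursion (\ref{eq:recur-fin}) in difference form gives
\[
\Delta_t \;=\; c_t\, w_t^\perp[j] \;+\; \beta\,\Delta_{t-1}, \qquad c_t := \eta\bigl(1 - 3\|w_t\|^2\bigr) + \eta\rho_{t,j},
\]
with $\Delta_{-1}=0$ (since $w_0 = w_{-1}$). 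The minimality of $t_0$ together with $|\rho_{t_0,j}| \leq c_n$ supplies the key quantitative input $c_{t_0} \leq \eta(1-(2+c_n)+c_n) = -\eta$.

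I start by disposing of an easy case: if $\{w_t^\perp[j]\}_{0\leq t\leq t_0+1}$ crosses zero between two consecutive iterates, then $\tau$ equal to the iterate just before the crossing satisfies the conclusion trivially, since the step lands on the opposite side of zero. So I may assume, by the obvious sign symmetry of the recursion, that $w_t^\perp[j] \geq 0$ throughout $[0,t_0+1]$, and it suffices to find $\tau \leq t_0$ with $\Delta_\tau \leq 0$. The natural first attempt is $\tau = t_0$: if $\Delta_{t_0-1} \leq 0$ then the recursion and $c_{t_0}\leq -\eta$ immediately yield $\Delta_{t_0} \leq -\eta\, w_{t_0}^\perp[j] \leq 0$, and we are done.

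The substantive subcase is $\Delta_{t_0-1} > 0$. The plan is to argue by contradiction: assume $\Delta_t > 0$ for every $t \in [0, t_0]$, so $w_t^\perp[j]$ strictly increases on this range. Unrolling the recursion gives
\[
\Delta_{t_0} \;=\; \sum_{s=0}^{t_0} \beta^{\,t_0-s}\, c_s\, w_s^\perp[j],
\]
which I would split at the first index $s^\ast$ where $\|w_{s^\ast}\|^2$ first exceeds $\tfrac{1}{3}$ (so that $c_s > 0$ for $s<s^\ast$ and $c_s \leq 0$ for $s \geq s^\ast$). The positive early-phase contributions can be bounded using Lemma~\ref{lem:agrow} applied to the growth phase of $w_s^\perp[j]$, while the negative late-phase contributions are bounded from below in magnitude using the monotonicity of $w_s^\perp[j]$ together with $c_{t_0} \leq -\eta$. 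Showing that the late terms dominate produces $\Delta_{t_0} \leq 0$ and contradicts the assumption.

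The main obstacle will be controlling this split sum when $\beta$ is close to $1$, since the weights $\beta^{t_0-s}$ decay slowly and do not on their own dampen the early positive contributions. A cleaner alternative that I expect to go through is to invoke the momentum bound $\|m_t\| \leq c_m$ from Theorem~\ref{thm} via the identity $\Delta_t = -\eta\, m_t^\perp[j]$ coming from Algorithm~\ref{alg:HB2}: this gives $|\Delta_{t_0-1}| \leq \eta c_m$, so that $\Delta_{t_0} \leq -\eta\, w_{t_0}^\perp[j] + \eta\beta c_m \leq 0$ as soon as $w_{t_0}^\perp[j] \geq \beta c_m$. In the residual regime $w_{t_0}^\perp[j] < \beta c_m$ the perpendicular coordinate is already essentially at zero, and a suitable $\tau < t_0$ can be located explicitly, for instance as the last iterate at which $|w_t^\perp[j]|$ crossed this small threshold from below, completing the argument.
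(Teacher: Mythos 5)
The paper's proof and yours diverge in a structural way, and your approach has a genuine unresolved gap. The paper never touches the momentum bound $\|m_t\| \leq c_m$ in this lemma. Instead, assuming (as you do) that $w_s^\perp[j]$ has been strictly increasing through $s = t_0$ and is positive throughout, the paper writes the last increment \emph{multiplicatively}: $w_{t_0}^\perp[j] = (1 + c_{t_0-1}\eta)\, w_{t_0-1}^\perp[j]$ with $c_{t_0-1} > 0$. Substituting into the recursion and dividing through by $w_{t_0-1}^\perp[j]$ kills the absolute scale entirely, leaving the scalar condition $\|w_{t_0}\|^2 \geq \tfrac{1+c_n}{3} + \tfrac{c_{t_0-1}}{3(1+c_{t_0-1})}$, and the fraction $\tfrac{c}{3(1+c)}$ is bounded by $\tfrac{1}{3}$ for every $c > 0$, so the threshold $\tfrac{2+c_n}{3}$ in the lemma hypothesis is sufficient regardless of how small the coordinate is. Your argument, by contrast, bounds $|\Delta_{t_0-1}|$ by an \emph{absolute} constant $\eta c_m$, which works only when the coordinate $w_{t_0}^\perp[j]$ is itself at least $\beta c_m$; the regime $w_{t_0}^\perp[j] < \beta c_m$ is exactly where your absolute bound is too crude, and this is precisely the regime the paper's normalization is designed to handle.

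The concrete gap: your closing sentence proposes to handle the residual regime by taking $\tau$ to be ``the last iterate at which $|w_t^\perp[j]|$ crossed this small threshold from below,'' but this cannot happen under your own standing hypothesis. You have assumed $\Delta_t > 0$ for all $t \leq t_0$, so $w_t^\perp[j]$ is strictly increasing on $[0, t_0]$; if $w_{t_0}^\perp[j] < \beta c_m$ then \emph{every} earlier iterate also lies below $\beta c_m$, and there was never a downward crossing. So there is no ``explicit $\tau$'' to be located, and the contradiction you need (some $\Delta_\tau \leq 0$ for $\tau \leq t_0$) has not been produced. To close this you would need a size-free argument, and that is exactly what the paper's ratio decomposition gives: writing $\Delta_{t_0} = \bigl[(1+c_{t_0-1}\eta)\eta(1 - 3\|w_{t_0}\|^2 + \rho_{t_0,j}) + \beta c_{t_0-1}\eta\bigr] w_{t_0-1}^\perp[j]$ reduces the problem to a sign condition on the bracket alone. (I will note, for completeness, that the paper's own algebra here has an $\eta$-bookkeeping inconsistency — the leading factor appears once as $(1+c_{t_0-1}\eta)$ and later as $(1+c_{t_0-1})$, and these lead to different denominators in the final displayed sufficient condition — consistent with the erratum the authors placed at the top of the paper. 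So if you pursue the ratio route, re-derive the bracket inequality from scratch rather than trusting the displayed intermediate step.)
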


\begin{proof}
Recall the dynamcis
$w_{t+1}^\perp[j] = w_{t}^\perp[j] \left( 1 + \eta \left(  1 - 3 \| w_t \|^2  + \rho_{t-1,j}  \right)    \right) + \beta ( w_{t}^\perp[j] - w_{t-1}^\perp[j]   )$.
W.l.o.g, let us consider $w_{0}^\perp[j] > 0$. 

Let $t_0$ be the first time that $w_{\tt+1}^\perp[j] < w_{t_0}^\perp[j]$.
Then, it must be that case that for all $t \leq t_0$, $w_{t}^\perp[j] = (1+ c_{t-1} \eta) w_{t-1}^\perp[j]$ for some $c_{t-1}>0$.

We have
\begin{equation}
\begin{aligned}
w_{\tt+1}^\perp[j] - w_{\tt}^\perp[j] 
= (1+c_{\tt-1}) w_{\tt-1}^\perp[j] \left(  \eta (1- 3 \| w_{\tt} \|^2 + \rho_{\tt,j} )        \right) + w_{\tt-1}^\perp[j] \beta c_{\tt-1} \eta.
\end{aligned}
\end{equation}
So showing $w_{\tt+1}^\perp[j] - w_{\tt}^\perp[j] \leq 0$ is equivalent to showing that
\begin{equation}
(1+c_{\tt-1}) \left(  \eta (1- 3 \| w_{\tt} \|^2 + \rho_{\tt,j} )        \right) +  \beta c_{\tt-1} \eta \leq 0.
\end{equation}
A sufficient condition above is
\begin{equation}
(1+c_{\tt-1}) \left(  \eta (1- 3 \| w_{\tt} \|^2 + c_n )        \right) +  c_{\tt-1} \eta \leq 0,
\end{equation}
which leads to
\begin{equation}
\frac{1}{3} ( 1 + c_n) + \frac{c_{\tt-1}}{3 (1+c_{\tt-1}) } \leq \| w_{\tt}\|^2.
\end{equation}
Therefore, we can conclude that by the time that $\| w_{\tt}\|^2 \geq \frac{2+c_n}{3} $, the magnitude of $w_{t}^\perp[j]$ has already started decreasing.

\end{proof}

\begin{lemma} \label{lem:when2}
Let $t_1$ be the first time (if exist) such that $\| w_{t_1} \|^2 \geq \frac{4 + c_n}{3}.$
Then, there exists a time $\tau \leq t_1$ 
such that
$w_{\tau+1}^\parallel \leq w_{\tau}^\parallel$, if $w_{\tau}^\parallel \geq 0$.
Similarly,
$w_{\tau+1}^\parallel \geq w_{\tau}^\parallel$, if $w_{\tau}^\parallel \leq 0$.
\end{lemma}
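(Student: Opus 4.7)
The plan is to mirror the proof of Lemma~\ref{lem:when} line-for-line, with the parallel recursion in place of the perpendicular one. The parallel recursion from (\ref{eq:recur-fin}) reads
\begin{equation*}
w_{t+1}^\parallel = \bigl(1 + 3\eta(1 - \|w_t\|^2) + \eta \xi_t\bigr) w_t^\parallel + \beta(w_t^\parallel - w_{t-1}^\parallel),
\end{equation*}
differing from the perpendicular recursion only in that $\eta(1 - 3\|w_t\|^2)$ is replaced by $3\eta(1 - \|w_t\|^2)$. Consequently the ``zero-growth'' level of $\|w_t\|^2$ shifts from $1/3$ to $1$, and the finite-sample threshold in the statement shifts correspondingly from $\tfrac{2+c_n}{3}$ in Lemma~\ref{lem:when} to $\tfrac{4+c_n}{3}$ here.

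Without loss of generality assume $w_0^\parallel > 0$; the case $w_0^\parallel < 0$ follows by the symmetric computation with flipped signs. Let $\tau \leq t_1$ be the first iteration at which $w_{\tau+1}^\parallel \leq w_\tau^\parallel$; for every $t \leq \tau$ the sequence $\{w_t^\parallel\}$ is strictly increasing, so I write $w_t^\parallel = (1 + c_{t-1}\eta)\, w_{t-1}^\parallel$ for some $c_{t-1} > 0$, which gives $w_t^\parallel - w_{t-1}^\parallel = c_{t-1}\eta\, w_{t-1}^\parallel$. Substituting into the recursion at $t = \tau$ and grouping terms yields
\begin{equation*}
w_{\tau+1}^\parallel - w_\tau^\parallel = (1 + c_{\tau-1}\eta)\, w_{\tau-1}^\parallel\, \eta\bigl(3(1 - \|w_\tau\|^2) + \xi_\tau\bigr) + \beta c_{\tau-1}\eta\, w_{\tau-1}^\parallel.
\end{equation*}

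Requiring the left-hand side to be non-positive, dividing by $\eta\, w_{\tau-1}^\parallel > 0$, and applying the worst-case bound $\xi_\tau \leq c_n$, reduces the condition to
\begin{equation*}
(1 + c_{\tau-1}\eta)\bigl(3\|w_\tau\|^2 - 3 - c_n\bigr) \geq \beta c_{\tau-1},
\end{equation*}
equivalent to $\|w_\tau\|^2 \geq 1 + \tfrac{c_n}{3} + \tfrac{\beta c_{\tau-1}}{3(1 + c_{\tau-1}\eta)}$. Using $\beta \leq 1$ together with the ratio bound $\tfrac{c_{\tau-1}}{1 + c_{\tau-1}\eta} \leq 1$ (the same maneuver used inside the proof of Lemma~\ref{lem:when}), the right-hand side is at most $\tfrac{4+c_n}{3}$. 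Hence by the moment $\|w_t\|^2$ reaches $\tfrac{4+c_n}{3}$, the growth of $w_t^\parallel$ must already have stalled at some $\tau \leq t_1$, which is precisely the claim.

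The step I expect to be the main obstacle is justifying the bound $\tfrac{c_{\tau-1}}{1 + c_{\tau-1}\eta} \leq 1$, which really requires $c_{\tau-1}$ to stay bounded by a constant (of order $1/(1-\eta)$) uniformly in $\tau$. This is the same delicate point inherited from inside Lemma~\ref{lem:when} and is plausibly connected to the erratum flagged at the top of the paper. A clean patch would separately establish that the per-step fractional growth $c_{t-1}\eta$ remains bounded for $t \leq t_1$, for example by invoking the step-size constraint $\eta \leq \tfrac{1}{36(1+\zeta)\max\{c_m,1\}}$ together with the momentum-norm bound $\|m_t\| \leq c_m$ from Theorem~\ref{thm}, which together control the largest one-step multiplicative increase of $w_t^\parallel$.
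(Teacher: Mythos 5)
Your proposal mirrors the paper's proof of Lemma~\ref{lem:when2} step for step, but your algebra is actually more careful than the paper's, and the gap you flag is real — indeed it is almost certainly the ``bug'' the erratum alludes to. The paper expands $w_{t_1+1}^\parallel - w_{t_1}^\parallel$ with a prefactor $(1+c_{t_1-1})$ in front of $\eta\bigl(3(1-\|w_{t_1}\|^2)+\xi_{t_1}\bigr)$, even though it has just defined $w_{t_1}^\parallel = (1+c_{t_1-1}\eta)\,w_{t_1-1}^\parallel$. That mis-copied prefactor produces the term $\tfrac{c_{t_1-1}}{3(1+c_{t_1-1})}$, which is trivially $\le \tfrac{1}{3}$ for any positive $c_{t_1-1}$ — hence the clean threshold $\tfrac{4+c_n}{3}$ with no further work. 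Your correct expansion carries the $\eta$ through and yields the term $\tfrac{c_{\tau-1}}{3(1+c_{\tau-1}\eta)}$, which is bounded by $\tfrac{1}{3}$ only if $c_{\tau-1}\le \tfrac{1}{1-\eta}$. That bound is not automatic: from $\|w_\tau - w_{\tau-1}\| \le \eta c_m$ one only gets $c_{\tau-1} \le c_m / w_{\tau-1}^\parallel$, which blows up for small $w_{\tau-1}^\parallel$ and is never controlled in the paper. So the paper's proof, read literally, is broken in exactly the place you identify, and the same defect propagates back to Lemma~\ref{lem:when}.

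Your suggested patch direction — show that the per-step fractional growth $c_{t-1}\eta = (w_t^\parallel - w_{t-1}^\parallel)/w_{t-1}^\parallel$ stays bounded on $t\le t_1$ — is the right target, but the ingredients you cite ($\|m_t\|\le c_m$ and the step-size cap) are not by themselves sufficient, since they bound $w_t^\parallel - w_{t-1}^\parallel$ additively, not multiplicatively relative to $w_{t-1}^\parallel$. Establishing $c_{\tau-1} \le \tfrac{1}{1-\eta}$ would seem to require showing $w_{\tau-1}^\parallel$ is already bounded below by a constant at the time $\tau$ in question (plausible here since $\|w_\tau\|^2 \gtrsim 1$ and the perpendicular component should already be shrinking by Stage~1.2, but not argued in the paper), or else reworking the induction so that $c_{\tau-1}$ itself is tracked. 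As written, neither your proposal nor the paper's proof closes this step.
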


\begin{proof}
Recall the dynamcis
$w_{t+1}^\parallel = w_{t}^\parallel \left( 1 + 3 \eta \left(  1 - \| w_t \|^2 \right)  + \eta \xi_t     \right) + \beta ( w_{t}^\parallel - w_{t-1}^\parallel   )$.
W.l.o.g, let us consider $w_{0}^\parallel > 0$. 

Let $t_1$ be the first time that $w_{t_1+1}^\parallel < w_{t_1}^\parallel$.
Then, it must be that case that for all $t \leq t_1$, $w_{t_1}^\parallel = (1+ c_{t-1} \eta) w_{t-1}^\parallel$ for some $c_{t-1}>0$.

We have
\begin{equation}
\begin{aligned}
w_{t_1+1}^\parallel - w_{t_1}^\parallel 
= (1+c_{t_1-1}) w_{t_1-1}^\parallel \left( 3 \eta (1- \| w_{t_1} \|^2) +  \eta \xi_{t_1-1} )  \right) + w_{t_1-1}^\perp[j] \beta c_{t_1-1} \eta.
\end{aligned}
\end{equation}
So showing $w_{t_1+1}^\parallel - w_{t_1}^\parallel \leq 0$ is equivalent to showing that
\begin{equation}
(1+c_{t_1-1}) \left( 3 \eta (1- \| w_{t_1} \|^2 ) + \eta \xi_{t_1} )        \right) +  \beta c_{t_1-1} \eta \leq 0.
\end{equation}
A sufficient condition above is
\begin{equation}
(1+c_{t_1-1}) \left( 3 \eta (1- \| w_{t_1} \|^2  ) + \eta c_n   \right) +  c_{t_1-1} \eta \leq 0,
\end{equation}
which leads to
\begin{equation}
1 + \frac{c_n}{ 3} + \frac{c_{t_1-1}}{3 (1+c_{t_1-1}) } \leq \| w_{t_1}\|^2.
\end{equation}
Therefore, we can conclude that by the time that $\| w_{t_1}\|^2 \geq \frac{4 + c_n}{3} $, the magnitude of $w_{t}^\parallel$ has already started decreasing.

\end{proof}

\begin{lemma} \label{lem:m_t}
Assume that the norm of the momentum is bounded for all $t \leq T_{\zeta}$,
i.e. $\| m_t \| \leq c_m , \forall t \leq T_{\zeta}$.
Set the step size $\eta$ satisfies $\eta \leq \min \{   \frac{1}{36 \| w_{t-1} \| c_m  }   ,  \frac{1}{18 c_m}  \}$.
Then, we have that
\[
\|w_t \|^2 - \| w_{t-1} \|^2  \leq \frac{1}{9}.
\]
\end{lemma}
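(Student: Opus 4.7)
The plan is to expand $\|w_t\|^2 - \|w_{t-1}\|^2$ directly using the equivalent formulation of Heavy Ball given in Algorithm~\ref{alg:HB2}, and then bound each resulting term using Cauchy--Schwarz together with the two step-size conditions.

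\textbf{Step 1: Rewrite the one-step difference.} From Algorithm~\ref{alg:HB2} we have the identity $w_t = w_{t-1} - \eta m_{t-1}$, so that
\[
\|w_t\|^2 - \|w_{t-1}\|^2
= (w_t - w_{t-1})^\top (w_t + w_{t-1})
= -\eta\, m_{t-1}^\top \bigl(2 w_{t-1} - \eta m_{t-1}\bigr).
\]
Expanding yields
\[
\|w_t\|^2 - \|w_{t-1}\|^2
= -2\eta\, m_{t-1}^\top w_{t-1} + \eta^2 \|m_{t-1}\|^2.
\]

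\textbf{Step 2: Apply Cauchy--Schwarz and the momentum bound.} Using $|m_{t-1}^\top w_{t-1}| \leq \|m_{t-1}\|\,\|w_{t-1}\| \leq c_m \|w_{t-1}\|$ and $\|m_{t-1}\|^2 \leq c_m^2$, we obtain
\[
\|w_t\|^2 - \|w_{t-1}\|^2 \leq 2\eta\, c_m \|w_{t-1}\| + \eta^2 c_m^2.
\]

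\textbf{Step 3: Use the two step-size conditions.} The first bound $\eta \leq \tfrac{1}{36 \|w_{t-1}\| c_m}$ controls the linear-in-$\eta$ term via $2\eta c_m \|w_{t-1}\| \leq \tfrac{1}{18}$, while the second bound $\eta \leq \tfrac{1}{18 c_m}$ controls the quadratic term via $\eta^2 c_m^2 \leq \tfrac{1}{324}$. Adding these,
\[
\|w_t\|^2 - \|w_{t-1}\|^2 \leq \tfrac{1}{18} + \tfrac{1}{324} = \tfrac{19}{324} \leq \tfrac{1}{9},
\]
which is the desired conclusion.

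There is essentially no hard step here: the proof is a one-line algebraic expansion followed by Cauchy--Schwarz and plugging in the step-size hypotheses. The only mild subtlety is that the first step-size condition depends on $\|w_{t-1}\|$, which is fine because the lemma is being applied in contexts (such as the proofs of Lemma~\ref{lem:when} and Lemma~\ref{lem:when2}) where $\|w_{t-1}\|$ is already known to be bounded by a constant at the critical time.
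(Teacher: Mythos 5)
Your proof is correct and follows exactly the same route as the paper's own proof: expand $\|w_t\|^2 - \|w_{t-1}\|^2$ via $w_t = w_{t-1} - \eta m_{t-1}$, apply Cauchy--Schwarz with the momentum bound, and invoke the two step-size conditions. The only difference is that you make the final arithmetic ($\tfrac{1}{18} + \tfrac{1}{324} = \tfrac{19}{324} \leq \tfrac{1}{9}$) explicit where the paper leaves it implicit.
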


\begin{proof}
To see this, we will use the alternative presentation Algorithm~\ref{alg:HB2}, which shows that $w_t = w_{t-1} - \eta m_{t-1}$, where the momentum $m_{t-1}$ stands for the weighted sum of gradients up to (and including) iteration $t-1$,
i.e. $m_{t-1} = \sum_{s=0}^{t-1} \beta^{t-1-s} \nabla f(w_s)$.
Using the expression,
we can expand $\|w_t \|^2 - \| w_{t-1} \|^2$ as
\begin{equation}  \label{q:1}
\begin{split}
& \|w_t \|^2 - \| w_{t-1} \|^2 
= \| w_{t-1} -\eta m_{t-1} \|^2 - \| w_{t-1} \|^2
= - 2 \eta \langle w_{t-1}, m_{t-1} \rangle + \eta^2 \| m_{t-1} \|^2 
\\ & \leq 2 \eta \| w_{t-1} \| \| m_{t-1} \| + \eta^2 \| m_{t-1} \|^2 
\leq \frac{1}{9}.
\end{split}
\end{equation}
where the last inequality holds if
$\eta \leq \min \{   \frac{1}{36 \| w_{t-1} \| c_m  }   ,  \frac{1}{18 c_m}  \}$.

\end{proof}
\begin{lemma} \label{lem:spec}
Fix an index $j$. 
Suppose that 
$ \frac{4+c_n}{3} \geq \|w_{t} \|^2 \geq \frac{2+c_n}{3}$ for all $t \geq t_0$.
Fix a time $\tau \geq t_0$.
If for a number $R>0$, we have that
$\|
\begin{bmatrix}
w_{\tau}^\perp[j] \\ w_{\tau-1}^\perp[j] 
\end{bmatrix}
\| \leq R
$, then $\|
\begin{bmatrix}
w_{t+1}^\perp[j] \\ w_{t}^\perp[j] 
\end{bmatrix}
\| \leq (1+\epsilon) R$ for any sufficiently large $t-\tau$ and any sufficiently small $\epsilon > 0$.
\end{lemma}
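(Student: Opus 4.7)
\textbf{Proof plan for Lemma~\ref{lem:spec}.} My plan is to recast the scalar recursion for $w_t^\perp[j]$ as a two-dimensional time-inhomogeneous linear system and bound its trajectory in a common diagonalizing basis.

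Let $x_t := (w_t^\perp[j],\, w_{t-1}^\perp[j])^\top$. The Heavy Ball update rewrites as $x_{t+1} = M_t x_t$ with
\[
M_t := \begin{pmatrix} 1+\eta\lambda_t+\beta & -\beta \\ 1 & 0 \end{pmatrix}, \qquad \lambda_t := 1-3\|w_t\|^2 + \rho_{t,j}.
\]
The hypothesis $(2+c_n)/3 \leq \|w_t\|^2 \leq (4+c_n)/3$ together with $|\rho_{t,j}| \leq c_n$ pins $\lambda_t \in [-3-2c_n,-1]$, so the admissible family $\{M_t\}$ is compact and sits in a small neighbourhood of the unperturbed matrix $M_0 := \begin{pmatrix} 1+\beta & -\beta \\ 1 & 0 \end{pmatrix}$, whose eigenvalues are $1$ and $\beta$ with diagonalizing basis $V := \begin{pmatrix} 1 & \beta \\ 1 & 1 \end{pmatrix}$.

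Next I would check the spectrum of each $M_t$. The characteristic polynomial $z^2 - (1+\eta\lambda_t+\beta)z + \beta$ has roots whose product is $\beta$ and whose sum is $1+\eta\lambda_t+\beta$. A short case split on the discriminant (complex roots of modulus $\sqrt{\beta}$, or real roots verified to lie in $[-1,1]$ by Jury's criterion) shows that both roots have modulus at most $1$ whenever $\lambda_t \leq 0$ and $\eta|\lambda_t| \leq 2(1+\beta)$, which is implied by the step-size choice in Theorem~\ref{thm}. A first-order perturbation calculation from $M_0$ sharpens this to $z_+(t) = 1 - \eta|\lambda_t|/(1-\beta) + O(\eta^2)$ and $z_-(t) = \beta(1+\eta|\lambda_t|/(1-\beta)) + O(\eta^2)$, so for $\eta$ sufficiently small and $\beta<1$ (the regime of Stage~1.3) both lie strictly inside the unit disk, uniformly in $\lambda_t \in [-3-2c_n,-1]$.

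I would then pass to the $V$-basis. Writing $y_t := V^{-1}x_t$, direct computation gives $V^{-1}M_tV$ as a small off-diagonal perturbation of $\mathrm{diag}(z_+(t),z_-(t))$, with off-diagonal entries of magnitude $O(\eta/(1-\beta))$. Because the diagonal gap $|1-\beta|$ is of order one and the off-diagonal entries are $O(\eta)$, a standard perturbation-of-diagonal argument gives $\|V^{-1}M_tV\| \leq 1 - c\eta$ for some $c = c(\beta,c_n) > 0$, uniformly in $\lambda_t \in [-3-2c_n,-1]$. Iterating and converting back through $\|x\| \leq \|V\|\,\|y\|$ and $\|y_\tau\| \leq \|V^{-1}\|\, R$ yields
\[
\|x_{t+1}\| \;\leq\; \kappa(V)\,(1-c\eta)^{t+1-\tau}\, R.
\]
Since $\kappa(V) = O(1/(1-\beta))$ is a fixed finite constant, the right-hand side falls below $(1+\epsilon)R$ as soon as $t - \tau \geq \log(\kappa(V)/(1+\epsilon)) / (c\eta)$, matching the ``sufficiently large $t-\tau$'' clause of the lemma.

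The main technical obstacle is the uniform contraction estimate $\|V^{-1}M_tV\| \leq 1-c\eta$ across the whole compact family: it requires a careful $2\times 2$ calculation of eigenvalues and singular values of the perturbed matrix, checking that neither the drift of $z_-(t)$ above $\beta$ nor the off-diagonal eigenvector perturbation spoils the strict contraction inherited from $z_+(t) < 1$. The stability of the gap $1-\beta$ and the smallness of $\eta$ are the two ingredients that make the uniform bound go through; once it is in hand, the change of basis and the geometric summation are routine.
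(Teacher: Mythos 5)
Your approach differs substantively from the paper's, and in fact it is the more rigorous route. The paper's proof introduces two auxiliary sequences $\overline{z}_t$ and $\underline{z}_t$ evolving under the \emph{constant} values $\overline{\lambda}$ and $\underline{\lambda}$, and applies Gelfand's formula $\|M^n\|_2 \leq (\rho(M)+\epsilon_n)^n$ to the powers of the two fixed companion matrices. But the paper never establishes that the actual time-varying trajectory $(w_t^\perp[j],w_{t-1}^\perp[j])$ is dominated by these auxiliary sequences, and such domination is not automatic: a product $M_{t}\cdots M_\tau$ of matrices, each with spectral radius at most one, can have norm growing without bound (e.g., alternating $\begin{psmallmatrix}1 & 1\\ 0&1\end{psmallmatrix}$ and its transpose). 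You sidestep this gap by working with the genuine time-inhomogeneous product and seeking a single similarity transform $V$ under which \emph{every} $M_t$ in the admissible family is a contraction — precisely the joint-Lyapunov-function idea needed to make a bound for time-varying systems correct. That difference is real, and it is the right difference.

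That said, the step you flag as your ``main technical obstacle'' is indeed the crux, and a loose perturbation-of-diagonal argument does \emph{not} suffice. Writing $\mu := \eta|\lambda_t|/(1-\beta)$, one finds $V^{-1}M_tV = \begin{psmallmatrix} 1-\mu & -\beta\mu \\ \mu & \beta(1+\mu)\end{psmallmatrix}$; the naive bound $\|D+E\|_2 \le \max(|a|,|d|) + \max(|b|,|c|)$ gives $\le 1 + O(\eta)$, which is \emph{larger} than one, because the off-diagonal leakage is of the same order as the favorable drift in the top-left entry. One must actually compute the Gram matrix: $N^\top N$ has determinant exactly $\beta^2$ and trace $1+\beta^2 - 2\mu(1-\beta^2) + O(\mu^2)$, giving $\|N\|_2^2 = 1 - 2(1-\beta^2)\mu + O(\mu^2)$, which is strictly below one for $\mu$ small enough — i.e., for $\eta$ small relative to $1-\beta$. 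So your claimed contraction $\|V^{-1}M_tV\|_2 \le 1-c\eta$ is correct, but it hinges on the special cancellation $\det(N^\top N)=\beta^2$ coming from the companion-matrix structure, not on any generic perturbation lemma. If you carry out that singular-value computation explicitly and state the $\eta \lesssim 1-\beta$ requirement, the proof is complete and, in my view, sounder than the one in the appendix.
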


\begin{proof}
Recall the dynamics
$w_{t+1}^\perp[j] = w_{t}^\perp[j] \left( 1 + \eta \left(  1 - 3 \| w_t \|^2  + \rho_{t,j}  \right)    \right) + \beta ( w_{t}^\perp[j] - w_{t-1}^\perp[j]   )$.
Denote $\lambda_t:=  \eta \left(  1 - 3 \| w_t \|^2  + \rho_{t,j} \right) $
and $\overline{\lambda}:= - \frac{\eta}{3}$
and $\underline{\lambda}:= \eta( 1 - 3 ( \frac{4+c_n}{3} ) - c_n ) = - \eta ( 3+ 2 c_n)$.
Note that $\overline{\lambda} \geq \lambda_t \geq \underline{\lambda}$ when
$\frac{2+c_n}{3} \leq \| w_t \|^2 \leq \frac{4+c_n}{3}  $.

Let us consider 
$\overline{z}_{t+1} = \overline{z}_t \left( 1 + \eta \overline{\lambda}_t  \right) + \beta ( \overline{z}_t - \overline{z}_{t-1}   )$
and 
$\underline{z}_{t+1} = \underline{z}_t \left( 1 + \eta \underline{\lambda}_t  \right) + \beta ( \underline{z}_t - \underline{z}_{t-1}   )$.
Let $\overline{z}_{\tau} = \underline{z}_{\tau} = w_{\tau}^\perp[j]$
and $\overline{z}_{\tau-1} = \underline{z}_{\tau-1} = w_{\tau-1}^\perp[j]$.
We have
\begin{equation}
\begin{split}
\| \begin{bmatrix} \overline{z}_{t+1} \\ \overline{z}_{t} \end{bmatrix} \|
& \leq 
\|
\begin{bmatrix}
1 + \overline{\lambda} + \beta & - \beta \\ 1  & 0 
\end{bmatrix}^{t+1-\tau}
\|_2 \cdot
\|
\begin{bmatrix}
\overline{z}_{\tau} \\ \overline{z}_{\tau-1} 
\end{bmatrix}
\|
\\ &
\leq 
\big( 
\rho( 
\begin{bmatrix}
1 + \overline{\lambda} + \beta & - \beta \\ 1  & 0 
\end{bmatrix}) + \epsilon_{t+1-\tau} \big)^{t+1-\tau}
\|
\begin{bmatrix}
\overline{z}_{\tau} \\ \overline{z}_{\tau-1} 
\end{bmatrix}
\|
\end{split}
\end{equation}
and
\begin{equation}
\begin{split}
\| \begin{bmatrix} \underline{z}_{t+1} \\ \underline{z}_{t} \end{bmatrix} \|
& \leq 
\|
\begin{bmatrix}
1 + \underline{\lambda} + \beta & - \beta \\ 1  & 0 
\end{bmatrix}^{t+1-\tau}
\|_2 \cdot
\|
\begin{bmatrix}
\underline{z}_{\tau} \\ \underline{z}_{\tau-1} 
\end{bmatrix}
\|
\\ & \leq
\big( 
\rho( 
\begin{bmatrix}
1 + \underline{\lambda} + \beta & - \beta \\ 1  & 0 
\end{bmatrix}) + \epsilon_{t+1-\tau} \big)^{t+1-\tau}
\|
\begin{bmatrix}
\overline{z}_{\tau} \\ \overline{z}_{\tau-1} 
\end{bmatrix}
\|,
\end{split}
\end{equation}
where the last inequality we denote the spectral radius $\rho(A):= \max \{ |\lambda_{\max}(A)|, |\lambda_{\min}(A)| \}$ for an underlying matrix $A$
and $\{\epsilon_t\} \geq 0$ is a sequence of numbers that converges to $0$ as $t \rightarrow \infty$ by the Gelfand's formula. 

Let us show that the spectral radius of the matrix 
$
\begin{bmatrix}
1 + \underline{\lambda} + \beta & - \beta \\ 1  & 0 
\end{bmatrix}
 $ is not greater than one. 
Note that the roots of the characteristic equation of the matrix, $z^2 - (1+ \underline{\lambda}+\beta)z + \beta$ are 
$\frac{ \big( 1+ \underline{\lambda}+\beta \big) \pm \sqrt{ (1 + \underline{\lambda} +\beta)^2 - 4 \beta  } }{2}$.
If the roots are complex conjugate, then the magnitude of the roots is at most $\sqrt{\beta}$; consequently the spectral norm is at most $\sqrt{\beta} \leq 1$. On the other hand, if the roots are real, to show that the larger root is not larger than $1$, it suffices to show that
 $\sqrt{ (1 + \underline{\lambda} +\beta)^2 - 4 \beta  } \leq 1 - \underline{\lambda} - \beta$, which is guaranteed as $\underline{\lambda} \leq 0$.
To show that the smaller root is not greater than $-1$,
we need to show that 
  $\sqrt{ (1+ \underline{\lambda} +\beta)^2 - 4 \beta  } \leq 3 + \underline{\lambda} + \beta$,
  which is guaranteed if $\underline{\lambda} \geq -1$.
The choice of the step size guarantees that $\underline{\lambda} \geq -1$.

Similar analysis can be conducted for showing
the spectral radius of the matrix 
$
\begin{bmatrix}
1 + \overline{\lambda} + \beta & - \beta \\ 1  & 0 
\end{bmatrix}
 $ is not greater than one.

\end{proof}

\clearpage

\section{Proof of Theorem~\ref{lem:0d}} \label{app:lem:0d}

To prove Theorem~\ref{lem:0d}, we will need the following lemma.

\begin{lemma} \label{lem:signnew}
Fix any number $\delta > 0$. 
Define $ T_{\delta} := \min \{t: \rho \| w_{t+1} \| \geq \gamma - \delta \}$.
Assume that
$\eta \leq \frac{1}{ \| A \|_2 + \rho \| w_* \| }$.
Suppose that $w_0^{(1)} b^{(1)} \leq 0$. Then,
we have that
$w_t^{(1)} b^{(1)} \leq 0,$
for all $0 \leq t \leq T_{\delta}$.
\end{lemma}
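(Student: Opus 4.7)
The plan is to reuse the change of variables that appears in the proof sketch of Theorem~\ref{lem:0d} and establish a simple monotonicity invariant for the normalized projection. First I would dispose of the trivial case $b^{(1)} = 0$, where the conclusion $w_t^{(1)} b^{(1)} = 0 \leq 0$ is automatic, and so assume $b^{(1)} \neq 0$. Setting $a_t := w_t^{(1)}/(-\eta b^{(1)})$, the inequality $a_t \geq 0$ is equivalent to $w_t^{(1)} b^{(1)} \leq 0$ regardless of the sign of $b^{(1)}$ (the sign flips cancel), so the lemma reduces to showing $a_t \geq 0$ for every $0 \leq t \leq T_{\delta}$.

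Next I would project the Heavy Ball update of Section~\ref{sec:cubic} onto the eigenvector $v_1$ associated with $\lambda^{(1)}(A) = -\gamma$ to obtain $w_{t+1}^{(1)} = (1 + \eta\gamma - \rho\eta\|w_t\|)\, w_t^{(1)} - \eta b^{(1)} + \beta(w_t^{(1)} - w_{t-1}^{(1)})$, and divide through by $-\eta b^{(1)}$ to arrive at the recursion $a_{t+1} = (1 + \eta\gamma - \rho\eta\|w_t\|)\, a_t + 1 + \beta(a_t - a_{t-1})$, exactly as in equation~(\ref{eq:wd0-main}). By the definition of $T_{\delta}$, for every $t \leq T_{\delta}$ we have $\rho\|w_t\| \leq \gamma - \delta$, so the coefficient multiplying $a_t$ is bounded below by $1 + \eta\delta \geq 1$.

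The heart of the argument is then a joint induction on two invariants at each index $t$: (i) $a_t \geq 0$ and (ii) $a_t \geq a_{t-1}$. The base case is furnished by the hypothesis $w_0^{(1)} b^{(1)} \leq 0$, which gives $a_0 \geq 0$, together with the algorithmic convention $w_{-1} = w_0$ (equivalently $m_{-1} = 0$ in Algorithm~\ref{alg:HB2}), which gives $a_0 = a_{-1}$. For the inductive step, assuming both invariants hold at index $t$, the recursion yields $a_{t+1} - a_t = (\eta\gamma - \rho\eta\|w_t\|)\, a_t + 1 + \beta(a_t - a_{t-1}) \geq \eta\delta \cdot a_t + 1 \geq 1 > 0$, which simultaneously preserves nonnegativity and monotonicity. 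Iterating up to $t = T_{\delta}$ yields $a_t \geq 0$, hence $w_t^{(1)} b^{(1)} \leq 0$, throughout the first stage.

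The only mild subtlety, and the one place to be careful, is that the monotonicity invariant $a_t \geq a_{t-1}$ must be carried alongside nonnegativity: without it, the momentum correction $\beta(a_t - a_{t-1})$ in the recursion could in principle fight against the sign we are trying to preserve. The constant $+1$ arising from the normalization of $-\eta b^{(1)}$ is precisely what makes this monotonicity self-sustaining, and it is also why the step size bound $\eta \leq 1/(\|A\|_2 + \rho\|w_*\|)$ plays no role in this particular sign-preservation statement beyond ensuring the iterates remain well-defined in the ambient theorem.
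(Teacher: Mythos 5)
Your proof is correct and follows essentially the same route as the paper's: project the update onto $v_1$, and run a joint induction that tracks both sign preservation and a monotonicity invariant. Where the paper phrases the second invariant as $(w_t^{(1)} - w_{t-1}^{(1)})\,b^{(1)} \leq -c_{bw}\,w_t^{(1)}b^{(1)}$ for a parameter $c_{bw} \geq 0$ and then specializes to $c_{bw}=0$, you state it directly as $a_t \geq a_{t-1}$ in the normalized coordinates; these are the same invariant. Your remark that the step-size bound is not actually needed here is correct and is an improvement on the paper's presentation: the coefficient $1 + \eta\gamma - \rho\eta\|w_t\|$ is automatically at least $1 + \eta\delta > 0$ whenever $\rho\|w_t\| \leq \gamma - \delta$, so the inequality the paper writes as requiring $\eta \leq 1/(\lambda^{(1)}(A) + \rho\|w_t\|)$ is in fact vacuous in the relevant regime (the denominator is negative), and the stated hypothesis $\eta \leq 1/(\|A\|_2 + \rho\|w_*\|)$ plays no role in this lemma. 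One small addition worth making explicit, which the paper does and you omit: dispatch the case $\gamma \leq 0$ at the outset (there the threshold $\gamma - \delta$ is negative, so $T_\delta$ is $0$ and the claim is immediate), so that the subsequent argument can assume $\gamma > 0$ and hence $\lambda^{(1)}(A) = -\gamma$.
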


\begin{proof}
The lemma holds trivially when $\gamma \leq 0$, so let us assume $\gamma > 0$.

Recall the Heavy Ball generates the iterates as
\[
w_{t+1}^{(1)} =(1- \eta \lambda^{(1)} (A) - \rho \eta \|w_t \| ) w_t^{(1)} - \eta b^{(1)} + \beta ( w_t^{(1)} - w_{t-1}^{(1)} ).
\]
We are going to show that for all $t$,
\begin{equation} \label{eq:p0new}
w_t^{(1)} b^{(1)} \leq 0 \text{ and }  ( w_t^{(1)} - w_{t-1}^{(1)} ) b^{(1)} \leq 
- c_{bw} w_t^{(1)} b^{(1)}, 
\end{equation}
for any constant $c_{bw} \geq 0$. 
The initialization guarantees that $w_0^{(1)} b^{(1)}  \leq 0$
and that
$( w_0^{(1)} - w_{-1}^{(1)} ) b^{(1)} = 0 \leq - c_{bw} w_0^{(1)} b^{(1)}$.
Suppose that (\ref{eq:p0new}) is true at iteration $t$.
Consider iteration $t+1$.
\begin{equation} \label{eq:p1new}
\begin{split}
& 
w_{t+1}^{(1)} b^{(1)} =(1- \eta \lambda^{(1)} (A) - \rho \eta \|w_t \| ) w_t^{(1)} b^{(1)}  - \eta (b^{(1)})^2 + \beta ( w_t^{(1)} - w_{t-1}^{(1)} ) b^{(1)}  
\\ & 
\leq (1- \eta \lambda^{(1)} (A) - \rho \eta \|w_t \| - \beta c_{bw} ) w_t^{(1)} b^{(1)}  - \eta (b^{(1)})^2 
\\ & \leq  0,
\end{split}
\end{equation}
where the first inequality is by induction at iteration $t$ 
and the second one is true if $(1- \eta \lambda^{(1)} A - \rho \eta \|w_t \| - \beta c_{bw} ) \geq 0$, which gives a constraints about $\eta$,
\begin{equation} \label{up:cbwnew}
 1- \eta \lambda^{(1)} (A) - \rho \eta \|w_t \| \geq c_{bw}.
\end{equation}

Now let us switch to show that $( w_{t+1}^{(1)} - w_{t}^{(1)} ) b^{(1)} \leq - c_{bw} w_{t+1}^{(1)} b^{(1)}$, which is equivalent to showing that $ w_{t+1}^{(1)} b^{(1)} \leq \frac{1}{1 + c_{bw} } w_{t}^{(1)} b^{(1)}$. 
From (\ref{eq:p1new}), it suffices to show that
\begin{equation}
(1- \eta \lambda^{(1)} (A) - \rho \eta \|w_t \| - \beta c_{bw} ) w_t^{(1)} b^{(1)}  - \eta (b^{(1)})^2 \leq \frac{1}{1 + c_{bw} } w_{t}^{(1)} b^{(1)}.
\end{equation}
Since $w_{t}^{(1)} b^{(1)} \leq 0$,
a sufficient condition of the above inequality is
\begin{equation}
1- \eta \lambda^{(1)} (A) - \rho \eta \|w_t \| - \beta c_{bw} - \frac{1}{1 + c_{bw} } \geq 0.
\end{equation}
Now using that $\rho \| w_t \| \leq \gamma - \delta$
and that $\frac{1}{1+x} \leq 1 - \frac{1}{2} x$ for $x \in [0,1]$.
It suffices to have that
\begin{equation}
1- \eta \lambda^{(1)} (A) - \eta (\gamma - \delta) - \beta c_{bw} - \frac{1}{1 + c_{bw} } 
\geq 1 + \eta \delta - \beta c_{bw} - 1 + \frac{1}{2} c_{bw}
\geq \eta \delta - \frac{1}{2} c_{bw} \geq 0.
\end{equation}
By setting $c_{bw} = 0$, we have that the inequality is satisfied.
Substituting $c_{bw} = 0$ to (\ref{up:cbwnew}), we have that 
\[\eta \leq \frac{1}{ \lambda^{(1)}(A) + \rho \| w_t \| },
\]
Recall that $\rho \| w_t \| \leq \gamma - \delta$ for all $t \leq T_{\delta}$. 
So we have that $\eta \leq \frac{1}{ \| A \|_2 + \big( \gamma - \delta \big) \cdot \mathbbm{1}_{\gamma - \delta \geq 0}   }$.
Furthermore, by
using that $\rho \| w_* \| > \gamma$, it suffices to have that
$\eta \leq \frac{1}{ \| A \|_2 + \rho \| w_* \| }$.
We have completed the proof.

\end{proof}

Given Lemma~\ref{lem:signnew}, we are ready for proving Theorem~\ref{lem:0d}.
\begin{proof} (of Theorem~\ref{lem:0d}) 
The lemma holds trivially when $\gamma \leq 0$, so let us assume $\gamma > 0$.
Recall the notation that $w^{(1)}$ represents the projection of $w$ on the eigenvector $v_1$ of the least eigenvalue $\lambda^{(1)}(A)$, i.e. $w^{(1)} = \langle w , v_1 \rangle$. 
From the update rule,
we have that
\begin{equation} \label{eq:wd0}
\begin{split}
\textstyle
\frac{ w_{t+1}^{(1)} }{ - \eta b^{(1)} }
= ( I_d + \eta \gamma - \rho \eta \| w_t \| ) \frac{ w_{t}^{(1)} }{ - \eta b^{(1)} }
+ 1 + \frac{ \beta (  w_t^{(1)} - w_{t-1}^{(1)} ) }{ - \eta b^{(1)} }.
\end{split}
\end{equation}
Denote
$ \textstyle
a_t  :=  \frac{ w_{t}^{(1)} }{ - \eta b^{(1)} }$.
We can rewrite (\ref{eq:wd0}) as 
\begin{equation} \label{eq:wd01}
\begin{split}
\textstyle a_{t+1}  \textstyle = ( 1 + \eta \gamma - \rho \eta \| w_t \| ) a_t + 1 + \beta ( a_t - a_{t-1} ) 
\textstyle  \geq ( 1 + \eta \delta ) a_t + 1 + \beta ( a_t - a_{t-1} ),
\end{split}
\end{equation}
where the inequality is due to that $\rho \| w_t \| \leq \gamma - \delta $ for $t \leq T_{\delta}$.
Now we are going to show that, 
$\textstyle a_{t+1} \geq (1 + \eta \delta + \frac{ \eta \delta}{1 + \eta \delta} \beta) a_t + 1.$
For the above inequality to hold, it suffices to show that $a_t - a_{t-1} \geq \frac{ \eta \delta}{1 + \eta \delta} a_t$. That is,
$a_t \geq \frac{1}{1 - \eta \delta / (1 + \eta \delta) } a_{t-1}$. 
The base case $t=1$ holds because
$\textstyle a_1 \geq (1 + \eta \delta ) a_0 + 1 \geq \frac{1}{1 - \eta \delta / (1+\eta \delta)} a_0.$
Suppose that at iteration $t$, we have that $a_t \geq \frac{1}{1 - \eta \delta / (1+\eta \delta) } a_{t-1}$. Consider $t+1$,
we have that
$\textstyle a_{t+1}  \textstyle \geq (1 + \eta \delta ) a_t + 1 + \beta (a_t - a_{t-1}) 
 \textstyle \geq (1 + \eta \delta ) a_t + 1 
\geq \frac{1}{1 - \eta \delta / (1+\eta \delta)} a_t,$
where the second to last inequality is because $a_t \geq \frac{1}{1 - \eta \delta/ (1+\eta \delta) } a_{t-1} $ implies $a_t \geq a_{t-1}$. Therefore, we have completed the induction.
So we have shown that 
$\textstyle \frac{ w_{t+1}^{(1)} }{ - \eta b^{(1)} }
 = ( I_d + \eta \gamma - \rho \eta \| w_t \| ) \frac{ w_{t}^{(1)} }{ - \eta b^{(1)} }
+ 1 + \frac{ \beta (  w_t^{(1)} - w_{t-1}^{(1)} ) }{ - \eta b^{(1)} }
 \textstyle \geq 
( 1 + \eta \delta + \frac{\eta \delta }{1 + \eta \delta} \beta ) \frac{ w_{t}^{(1)} }{ - \eta b^{(1)} }
+ 1 .$
Recursively expanding the inequality, we have that
\begin{equation} \label{eq:grow0}
\begin{split}
\textstyle \frac{ w_{t+1}^{(1)} }{ - \eta b^{(1)} }
 \textstyle \geq & \textstyle
( 1 + \eta \delta + \frac{\eta \delta }{1 + \eta \delta} \beta ) \frac{ w_{t}^{(1)} }{ - \eta b^{(1)} } + 1 
 \textstyle \geq   \textstyle (1+\eta \delta + \frac{\eta \delta }{1 + \eta \delta} \beta)^2 \frac{ w_{t-1}^{(1)} }{ - \eta b^{(1)} } + (1 + \eta \delta + \frac{\eta \delta }{1 + \eta \delta} \beta ) + 1
\\ \textstyle \geq  & \textstyle \dots 
 \textstyle \geq   \textstyle \frac{1}{\eta \delta ( 1 + \frac{\beta }{1 + \eta \delta} )  } \big( (1+\eta \delta + \frac{\eta \delta }{1 + \eta \delta} \beta )^t -1 \big).
\end{split}
\end{equation}
Therefore,
$ \textstyle \frac{\gamma - \delta}{ \rho }  \textstyle 
 \overset{(a)}{ \geq} \| w_{T_{\delta}} \| \geq | w_{T_{\delta}}^{(1)} | 
 \textstyle \overset{ (b) }{\geq} \frac{ | b^{(1)}| }{ \delta + \delta \beta / (1+\eta \delta) } \big( (1+\eta \delta + \frac{\eta \delta }{1 + \eta \delta}  \beta)^{T_{\delta}}  - 1 \big)$
where (a) uses that for $t \leq T_{\delta}$, $\rho \| w_t \| \leq \gamma - \delta$,
and (b) uses (\ref{eq:grow0}) and Lemma~\ref{lem:signnew} that $w_t^{(1)} b^{(1)} \leq 0$.
Consequently,
\begin{equation}
\begin{split}
\textstyle
T_{\delta} 
& \textstyle \leq \frac{ \log \big( 1 + \frac{ (\gamma - \delta) ( \delta + \delta \beta / (1+\eta \delta)) }{ \rho | b^{(1)} | } \big)   }{
  \log(  1 + \eta \delta + \frac{\eta \delta }{1 + \eta \delta } \beta )
}
\textstyle 
\overset{(a)}{\leq} \frac{2}{ \eta \delta ( 1 + \beta / ( 1+ \eta \delta) ) } \log \big( 1 + \frac{ (\gamma - \delta) \delta  (1 +\beta/ (1+\eta \delta)) }{ \rho | b^{(1)} | } \big)
\\ & 
\textstyle
\overset{(b)}{\leq} \frac{2}{ \eta \delta ( 1 + \beta / ( 1+ \eta \delta) ) } \log \big( 1 +
\frac{ \gamma_+^2  (1 +\beta/ (1+\eta \delta))  }{4 \rho | b^{(1)} |} \big),
\end{split}
\end{equation}
where (a) uses that $\log ( 1 + x ) \geq \frac{x}{2}$ for any $x \in [0, \sim 2.51]$,
and (b) uses $\gamma \delta - \delta^2 \leq \frac{ \gamma_+^2}{4}$.
\end{proof}

\section{Convergence of HB for the cubic-regularized problem} \label{app:cubic}

\begin{theorem}  \label{thm:final}
Assume that the iterate stays in the benign region that exhibits one-point strong convexity to $w_*$, i.e.
$\mathbbm{B}:= \{ w \in \reals^d: \rho \| w \| \geq \gamma - \delta  \}$ for a number $\delta> 0$,
 once it enters the benign region.
Denote 
$c^{converge}:= 1 - \frac{2 \tilde{c}_0 }{ 1 - 2 \eta c_{\delta} \big( \rho \| w_* \| - \gamma \big) }$, where $c_{\delta}$ and $\tilde{c}_0$ are some constants that satisfy $0.5 > c_{\delta} > 0$ and $\tilde{c}_0 > 0$.
Suppose that there is a number $R$, such that the size of the iterate satisfies $\| w_t \| \leq R$ for all $t$ during the execution of the algorithm and that $\| w_* \| \leq R$.
Denote $L:=\| A \|_2 + 2 \rho R$.
Also, suppose that the momentum parameter $\beta \in [0, 0.65]$
and that the step size $\eta$ satisfies
$\eta \leq \min \big(  \frac{1}{4 (\| A_* \| + 26 \rho R) },  \frac{ \tilde{c}_0 c_{\delta} \big( \rho \| w_* \| - \gamma \big) }{ (50 L+ 2 (26)^2 )( \| A_* \| + \rho R  )( 1 +  \| A_* \| + \rho R ) + 1.3 L  }  \big)$.
Set $w_0= w_{-1} = - r \frac{b}{\| b \|}$ for any sufficiently small $r>0$.
Then, in the benign region it takes at most
\[
\textstyle
t \lesssim \hat{T}:= \frac{1}{ \eta c_{\delta} ( \rho \| w_* \| - \gamma ) ( 1 + \highlight[myyellow]{ \beta c^{converge} }   )   } 
\log ( \frac{ (\| A \|_2 + 2 \rho R)  \tilde{c} }{ 2 \epsilon}  )
\]
number of iterations to reach an $\epsilon$-approximate error, where 
$\tilde{c} := 4 R^2 (1 + \eta \tilde{C} )$ with
$\tilde{C} = \frac{4}{3} \tilde{c}_0 ( \rho \| w_* \| - \gamma )
+ 3  \big( \tilde{c}_0 c_{\delta} ( \rho \| w_* \| - \gamma ) + 10 \max\{0, \gamma \}\big)$.
\end{theorem}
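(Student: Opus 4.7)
The plan is to establish a one-step contraction inequality for a Lyapunov function of the form $V_t := \|w_t - w_*\|^2 + \tilde{c}_0 \|w_t - w_{t-1}\|^2$ and then iterate. The analysis leans on three ingredients that hold throughout thanks to the standing assumption that the iterate remains in the benign region $\mathbbm{B}$: \emph{(i)} the one-point strong convexity $\langle w_t - w_*, \nabla f(w_t)\rangle \geq c_\delta(\rho\|w_*\| - \gamma)\|w_t - w_*\|^2$ derived as in (\ref{eq:wdotg}); \emph{(ii)} a local gradient bound $\|\nabla f(w_t)\| \leq L\|w_t - w_*\|$ with $L = \|A\|_2 + 2\rho R$, obtained from the explicit form $\nabla f(w) = A_*(w - w_*) - \rho(\|w_*\| - \|w\|)w$ together with $\|w_t\|,\|w_*\| \leq R$; and \emph{(iii)} the momentum representation $w_t - w_{t-1} = -\eta m_{t-1} = -\eta \sum_{s \leq t-1}\beta^{t-1-s}\nabla f(w_s)$ from Algorithm~\ref{alg:HB2}.

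First I would expand $\|w_{t+1} - w_*\|^2$ using the HB update, and apply the identity $2\langle u,z\rangle = \|u\|^2 + \|z\|^2 - \|u-z\|^2$ with $u = w_t - w_*$ and $z = w_t - w_{t-1}$, which yields
\[
\|w_{t+1} - w_*\|^2 + \beta\|w_{t-1} - w_*\|^2 = (1+\beta)\|w_t - w_*\|^2 + \beta(1+\beta)\|w_t - w_{t-1}\|^2 + \eta^2 \|\nabla f(w_t)\|^2 - 2\eta\langle w_t - w_*, \nabla f(w_t)\rangle - 2\eta\beta \langle w_t - w_{t-1}, \nabla f(w_t)\rangle.
\]
The central inner product is bounded using \emph{(i)}, contributing a contraction factor $1 - 2\eta c_\delta(\rho\|w_*\|-\gamma)$ on $\|w_t - w_*\|^2$, while $\eta^2\|\nabla f(w_t)\|^2$ is absorbed by the smallness of $\eta$ via \emph{(ii)}. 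The key step is the momentum cross-term: plugging in \emph{(iii)} and using that the past-gradient average $m_{t-1}$ is positively aligned with $\nabla f(w_t)$ --- which follows from one-point strong convexity together with an inductive hypothesis that $V_s$ was decreasing for $s < t$ --- this term furnishes an extra negative contribution of order $\eta\beta\, c^{converge}\, c_\delta(\rho\|w_*\|-\gamma)\|w_t - w_*\|^2$. After choosing $\tilde{c}_0$ so that the leftover $\|w_t - w_{t-1}\|^2$ terms (from $\beta(1+\beta)$ and the Young-inequality splits) are dominated, the computation arrives at $V_{t+1} \leq \bigl(1 - \eta c_\delta(\rho\|w_*\|-\gamma)(1 + \beta c^{converge})\bigr)\,V_t$.

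The main obstacle will be a consistency requirement on the Lyapunov coefficient $\tilde{c}_0$: it must be large enough to absorb $\beta(1+\beta)\|w_t - w_{t-1}\|^2$ and the squared-gradient term $\eta^2\|\nabla f(w_t)\|^2$, yet small enough that $c^{converge} = 1 - 2\tilde{c}_0/(1 - 2\eta c_\delta(\rho\|w_*\|-\gamma))$ remains strictly positive. Balancing these two directions is exactly where the restriction $\beta \leq 0.65$ and the elaborate step-size condition on $\eta$ in the theorem statement originate; a secondary subtlety is that the positive-alignment claim about $m_{t-1}$ and $\nabla f(w_t)$ must be proven by induction simultaneously with the Lyapunov decrease, rather than as a standalone fact. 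Once the per-step contraction is verified, iterating yields $V_t \leq (1-\text{rate})^t V_0$; since $V_0 \leq 4R^2(1 + \eta\tilde{C}) = \tilde{c}$ by the initialization $w_0 = w_{-1} = -rb/\|b\|$ and the norm bound $R$, the sub-optimality bound $f(w_t) - f(w_*) \leq \tfrac{L}{2}\|w_t - w_*\|^2 \leq \tfrac{L}{2} V_t \leq \epsilon$ requires $t \gtrsim \hat{T}$ as claimed.
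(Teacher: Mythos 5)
Your overall plan — a two-term Lyapunov contraction on $V_t := \|w_t - w_*\|^2 + \tilde{c}_0\|w_t - w_{t-1}\|^2$ driven by one-point strong convexity — is a natural first guess, and your algebraic expansion of $\|w_{t+1}-w_*\|^2 + \beta\|w_{t-1}-w_*\|^2$ is correct; but the decisive step does not hold up. You assert that the momentum cross term $-2\eta\beta\langle w_t - w_{t-1}, \nabla f(w_t)\rangle = 2\eta^2\beta\langle m_{t-1}, \nabla f(w_t)\rangle$ contributes a further \emph{negative} multiple of $\|w_t-w_*\|^2$ ``of order $\eta\beta\,c^{converge}\,c_\delta(\rho\|w_*\|-\gamma)\|w_t-w_*\|^2$,'' on the grounds that $m_{t-1}$ is aligned with $\nabla f(w_t)$ and that this alignment follows from one-point strong convexity plus an induction on $V_s$. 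That inference is not warranted. One-point strong convexity controls the pairing $\langle w-w_*, \nabla f(w)\rangle$ at a \emph{single} iterate; it says nothing about $\langle \nabla f(w_s), \nabla f(w_t)\rangle$ across different iterates, and a decreasing Lyapunov value does not imply cross-iterate gradient alignment either (Heavy Ball overshoots, and successive gradients can have negative inner products even in a strongly convex quadratic). Indeed, an alignment hypothesis of exactly this kind (the APAG/APCG conditions) is what the paper flags as the unsatisfying assumption in \citet{WCA20}, and avoiding it is the whole point of the argument in Appendix~\ref{app:cubic}. The specific form of your claimed contribution, with the factor $c^{converge} = 1 - \tfrac{2\tilde{c}_0}{1-2\eta c_\delta(\rho\|w_*\|-\gamma)}$, is being read off the theorem statement rather than derived; that constant only appears in the paper after solving the recursion and applying $\tfrac{a+a\sqrt{1+4b/a^2}}{2}\le a+b/a$, not from a gradient-alignment estimate.

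The paper's route differs from yours in two structural respects. First, the momentum cross term is never bounded by alignment: it is split as $-2\eta\beta\langle w_t-w_{t-1},\nabla f(w_t)\rangle = -2\eta\beta\langle w_t-w_*,\nabla f(w_t)\rangle - 2\eta\beta\langle w_*-w_{t-1},\nabla f(w_t)\rangle$, which packages the gain into a clean $-2\eta(1+\beta)\langle w_t-w_*,\nabla f(w_t)\rangle$ (this is where the $(1+\beta)$ speedup actually comes from), and the remaining inner products are controlled via the separate induction of Lemma~\ref{lem:key1}, which tracks a cumulative quantity $\eta\sum_{s}\beta^{t-1-s}(d_s+e_s+g_s)$. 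Second, and as a consequence, the recursion that emerges in Lemma~\ref{lem:0a} has an explicit geometric-memory tail $\sum_{s\le t-2}\beta^{t-2-s}\|w_s-w_*\|^2$, so a two-term Lyapunov function like $V_t$ cannot close it; the paper instead contracts the history-weighted functional $y_{t+1}+\theta y_t + \phi\,\bar y_{t-1} + \psi z\beta^{t-\tau+2}$ with $\bar y_{t-1}=\sum_s\beta^{t-1-s}y_s$ in Lemma~\ref{lem:HBrate}. Until the positive-alignment claim is either proven (which appears to require extra assumptions the paper deliberately avoids) or replaced with this cumulative bookkeeping and a memory-aware Lyapunov function, your proposal has a genuine gap.
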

Note that the constraint of $\eta$ ensures that $\eta c_{\delta} \big( \rho \| w_* \| - \gamma \big) \leq \frac{1}{8} $; as a consequence, $c^{converge} \leq 1$.
Theorem~\ref{thm:final} indicates that up to an upper-threshold,
a larger value of $\beta$ reduces 
the number of iterations to linearly converge to an $\epsilon$-optimal point, and hence leads to a faster convergence. 

Let us now make a few remarks. First,
we want to emphasize that in the linear convergence rate regime, a constant
factor improvement of the convergence rate (i.e. of $\hat{T}$ here) means that the slope of the curve in the log plot of optimization value vs. iteration is steeper. Our experimental result (Figure~\ref{fig:cubic}) confirms this. In this figure, we can see that the curve corresponds to a larger momentum parameter $\beta$ has a steeper slope than that of the smaller ones. The slope is steeper as $\beta$ increases, which justifies the effectiveness of the momentum in the linear convergence regime.
Our theoretical result also indicates that
the acceleration due to the use of momentum is more evident for a small step size $\eta$. 
When $\eta$ is sufficiently small, the number $c_{converge}$ are close to $1$, which means that the number of iterations can be reduced approximately by a factor of $1+\beta$. 

Secondly, from the theorem, one will need $| b^{(1)}|$ be non-zero, which can be guaranteed with a high probability by adding some Gaussian perturbation on $b$. We refer the readers to Section 4.2 of \cite{CD19} for the technique. 


To prove Theorem~\ref{thm:final},
we will need a series of lemmas  which is given in the following subsection.

\subsection{Some supporting lemmas}

\begin{lemma} \label{lem:key1}
Denote sequences $d_t:= \frac{\rho}{2} ( \|w_* \| - \| w_{t} \| ) \| w_{t} - w_* \|^2$,
$e_t:= -(w_t - w_*)^\top (A_* - 2 \eta \tilde{c}_w A_*^2 ) (w_t - w_*)$,
$g_t:= -\frac{\rho}{2} 
( \| w_* \| - \| w_t \|)^2
\big( \| w_* \| + \| w_t \| - 4 \eta \rho \tilde{c}_w \| w_t \|^2  \big)$,
where $c_w:=\frac{2 \beta}{1-\beta}$
and $\tilde{c}_w := (1+c_w) + (1+c_w)^2$. 
For all $t$, we have that
\[
\langle w_t - w_* , w_t - w_{t-1} \rangle + c_{w} \| w_t - w_{t-1} \|^2 \leq 
\eta \sum_{s=1}^{t-1} \beta^{t-1-s} ( d_s + e_s + g_s ).
\]
\end{lemma}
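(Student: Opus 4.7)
The plan is to establish the one-step inequality
$P_t - \beta P_{t-1} \leq \eta(d_{t-1} + e_{t-1} + g_{t-1})$,
where $P_t := \langle w_t - w_*, w_t - w_{t-1}\rangle + c_w\,\|w_t - w_{t-1}\|^2$, and then iterate it down to $P_0 = 0$ (which holds because the assumption $w_0 = w_{-1}$ forces $\Delta_0 := w_0 - w_{-1} = 0$). Unrolling gives the claimed convolution with the $\beta$-weights.

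First I would set $\Delta_t := w_t - w_{t-1}$ so the Heavy Ball update reads $\Delta_t = -\eta\nabla f(w_{t-1}) + \beta\Delta_{t-1}$. Splitting $w_t - w_* = (w_{t-1}-w_*) + \Delta_t$ produces
$P_t = -\eta\langle w_{t-1}-w_*, \nabla f(w_{t-1})\rangle + \beta\langle w_{t-1}-w_*, \Delta_{t-1}\rangle + (1+c_w)\|\Delta_t\|^2$,
so $P_t - \beta P_{t-1} = -\eta\langle w_{t-1}-w_*, \nabla f(w_{t-1})\rangle + (1+c_w)\|\Delta_t\|^2 - \beta c_w \|\Delta_{t-1}\|^2$. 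The central algebraic step is to absorb the $\Delta$-terms. Expanding $\|\Delta_t\|^2 = \eta^2\|\nabla f(w_{t-1})\|^2 - 2\eta\beta\langle\nabla f(w_{t-1}), \Delta_{t-1}\rangle + \beta^2\|\Delta_{t-1}\|^2$ and using Young's inequality $-2\eta\beta\langle\nabla f(w_{t-1}), \Delta_{t-1}\rangle \leq \eta\beta\lambda\|\nabla f(w_{t-1})\|^2 + (\eta\beta/\lambda)\|\Delta_{t-1}\|^2$ with the calibrated choice $\lambda = \eta(1+\beta)/[\beta(1-\beta)]$, the coefficient of $\|\Delta_{t-1}\|^2$ reduces to $(1+c_w)[\beta^2 + \eta\beta/\lambda] = 2\beta^2/(1-\beta)$, which is exactly $\beta c_w$ under $c_w = 2\beta/(1-\beta)$. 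Thus the $\Delta_{t-1}$ terms cancel and the $\|\nabla f(w_{t-1})\|^2$ coefficient equals $(1+c_w)\eta^2[1+(1+\beta)/(1-\beta)] = \tilde c_w\eta^2$, reproducing $\tilde c_w = 2(1+\beta)/(1-\beta)^2 = (1+c_w) + (1+c_w)^2$.

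It remains to recognize $-\eta\langle w_{t-1}-w_*, \nabla f(w_{t-1})\rangle + \tilde c_w\eta^2\|\nabla f(w_{t-1})\|^2 \leq \eta(d_{t-1}+e_{t-1}+g_{t-1})$. Using \eqref{eq:wdotg},
$-\eta\langle w-w_*, \nabla f(w)\rangle = \eta d_s - \eta(w-w_*)^\top A_*(w-w_*) - (\eta\rho/2)(\|w_*\|-\|w\|)^2(\|w\| + \|w_*\|)$,
and from $\nabla f(w) = A_*(w-w_*) - \rho(\|w_*\|-\|w\|)w$ together with $\|a+b\|^2 \leq 2\|a\|^2 + 2\|b\|^2$,
$\tilde c_w\eta^2\|\nabla f(w)\|^2 \leq 2\eta^2\tilde c_w(w-w_*)^\top A_*^2(w-w_*) + 2\eta^2\tilde c_w\rho^2(\|w_*\|-\|w\|)^2\|w\|^2$.
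Summing the two displays produces precisely $\eta(d_s + e_s + g_s)$ by inspection of the three sequences, which completes the one-step bound. Induction then gives $P_t \leq \eta\sum_s \beta^{t-1-s}(d_s+e_s+g_s)$.

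The hard part will be the calibration of $\lambda$: the specific pair $(c_w, \tilde c_w) = (2\beta/(1-\beta),\,2(1+\beta)/(1-\beta)^2)$ is exactly what makes the $\|\Delta_{t-1}\|^2$ coefficient vanish \emph{and} simultaneously makes the residual $\|\nabla f(w_{t-1})\|^2$ multiplier equal to what the extra $A_*^2$ and $\|w\|^2$ pieces of $e_s, g_s$ can absorb via the $\|a+b\|^2 \leq 2\|a\|^2+2\|b\|^2$ split. Any other admissible $\lambda$ either leaves a residual $\|\Delta_{t-1}\|^2$ term that breaks the clean $\beta$-convolution structure or inflates the $\|\nabla f\|^2$ multiplier beyond $(1+c_w)+(1+c_w)^2$. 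Once this cancellation is set up, the remainder is bookkeeping.
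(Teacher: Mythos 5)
Your proof is correct and follows essentially the same route as the paper. The paper likewise proves the one-step recurrence $P_{t+1}\le\eta(d_t+e_t+g_t)+\beta P_t$ via the $(1+c_w)$-weighted expansion, the same Young-type bound on the cross term (your calibration $\lambda=\eta(1+\beta)/[\beta(1-\beta)]$ is exactly the paper's implicit $\eta(1+c_w)/\beta$), the identity $\beta(2+c_w)=c_w$ to absorb the $\|\Delta_{t-1}\|^2$ contributions, and the same decomposition of $-\eta\langle w-w_*,\nabla f(w)\rangle+\tilde c_w\eta^2\|\nabla f(w)\|^2$ into $\eta(d_s+e_s+g_s)$, and then unrolls by induction with base case $P_0=0$.
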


\begin{proof}
We use induction for the proof. The base case $t=0$ holds, because $w_0 = w_{-1}$ by initialization and both sides of the inequality is $0$.
Let us assume that it holds at iteration $t$.
That is, $\langle w_t - w_* , w_t - w_{t-1} \rangle 
+ c_{w} \| w_t - w_{t-1} \|^2 \leq 
\eta \sum_{s=1}^{t-1} \beta^{t-1-s} ( d_s + e_s + g_s ).$
Consider iteration $t+1$. 
We want to prove that
\[
\langle w_{t+1} - w_* , w_{t+1} - w_t \rangle 
+ c_{w} \| w_{t+1} - w_{t} \|^2 \leq 
\eta \sum_{s=1}^{t} \beta^{t-s} ( d_s + e_s + g_s ).
\]
Denote $\Delta: = w_{t+1}- w_t$.
It is equivalent to showing that
$\langle  \Delta, w_t + \Delta - w_*  \rangle + c_{w} \| \Delta \|^2 \leq
\eta \sum_{s=1}^{t} \beta^{t-s} ( d_s + e_s + g_s )$,
or 
\begin{equation}
\begin{split}
& \langle - \eta \nabla f(w_t) + \beta (w_t - w_{t-1}), w_t - w_* - \eta \nabla f(w_t) + \beta (w_t - w_{t-1} ) \rangle 
\\ & 
+ c_{w} \| - \eta \nabla f(w_t) + \beta (w_t - w_{t-1}) \|^2 
\leq 
\eta \sum_{s=1}^{t} \beta^{t-s} ( d_s + e_s + g_s ).
\end{split}
\end{equation}
which is in turn equivalent to showing that
\begin{equation}
\begin{split}
& - \eta \underbrace{ \langle \nabla f(w_t) , w_t - w_* \rangle }_{(a)} + \eta^2 (1+c_w) \underbrace{ \| \nabla f(w_t) \|^2 }_{(b)}
\underbrace{ - 2 \eta \beta (1+c_w)  \langle \nabla f(w_t),  w_t - w_{t-1} \rangle}_{(c)}
\\ & 
+ \beta \langle w_t - w_{t-1}, w_t - w_* \rangle 
+ \beta^2 \| w_t - w_{t-1} \|^2 
+ c_w \beta^2  \| w_t - w_{t-1} \|^2 
\\ & 
\leq 
\eta \sum_{s=1}^{t} \beta^{t-s} ( d_s + e_s + g_s ).
\end{split}
\end{equation}
For term (a), we have that
\begin{equation} 
\begin{split}
& \langle w_t - w_* , \nabla f(w_t) \rangle 
\\ & =  (w_t - w_* )^\top A_* (w_t - w_* ) 
+ \rho ( \| w_t \| - \| w_* \| )( \| w_{t} \|^2 - w_*^\top w_{t})
\\ & = ( w_t - w_* )^\top \big(  A_* + \frac{\rho}{2} ( \|w_t \| - \| w_* \| ) I_d \big) ( w_t - w_* )
+ \frac{\rho}{2} \big(  \| w_* \| - \| w_t \|   \big)^2 (  \| w_t \| + \| w_* \| ).
\\ & = ( w_t - w_* )^\top A_* ( w_t - w_* )
+ \frac{\rho}{2}  ( \|w_t \| - \| w_* \| ) \| w_t - w_* \|^2 
+ \frac{\rho}{2} \big(  \| w_* \| - \| w_t \|   \big)^2 (  \| w_t \| + \| w_* \| ).
\end{split}
\end{equation}
Notice that $A_* \succeq - \gamma + \rho \| w_* \| I_d \succeq 0 I_d$,
as $\rho \| w_* \| \geq \gamma$.
On the other hand, for term (b), we get that
\begin{equation} 
\begin{split}
 \| \nabla f(w_t) \|^2 & = \|  A_* (w_t  - w_* ) - \rho( \| w_* \| - \| w_t \| ) w_t  \|^2
\\ & \leq 2 (w_t - w_*)^\top A_*^2 (w_t - w_*) + 2 \rho^2  \big(  \| w_* \| - \| w_t \|   \big)^2 \| w_t \|^2.
\end{split}
\end{equation}
For term (c), we can bound it as
\begin{equation}
\begin{split}
& - 2 \eta \beta (1+c_w) \langle \nabla f(w_t),  w_t - w_{t-1} \rangle
\leq \| \sqrt{2} \eta (1+c_w) \nabla f(w_t) \| \|  \sqrt{2} \beta ( w_t - w_{t-1} ) \|
\\ & 
\leq \frac{1}{2} \big( 2 \eta^2 (1+c_w)^2 \| \nabla f(w_t) \|^2 +  2 \beta^2 \| w_t - w_{t-1} \|^2 \big) 
\\  &=  \eta^2 (1+c_w)^2  \| \nabla f(w_t) \|^2 +  \beta^2 \| w_t - w_{t-1} \|^2.
\end{split}
\end{equation}
Combining the above, we have that
\begin{equation}
\begin{split}
& - \eta \langle \nabla f(w_t) , w_t - w_* \rangle 
+ \eta^2 (1+c_w) \| \nabla f(w_t) \|^2 
- 2 \eta \beta (1+c_w)  \langle \nabla f(w_t),  w_t - w_{t-1} \rangle
\\ & 
+ \beta \langle w_t - w_{t-1}, w_t - w_* \rangle 
+ \beta^2 \| w_t - w_{t-1} \|^2 
+ c_w \beta^2  \| w_t - w_{t-1} \|^2 
\\ &
\leq 
- \eta  \langle \nabla f(w_t) , w_t - w_* \rangle  
+ \eta^2 \tilde{c}_w \| \nabla f(w_t) \|^2 
+ \beta \langle w_t - w_{t-1}, w_t - w_* \rangle 
+ \beta^2  ( 2 + c_w)  \| w_t - w_{t-1} \|^2 
\\ & 
\leq 
- \eta 
( w_t - w_* )^\top \big( A_* - 2 \eta \tilde{c}_w A_*^2 \big) ( w_t - w_* )   
\\ & 
- \eta \frac{\rho}{2} ( \| w_* \| - \| w_t \|  )^2 
\big( \| w_* \| + \| w_t \|  - 4 \eta \rho \tilde{c}_{w} \| w_t \|^2  \big)   
+ \eta \frac{\rho}{2}  ( \| w_* \| - \| w_t \| ) \| w_t - w_* \|^2
\\ &
+
\underbrace{  \beta \langle w_t - w_{t-1}, w_t - w_* \rangle + \beta^2 (2  +  c_w) \| w_t - w_{t-1} \|^2 }_{ =
\beta \langle w_t - w_{t-1}, w_t - w_* \rangle + \beta c_w \| w_t - w_{t-1} \|^2    }
\\ &  
\leq \eta ( d_t + e_t + g_t) + \eta \beta \sum_{s=1}^{t-1} \beta^{t-1-s} ( d_s + e_s + g_s )
:= 
\eta \sum_{s=1}^{t} \beta^{t-s} ( d_s + e_s + g_s ).
\end{split}
\end{equation}
where in the second to last inequality 
we used $\beta ( 2 + c_w ) = c_w$
and the last inequality we used the assumption at iteration $t$.
So we have completed the induction.

\end{proof}

\begin{lemma} \label{lem:00a}
Assume that for all $t$, $\| w_t \| \leq R$ for some number $R$.
Following the notations used in Lemma~\ref{lem:key1}, we denote sequences $d_t:= \frac{\rho}{2} ( \|w_* \| - \| w_{t} \| ) \| w_{t} - w_* \|^2$,
$e_t:= -(w_t - w_*)^\top (A_* - 2 \eta \tilde{c}_w A_*^2 ) (w_t - w_*)$,
$g_t:= -\frac{\rho}{2} 
( \| w_* \| - \| w_t \|)^2
\big( \| w_* \| + \| w_t \| - 4 \eta \rho \tilde{c}_w \| w_t \|^2  \big)$,
where $c_w:=\frac{2 \beta}{1-\beta}$
and $\tilde{c}_w := (1+c_w) + (1+c_w)^2$.
Let us also denote $c_{\beta} := ( 2 \beta^2 + 4 \beta + L \beta)$, $L := \| A \|_2 + 2 \rho R$, and $z_t := \sum_{s=0}^t \beta^{t-s+1} \nabla f(w_s)$.
If $\eta$ satisfies: 
(1) $\eta \leq \frac{1 + \beta}{2 \rho R}$, and
(2) $\eta \leq \frac{1+\beta}{4 \| A_* \|_2} ,$
then we have that for all $t$,
\begin{equation} 
\begin{split}
& \| w_{t+1} - w_* \|^2 
\\ & \leq 
\big( 1 - \eta (1+\beta) \big[ \rho \| w_{t} \| - ( \gamma - \frac{ \rho \|w_*\| - \gamma}{ 2 }  ) \big]  \big)
\| w_{t} - w_* \|^2
\\ & 
+ \eta^2  (w_{t-1} - w_*)^\top ( 2 c_{\beta} A_*^2 + 2 \beta^2 L I_d + 2 c_{\beta} \rho^2 R^2I_d ) (w_{t-1} - w_*)
+ \eta^2 ( c_{\beta} - 2 c_{w})  \| z_{t-2} \|^2
\\ & 
+ 2 \eta \beta^2 \sum_{s=0}^{t-2} \beta^{t-2-s} ( d_s + e_s + g_s ).
\end{split}
\end{equation}
\end{lemma}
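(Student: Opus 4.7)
The core idea is to unfold the Heavy Ball recursion once \emph{inside} the momentum cross term so that an extra copy of $-\eta \langle w_t - w_*, \nabla f(w_t)\rangle$ is produced; summed with the original one, this is what gives rise to the $(1+\beta)$ factor in the contraction. The leftover second-order residuals will then be controlled by Young's inequality, smoothness, and a single application of Lemma~\ref{lem:key1}.

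First I would expand
\begin{equation*}
\|w_{t+1}-w_*\|^2 = \|w_t-w_*\|^2 - 2\eta\langle w_t - w_*, \nabla f(w_t)\rangle + 2\beta\langle w_t-w_*, w_t-w_{t-1}\rangle + \bigl\|{-}\eta\nabla f(w_t) + \beta(w_t-w_{t-1})\bigr\|^2,
\end{equation*}
substitute $w_t - w_{t-1} = -\eta\nabla f(w_{t-1}) + \beta(w_{t-1}-w_{t-2})$ into the middle term, and write $\nabla f(w_{t-1}) = \nabla f(w_t) - [\nabla f(w_t)-\nabla f(w_{t-1})]$. The cross term decomposes as
\begin{equation*}
-2\eta\beta \langle w_t-w_*, \nabla f(w_t)\rangle + 2\eta\beta\langle w_t-w_*, \nabla f(w_t)-\nabla f(w_{t-1})\rangle + 2\beta^2\langle w_t-w_*, w_{t-1}-w_{t-2}\rangle.
\end{equation*}
The first piece combines with the original $-2\eta\langle w_t-w_*, \nabla f(w_t)\rangle$ to give $-2\eta(1+\beta)\langle w_t-w_*, \nabla f(w_t)\rangle$. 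Plugging in the identity
\begin{equation*}
\langle w-w_*, \nabla f(w)\rangle = (w-w_*)^\top\bigl(A_* + \tfrac{\rho}{2}(\|w\|-\|w_*\|) I_d\bigr)(w-w_*) + \tfrac{\rho}{2}(\|w_*\|-\|w\|)^2(\|w\|+\|w_*\|)
\end{equation*}
established earlier in Section~\ref{sec:cubic}, together with the PSD bound $A_* + \tfrac{\rho}{2}(\|w_t\|-\|w_*\|)I_d \succeq \tfrac{\rho\|w_t\|+\rho\|w_*\|-2\gamma}{2}I_d$ and the weakening $\rho\|w_*\|-\gamma \geq (\rho\|w_*\|-\gamma)/2$, produces exactly the target contraction factor $-\eta(1+\beta)\bigl[\rho\|w_t\|-(\gamma-(\rho\|w_*\|-\gamma)/2)\bigr]\|w_t-w_*\|^2$. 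This is the algebraic source of the momentum acceleration.

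The two remaining cross pieces are handled as follows. For $2\eta\beta\langle w_t-w_*, \nabla f(w_t)-\nabla f(w_{t-1})\rangle$, I would use Cauchy--Schwarz together with the smoothness estimate $\|\nabla f(w_t)-\nabla f(w_{t-1})\| \leq L\|w_t-w_{t-1}\|$ (with $L = \|A\|_2 + 2\rho R$), then Young's inequality to split into an $\|w_t-w_*\|^2$ piece (which is absorbed into the $(1+\beta)$ contraction via the hypothesis $\eta \leq (1+\beta)/(4\|A_*\|_2)$) and an $L\beta\|w_t-w_{t-1}\|^2$ piece that supplies the $L\beta$ summand of $c_\beta$. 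For $2\beta^2\langle w_t-w_*, w_{t-1}-w_{t-2}\rangle$, I would decompose $w_t-w_* = (w_{t-1}-w_*)+(w_t-w_{t-1})$ and invoke Lemma~\ref{lem:key1} at iteration $t-1$ on the resulting $2\beta^2\langle w_{t-1}-w_*, w_{t-1}-w_{t-2}\rangle$; this yields exactly the memory sum $2\eta\beta^2\sum\beta^{t-2-s}(d_s+e_s+g_s)$ of the statement together with a $-2\beta^2 c_w\|w_{t-1}-w_{t-2}\|^2$ decrement (which ultimately becomes the $-2c_w$ correction to the $\|z_{t-2}\|^2$ coefficient). The residual $2\beta^2\langle w_t-w_{t-1}, w_{t-1}-w_{t-2}\rangle$ is split by Young, contributing the $2\beta^2$ and $4\beta$ summands of $c_\beta$.

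Finally, the second-order term is bounded by $\|{-}\eta\nabla f(w_t)+\beta(w_t-w_{t-1})\|^2 \leq 2\eta^2\|\nabla f(w_t)\|^2 + 2\beta^2\|w_t-w_{t-1}\|^2$. Smoothness gives $\|\nabla f(w_t)\|^2 \leq 2\|\nabla f(w_{t-1})\|^2 + 2L^2\|w_t-w_{t-1}\|^2$; the closed form $\nabla f(w) = A_*(w-w_*) - \rho(\|w_*\|-\|w\|)w$ with $\|w\| \leq R$ gives $\|\nabla f(w_{t-1})\|^2 \leq 2(w_{t-1}-w_*)^\top A_*^2(w_{t-1}-w_*) + 2\rho^2 R^2\|w_{t-1}-w_*\|^2$; and the identity $w_t-w_{t-1} = -\eta\nabla f(w_{t-1}) - \eta z_{t-2}$ converts $\|w_t-w_{t-1}\|^2 \leq 2\eta^2\|\nabla f(w_{t-1})\|^2 + 2\eta^2\|z_{t-2}\|^2$. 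Aggregating the $\|\nabla f(w_{t-1})\|^2$ contributions across the steps yields the quadratic form $2c_\beta A_*^2 + 2c_\beta\rho^2 R^2 I_d$ on $w_{t-1}-w_*$, the independent $\|w_t-w_{t-1}\|^2$ contributions produce the $2\beta^2 L I_d$ piece, and the $\|z_{t-2}\|^2$ terms combine to $(c_\beta - 2c_w)\|z_{t-2}\|^2$ thanks to the $-2c_w$ donation from Lemma~\ref{lem:key1}. The hardest part of the argument is the meticulous bookkeeping: one must verify that (i) the $(1+\beta)$ factor survives the Young residues generated by the $\nabla f(w_t)-\nabla f(w_{t-1})$ splitting — this is precisely why the step-size conditions carry a $(1+\beta)$ in the numerator — and (ii) all of the $\|w_t-w_{t-1}\|^2$ contributions across the three stages collapse exactly to $c_\beta = 2\beta^2+4\beta+L\beta$. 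Beyond the two ingredients already available in the paper (Lemma~\ref{lem:key1} and the closed-form identity for $\langle w-w_*, \nabla f(w)\rangle$), only standard smoothness/Young estimates are required.
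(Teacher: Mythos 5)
Your proposal correctly identifies the high-level strategy --- expand the Heavy Ball update, manufacture a $(1+\beta)$ contraction, invoke Lemma~\ref{lem:key1} to handle the momentum memory, and control the residual second-order pieces by smoothness and Young's inequality --- but the particular decomposition you chose to produce the $(1+\beta)$ factor will not land on the coefficient the lemma asserts. You take the $-2\eta\beta\langle w_t - w_*, \nabla f(w_t)\rangle$ contribution by unfolding the recursion inside the cross term $2\beta\langle w_t - w_*, w_t - w_{t-1}\rangle$, and then bound the second-order piece by $\|{-}\eta\nabla f(w_t) + \beta(w_t-w_{t-1})\|^2 \leq 2\eta^2\|\nabla f(w_t)\|^2 + 2\beta^2\|w_t-w_{t-1}\|^2$. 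The paper instead obtains the same $-2\eta\beta\langle w_t-w_*,\nabla f(w_t)\rangle$ by expanding that squared norm \emph{exactly} and splitting its cross term $-2\eta\beta(w_t-w_{t-1})^\top\nabla f(w_t)$ via $w_t-w_{t-1} = (w_t-w_*)+(w_*-w_{t-1})$; this keeps the coefficient of $\eta^2\|\nabla f(w_t)\|^2$ at one. Your route doubles it. After inserting $\|\nabla f(w_t)\|^2 \leq 2(w_t-w_*)^\top A_*^2(w_t-w_*)+2\rho^2(\|w_*\|-\|w_t\|)^2\|w_t\|^2$, the quadratic form becomes $I_d - 2\eta(1+\beta)A_* + 4\eta^2 A_*^2 - \eta(1+\beta)\rho(\|w_t\|-\|w_*\|)I_d$, so under $\eta \leq \tfrac{1+\beta}{4\|A_*\|_2}$ you only get $2A_*(I_d - \tfrac{2\eta}{1+\beta}A_*) \succeq A_*$, not $\tfrac{3}{2}A_*$. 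That yields a contraction factor $1-\eta(1+\beta)(\rho\|w_t\|-\gamma)$, strictly weaker than the lemma's $1-\eta(1+\beta)\big[\rho\|w_t\|-(\gamma-\tfrac{\rho\|w_*\|-\gamma}{2})\big]$, so the bound you derive does not imply the statement. (You also could not salvage this by \emph{additionally} splitting the second-order cross term, since that would double-count and give a spurious $(1+2\beta)$.)

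There is a second, related mismatch. Because you unfolded $w_t-w_{t-1}$ first, your residual cross terms are anchored at $w_t-w_*$ (e.g. $2\eta\beta\langle w_t-w_*, \nabla f(w_t)-\nabla f(w_{t-1})\rangle$), whereas the lemma's error terms are quadratic in $w_{t-1}-w_*$. You propose to absorb the resulting $\|w_t-w_*\|^2$ pieces into the contraction, but that only pushes the already-too-weak coefficient further below the target. The paper avoids the issue by first writing $2\beta(w_t-w_*)^\top(w_t-w_{t-1}) = 2\beta(w_{t-1}-w_*)^\top(w_t-w_{t-1}) + 2\beta\|w_t-w_{t-1}\|^2$, then unfolding only $2\beta(w_{t-1}-w_*)^\top(w_t-w_{t-1})$; the $\pm 2\eta\beta\langle w_{t-1}-w_*,\nabla f(w_{t-1})\rangle$ pieces from equations~(\ref{eq:c4}) and (\ref{eq:c5}) then cancel exactly, and the leftover gradient-difference term already multiplies $\|w_{t-1}-w_*\|$. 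Fixing both issues in your plan --- avoiding the factor-of-two loss on $\|\nabla f(w_t)\|^2$ and re-indexing the residuals --- essentially brings you back to the paper's decomposition.
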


\begin{proof}
Recall that the update of heavy ball is
\begin{equation}
w_{t+1} = w_t - \eta \nabla f(w_t) + \beta ( w_t - w_{t-1} ).
\end{equation}
So the distance term can be decomposed as 
\begin{equation} \label{eq:c1}
\begin{split}
\| w_{t+1} - w_* \|^2  = & \| w_t - \eta \nabla f(w_t) + \beta ( w_t - w_{t-1} ) - w_* \|^2
\\  = & \| w_t - w_* \|^2 - 2 \eta \langle w_t - w_* , \nabla f(w_t) \rangle + \eta^2 \| \nabla f(w_t) \|^2 
 \\  & + \beta^2 \| w_t - w_{t-1} \|^2 - 2 \eta \beta (w_t - w_{t-1})^\top \nabla f(w_t) 
+ 2 \beta (w_t - w_*)^\top (w_t - w_{t-1})
\\  = & \| w_t - w_* \|^2 - 2 \eta \langle w_t - w_* , \nabla f(w_t) \rangle + \eta^2 \| \nabla f(w_t) \|^2
 \\  & + \beta^2 \| w_t - w_{t-1} \|^2 - 2 \eta \beta \langle w_t - w_* , \nabla f(w_t) \rangle
- 2 \eta \beta \langle w_* - w_{t-1} , \nabla f(w_t) \rangle  
\\ & + 2 \beta (w_t - w_*)^\top (w_t - w_{t-1})
\\  = & \| w_t - w_* \|^2 - 2 \eta ( 1+ \beta) \underbrace{ \langle w_t - w_* , \nabla f(w_t) \rangle }_{(a)} + \eta^2 \underbrace{ \| \nabla f(w_t) \|^2 }_{(b)}
 \\  & + (\beta^2 + 2 \beta ) \| w_t - w_{t-1} \|^2 
- 2 \eta \beta \langle w_* - w_{t-1} , \nabla f(w_t) \rangle  + 2 \beta (w_{t-1} - w_* )^\top (w_t - w_{t-1}).
\end{split}
\end{equation}
For term (a), $\langle w_t - w_* , \nabla f(w_t) \rangle$,
by using that $\nabla f(w) = A_* (w  - w_* ) - \rho( \| w_* \| - \| w \| ) w$,
we can bound it as
\begin{equation} \label{eq:c2}
\begin{split}
& \langle w_t - w_* , \nabla f(w_t) \rangle 
\\ & =  (w_t - w_* )^\top A_* (w_t - w_* ) 
+ \rho ( \| w_t \| - \| w_* \| )( \| w_{t} \|^2 - w_*^\top w_{t})
\\ & = ( w_t - w_* )^\top \big(  A_* + \frac{\rho}{2} ( \|w_t \| - \| w_* \| ) I_d \big) ( w_t - w_* )
+ \frac{\rho}{2} \big(  \| w_* \| - \| w_t \|   \big)^2 (  \| w_t \| + \| w_* \| ).
\end{split}
\end{equation}
On the other hand, for term (b), $\| \nabla f(w_t) \|^2$, we get that
\begin{equation} \label{eq:c3}
\begin{split}
 \| \nabla f(w_t) \|^2 & = \|  A_* (w_t  - w_* ) - \rho( \| w_* \| - \| w_t \| ) w_t  \|^2
\\ & \leq 2 (w_t - w_*)^\top A_*^2 (w_t - w_*) + 2 \rho^2  \big(  \| w_* \| - \| w_t \|   \big)^2 \| w_t \|^2.
\end{split}
\end{equation}
By combining (\ref{eq:c1}),(\ref{eq:c2}), and (\ref{eq:c3}), we can bound the distance term as
\begin{equation} \label{eq:c10}
\begin{split}
& \| w_{t+1} - w_* \|^2  
 = \| w_t - w_* \|^2 - 2 \eta ( 1+ \beta) \langle w_t - w_* , \nabla f(w_t) \rangle + \eta^2 \| \nabla f(w_t) \|
\\  & + (\beta^2 + 2 \beta ) \| w_t - w_{t-1} \|^2 
 - 2 \eta \beta \langle w_* - w_{t-1} , \nabla f(w_t) \rangle  + 2 \beta (w_{t-1} - w_* )^\top (w_t - w_{t-1}) 
\\ & \leq  \| w_t - w_* \|^2
\\ & - 2 \eta ( 1 + \beta )
\big( 
( w_t - w_* )^\top \big(  A_* + \frac{\rho}{2} ( \|w_t \| - \| w_* \| ) I_d \big) ( w_t - w_* )
 \big)
\\ &
- \rho \eta ( 1 + \beta )
\big(  \| w_* \| - \| w_t \|   \big)^2 (  \| w_t \| + \| w_* \| )
\\ & + \eta^2 \big( 2 (w_t - w_*)^\top A_*^2 (w_t - w_*) + 2 \rho^2  \big(  \| w_* \| - \| w_t \|   \big)^2 \| w_t \|^2 \big)
\\ & + (\beta^2 + 2 \beta ) \| w_t - w_{t-1} \|^2 
 - 2 \eta \beta \langle w_* - w_{t-1} , \nabla f(w_t) \rangle  + 2 \beta (w_{t-1} - w_* )^\top (w_t - w_{t-1}) . 
\end{split}
\end{equation}
Now let us bound the terms on the last line of (\ref{eq:c10}).
For the second to the last term, it is equal to
\begin{equation}  \label{eq:c4}
\begin{split}
& - 2 \eta \beta \langle w_* - w_{t-1} , \nabla f(w_t) \rangle 
\\ & = - 2 \eta \beta \langle w_* - w_{t-1}  , \nabla f(w_{t-1}) \rangle
- 2 \eta \beta \langle w_* - w_{t-1}  , \nabla f(w_{t}) - \nabla f(w_{t-1}) \rangle,
\end{split}
\end{equation}
On the other hand, the last term of (\ref{eq:c10}) is equal to
\begin{equation} \label{eq:c5}
\begin{split}
& 2 \beta (w_{t-1} - w_* )^\top (w_t - w_{t-1})  
= -2 \beta \eta \langle w_{t-1} - w_*, \nabla f(w_{t-1} ) \rangle
+  2 \beta^2 \langle w_{t-1} - w_*, w_{t-1} - w_{t-2} \rangle .
\end{split}
\end{equation}
By Lemma~\ref{lem:key1}, for all $t$, we have that
\begin{equation} \label{eq:c6}
\langle  w_{t-1} - w_*, w_{t-1} - w_{t-2} \rangle 
\leq 
- c_{w} \| w_{t-1} - w_{t-2} \|^2 + 
\eta \sum_{s=1}^{t-2} \beta^{t-2-s} ( d_s + e_s + g_s ).
\end{equation}
Therefore, by combining (\ref{eq:c4}), (\ref{eq:c5}), and (\ref{eq:c6}), we have that
\begin{equation} \label{eq:c11}
\begin{split}
&  (\beta^2 + 2 \beta ) \| w_t - w_{t-1} \|^2 
 - 2 \eta \beta \langle w_* - w_{t-1} , \nabla f(w_t) \rangle  + 2 \beta (w_{t-1} - w_* )^\top (w_t - w_{t-1}) 
\\ &  
\leq (\beta^2 + 2 \beta ) \| w_t - w_{t-1} \|^2 
 - 2 \eta \beta \langle w_{t-1} - w_*, \nabla f(w_{t-1}) - \nabla f(w_t) \rangle
- 2 c_{w} \beta^2 \| w_{t-1} - w_{t-2} \|^2 
\\ & 
+ 2 \eta \beta^2 \sum_{s=1}^{t-2} \beta^{t-2-s} ( d_s + e_s + g_s ).
\end{split}
\end{equation}
Combining (\ref{eq:c10}) and (\ref{eq:c11}) leads to the following,
\begin{equation} \label{eq:c20}
\begin{split}
 \| w_{t+1} - w_* \|^2  & \leq  \| w_t - w_* \|^2
 - 2 \eta ( 1 + \beta )
\big( 
( w_t - w_* )^\top \big(  A_* + \frac{\rho}{2} ( \|w_t \| - \| w_* \| ) I_d \big) ( w_t - w_* )
 \big)
\\ &
- \rho \eta ( 1 + \beta )
\big(  \| w_* \| - \| w_t \|   \big)^2 (  \| w_t \| + \| w_* \| )
\\ & + \eta^2 \big( 2 (w_t - w_*)^\top A_*^2 (w_t - w_*) + 2 \rho^2  \big(  \| w_* \| - \| w_t \|   \big)^2 \| w_t \|^2 \big)
\\ & 
+ (\beta^2 + 2 \beta ) \| w_t - w_{t-1} \|^2 
 - 2 \eta \beta \langle w_{t-1} - w_*, \nabla f(w_{t-1}) - \nabla f(w_t) \rangle .
\\ & - 2 c_{w} \beta^2 \| w_{t-1} - w_{t-2} \|^2 
+ 2 \eta \beta^2 \sum_{s=1}^{t-2} \beta^{t-2-s} ( d_s + e_s + g_s ).
 \end{split}
\end{equation}
Let us now bound the terms on the second to the last line (\ref{eq:c20}) above,
\begin{equation} \label{eq:magenta}
(\beta^2 + 2 \beta ) \| w_t - w_{t-1} \|^2 
 - 2 \eta \beta \langle w_{t-1} - w_*, \nabla f(w_{t-1}) - \nabla f(w_t) \rangle. 
\end{equation}
First, note that $\| \nabla^2 f(w) \| \leq \| A \|_2 + 2 \rho \| w \|$.
So we know that $f$ is $L:=\| A \|_2 + 2 \rho R$ smooth on $\{ w: \| w \| \leq  R\}$.
Second,
denote 
\begin{equation}
z_t := \sum_{s=0}^t \beta^{t-s+1} \nabla f(w_s),
\end{equation}
we can bound $\| w_t - w_{t-1} \|$ as
\begin{equation} \label{eq:cca}
\begin{split}
 \| w_t - w_{t-1} \| & = \|  - \eta \sum_{s=0}^{t-1} \beta^{t-1-s} \nabla f(w_{s}) \|
\leq \eta \| \nabla f(w_{t-1}) \| + \eta \| \sum_{s=0}^{t-2} \beta^{t-1-s} \nabla f(w_s) \|
\\ & := \eta \| \nabla f(w_{t-1}) \| + \eta \| z_{t-2} \|,
\\  \| w_t - w_{t-1} \|^2 & = \|  - \eta \sum_{s=0}^{t-1} \beta^{t-1-s} \nabla f(w_{s}) \|^2
\leq 2 \eta^2 \| \nabla f(w_{t-1}) \|^2 + 2 \eta^2 \| \sum_{s=0}^{t-2} \beta^{t-1-s} \nabla f(w_s) \|^2
\\ & := 2 \eta^2 \| \nabla f(w_{t-1}) \|^2 + 2 \eta^2 \| z_{t-2} \|^2,
\end{split}
\end{equation}
Using the results, we can bound (\ref{eq:magenta}) as
\begin{equation} \label{eq:cc1}
\begin{split}
& (\beta^2 + 2 \beta ) \| w_t - w_{t-1} \|^2 
- 2 \eta \beta \langle w_{t-1} - w_*, \nabla f(w_{t-1}) - \nabla f(w_t) \rangle 
\\ & \overset{(\ref{eq:cca})}{\leq} 2 \eta^2 (\beta^2 + 2 \beta ) (  \| \nabla f(w_{t-1}) \|^2 +  \| z_{t-2} \|^2 )
+ 2 \eta^2 \beta L \| w_{t-1} - w_* \| ( \| \nabla f(w_{t-1}) \| + \| z_{t-2} \| ) 
\\ & \leq 2 \eta^2 (\beta^2 + 2 \beta ) (  \| \nabla f(w_{t-1}) \|^2 +  \| z_{t-2} \|^2 )
\\ & +  \eta^2 \beta L ( \| w_{t-1} - w_* \|^2 + \| \nabla f(w_{t-1}) \|^2 )
 +  \eta^2 \beta L ( \| w_{t-1} - w_* \|^2 + \| z_{t-2} \|^2 )
\\ & = 2 \eta^2 (\beta^2 + 2 \beta ) ( \| \nabla f(w_{t-1}) \|^2 + \| z_{t-2} \|^2 ) 
 + 2 \eta^2 \beta L  \| w_{t-1} - w_* \|^2 
 + \eta^2 \beta L \| \nabla f(w_{t-1}) \|^2
\\ & + \eta^2 \beta L  \| z_{t-2} \|^2.
\end{split}
\end{equation}
Note that
\begin{equation} \label{eq:cc2}
\begin{split}
 \| \nabla f(w_{t-1}) \|^2 & = \|  A_* (w_{t-1}  - w_* ) - \rho( \| w_* \| - \| w_{t-1} \| ) w_{t-1}  \|^2
\\ & \leq 2 (w_{t-1} - w_*)^\top A_*^2 (w_{t-1} - w_*) + 2 \rho^2  \big(  \| w_* \| - \| w_{t-1} \|   \big)^2 \| w_{t-1} \|^2.
\end{split}
\end{equation}
Denote 
\begin{equation}
c_{\beta} := ( 2 \beta^2 + 4 \beta + L \beta).
\end{equation}
By (\ref{eq:cc1}), (\ref{eq:cc2}), we have that
\begin{equation} \label{eq:cc3}
\begin{split}
&  (\beta^2 + 2 \beta ) \| w_t - w_{t-1} \|^2 
 - 2 \eta \beta \langle w_{t-1} - w_*, \nabla f(w_{t-1}) - \nabla f(w_t) \rangle.
\\ & 
\leq 2 \eta^2 c_{\beta} (w_{t-1} - w_*)^\top A_*^2 (w_{t-1} - w_*)
+ 2 \eta^2 c_{\beta} \rho^2  \big(  \| w_* \| - \| w_{t-1} \|   \big)^2 \| w_{t-1} \|^2
+ 2 \eta^2 \beta^2  L  \| w_{t-1} - w_* \|^2
\\ & 
+  \eta^2 c_{\beta} \| z_{t-2} \|^2.
\end{split}
\end{equation}
Let us summarize the results so far, by (\ref{eq:c20}) and (\ref{eq:cc3}), 
we have that
\begin{equation} \label{eq:c21}
\begin{split}
& \| w_{t+1} - w_* \|^2 
\\ & \leq  \| w_t - w_* \|^2
 - 2 \eta ( 1 + \beta )
\big( 
( w_t - w_* )^\top \big(  A_* + \frac{\rho}{2} ( \|w_t \| - \| w_* \| ) I_d \big) ( w_t - w_* )
 \big)
\\ &
- \rho \eta ( 1 + \beta )
\big(  \| w_* \| - \| w_t \|   \big)^2 (  \| w_t \| + \| w_* \| )
\\ & + \eta^2 \big( 2 (w_t - w_*)^\top A_*^2 (w_t - w_*) + 2 \rho^2  \big(  \| w_* \| - \| w_t \|   \big)^2 \| w_t \|^2 \big)
\\ & 
+ 2 \eta^2 c_{\beta} (w_{t-1} - w_*)^\top A_*^2 (w_{t-1} - w_*)
+ 2 \eta^2 c_{\beta} \rho^2  \big(  \| w_* \| - \| w_{t-1} \|   \big)^2 \| w_{t-1} \|^2
+ 2 \eta^2 \beta^2  L  \| w_{t-1} - w_* \|^2
\\ & + \eta^2 ( c_{\beta} - 2 c_{w} ) \| z_{t-2} \|^2
+ 2 \eta \beta^2 \sum_{s=1}^{t-2} \beta^{t-2-s} ( d_s + e_s + g_s ),
\end{split}
\end{equation}
where we also used that $w_{t-1} - w_{t-2} = - \frac{\eta}{\beta} z_{t-2}$.
We can rewrite the inequality above further as
\begin{equation} \label{eq:c33}
\begin{split}
& \| w_{t+1} - w_* \|^2 
\\ &
\leq (w_t - w_*)^\top \big( I_d  - 2 \eta ( 1 + \beta ) A_* (I_d - \frac{\eta}{1+\beta} A_* )  
- \eta \rho  ( 1 + \beta ) ( \| w_t \| - \| w_* \|) I_d \big) ( w_t - w_* )
\\ &
\underbrace{ - \eta \rho ( 1 + \beta) ( \| w_* \|  - \| w_t \| )^2 \big( \| w_t \| (1 - \frac{2 \eta \rho \| w_t \|}{1+\beta}  )  + \| w_* \| \big) }_{\leq 0}
+ 2 \eta^2 c_{\beta} \rho^2  \big(  \| w_* \| - \| w_{t-1} \|   \big)^2 \| w_{t-1} \|^2
\\ & 
+ \eta^2  (w_{t-1} - w_*)^\top ( 2 c_{\beta} A_*^2 + 2 \beta^2 L I_d ) (w_{t-1} - w_*)
\\ & + \eta^2 ( c_{\beta} - 2 c_{w} )  \| z_{t-2} \|^2
+ 2 \eta \beta^2 \sum_{s=1}^{t-2} \beta^{t-2-s} ( d_s + e_s + g_s ),
\end{split}
\end{equation}
where we used that
$- \eta \rho ( 1 + \beta) ( \| w_* \|  - \| w_t \| )^2 \big( \| w_t \| (1 - \frac{2 \eta \rho \| w_t \|}{1+\beta}  )  + \| w_* \| \big) \leq 0$ for any $\eta$ that satisfies
\begin{equation}
\eta \leq \frac{1 + \beta}{2 \rho R},
\end{equation}
as $\| w_t \| \leq R$ for all $t$.
Let us simplify the inequality (\ref{eq:c33}) further by writing it as
\begin{equation}
\begin{split}
& \| w_{t+1} - w_* \|^2 
\\ &
\overset{(a)}{\leq} 
(w_t - w_*)^\top \big( I_d  - 2 \eta ( 1 + \beta ) A_* (I_d - \frac{\eta}{1+\beta} A_* )  
- \eta \rho  ( 1 + \beta ) ( \| w_t \| - \| w_* \|) I_d \big) ( w_t - w_* )
\\ &
+ \eta^2  (w_{t-1} - w_*)^\top ( 2 c_{\beta} A_*^2 + 2 \beta^2 L I_d + 2 c_{\beta} \rho^2 R^2 I_d ) (w_{t-1} - w_*)
 + \eta^2 ( c_{\beta} - 2 c_{w} )  \| z_{t-2} \|^2
\\ &
 + 2 \eta \beta^2 \sum_{s=1}^{t-2} \beta^{t-2-s} ( d_s + e_s + g_s ).
\\ &
\overset{(b)}{\leq}  
\big( 1 - \eta (1+\beta) \big[ \rho \| w_{t} \| - ( \gamma - \frac{ \rho \|w_*\| - \gamma}{ 2 }  ) \big]  \big)
\| w_{t} - w_* \|^2
\\ & 
+ \eta^2  (w_{t-1} - w_*)^\top ( 2 c_{\beta} A_*^2 + 2 \beta^2 L I_d + 2 c_{\beta} \rho^2 R^2 I_d ) (w_{t-1} - w_*)
+ \eta^2 ( c_{\beta} - 2 c_{w} )  \| z_{t-2} \|^2
\\ & 
+ 2 \eta \beta^2 \sum_{s=1}^{t-2} \beta^{t-2-s} ( d_s + e_s + g_s ),
\end{split}
\end{equation}
where (a) is because that 
$(\| w_* \| - \| w_{t-1} \| )^2 \| w_{t-1} \|^2
\leq \| w_* -  w_{t-1} \|^2 \| w_{t-1} \|^2 
\leq \| w_* - w_{t-1} \|^2 R^2$ as $\|w_t \| \leq R$ for all $t$,
while (b) is by another constraint of $\eta$,
\begin{equation} \label{st:0}
\eta \leq \frac{1+\beta}{4 \| A_* \|_2} ,
\end{equation}
so that
$2 \eta A_* (I_d - \frac{\eta}{1+\beta} A_*) \succeq \frac{3}{2} \eta A_* \succeq \frac{3}{2} \eta  ( - \gamma + \rho \| w_* \|) I_d.$

\end{proof}


\begin{lemma} \label{lem:0a}
Fix some numbers $c_0,c_1 > 0$. Denote $\omega^{\beta}_{t-2}:= \sum_{s=0}^{t-2} \beta$.
Following the notations and assumptions used in Lemma~\ref{lem:key1} and Lemma~\ref{lem:00a}.
If $\eta$ satisfies:
(1) $\eta  \leq \frac{ c_0 \beta }{ 2 c_{\beta} \| A_* \|^2_2 + 2 \beta^2 L  + 2 c_{\beta} \rho^2 R^2 }$,
(2) $\eta \leq \frac{c_1 }{ \beta( 2 \tilde{c}_w + L \beta \omega^{\beta}_{t-2}) \| A_* \|  }$,   
(3) $\eta \leq \frac{c_1}{\rho \beta^2 L \omega^{\beta}_{t-2} R  }$,
(4) $\eta \leq \frac{1}{4 \rho \tilde{c}_w R}$, and
(5) $\eta \leq \frac{ c_1 \big( \| A_* \| + \rho R \big) }{ L \beta^2 \omega^{\beta}_{t-2} ( \| A_* \|^2 + \rho^2 R^2  )  } $ for all $t$,
then we have that for all $t$,
\[
\begin{split}
\| w_{t+1} - w_* \|^2 & \leq
\big( 1 - \eta (1+\beta) \big[ \rho \| w_{t} \| - ( \gamma - \frac{ \rho \|w_*\| - \gamma}{ 2 }  ) \big]  \big)
\| w_{t} - w_* \|^2
+ \eta \beta c_0 \| w_{t-1} - w_* \|^2
\\ & + 2 \eta \beta c_1 \big( \| A_* \| + \rho R \big)  \sum_{s=0}^{t-2} \beta^{t-2-s} \| w_s - w_* \|^2
\\ &
- 2 \eta \beta^2 \sum_{s=1}^{t-2} \beta^{t-2-s} (w_s - w_*)^\top
\big(  A_* - \frac{\rho}{2} \big( \| w_* \| - \| w_t \| \big) I_d \big) (w_s - w_*).
\end{split}
\]
\end{lemma}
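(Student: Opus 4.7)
The plan is to start from the inequality produced by Lemma~\ref{lem:00a} and absorb each of its three residual pieces --- the quadratic form in $w_{t-1}-w_*$, the term $\eta^2(c_\beta-2c_w)\|z_{t-2}\|^2$, and the historical sum $2\eta\beta^2\sum_{s=1}^{t-2}\beta^{t-2-s}(d_s+e_s+g_s)$ --- into the shape stated in the conclusion, using the five listed step-size constraints respectively. The coefficient of the main decrease term $\|w_t-w_*\|^2$ is already in the desired form, so all the work is on the three error pieces.

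First I would handle the quadratic form in $w_{t-1}-w_*$. Since $A_*\succeq 0$, I have the operator-norm bound $2c_\beta A_*^2 + 2\beta^2 L I_d + 2c_\beta\rho^2 R^2 I_d \preceq (2c_\beta\|A_*\|^2 + 2\beta^2 L + 2c_\beta\rho^2 R^2)I_d$, and then constraint (1) gives $\eta \cdot (\cdot) \le \beta c_0$, yielding the $\eta\beta c_0\|w_{t-1}-w_*\|^2$ summand. Next I would treat the historical sum $\sum_s\beta^{t-2-s}(d_s+e_s+g_s)$: constraint (4) forces $\|w_*\|+\|w_s\|-4\eta\rho\tilde c_w\|w_s\|^2\ge 0$ (using $\|w_s\|\le R$), so $g_s\le 0$ and may be dropped. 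Regrouping the remainder gives
\begin{equation*}
d_s+e_s = -(w_s-w_*)^\top\!\Bigl(A_* - \tfrac{\rho}{2}(\|w_*\|-\|w_s\|)I_d\Bigr)(w_s-w_*) + 2\eta\tilde c_w(w_s-w_*)^\top A_*^2(w_s-w_*),
\end{equation*}
where the first group is exactly the term appearing on the last line of the conclusion (modulo the $w_t\leftrightarrow w_s$ convention), and the second group is a positive residual that I bound by $2\eta\tilde c_w\|A_*\|^2\|w_s-w_*\|^2$; its contribution to the sum is $4\eta^2\beta^2\tilde c_w\|A_*\|^2\sum_s\beta^{t-2-s}\|w_s-w_*\|^2$.

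For the term $\eta^2(c_\beta-2c_w)\|z_{t-2}\|^2$, if the prefactor is non-positive I drop it, otherwise I use Cauchy--Schwarz on $z_{t-2}=\beta\sum_{s=0}^{t-2}\beta^{t-2-s}\nabla f(w_s)$ to get $\|z_{t-2}\|^2\le \beta^2\omega^\beta_{t-2}\sum_s\beta^{t-2-s}\|\nabla f(w_s)\|^2$. I then split $\|\nabla f(w_s)\|^2\le 2(w_s-w_*)^\top A_*^2(w_s-w_*) + 2\rho^2(\|w_*\|-\|w_s\|)^2\|w_s\|^2 \le 2\|A_*\|^2\|w_s-w_*\|^2 + 2\rho^2 R^2\|w_s-w_*\|^2$ (the latter using $\|w_s\|\le R$). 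The two pieces --- one quadratic in $\|A_*\|$ and one quadratic in $\rho R$ --- together with the $\|A_*\|^2$ residual from the $e_s$ decomposition, sum to something of the form $\bigl(4\eta^2\beta^2\tilde c_w\|A_*\|^2 + 2\eta^2 c_\beta\beta^2\omega^\beta_{t-2}\|A_*\|^2 + 2\eta^2 c_\beta\beta^2\omega^\beta_{t-2}\rho^2R^2\bigr)\sum_s\beta^{t-2-s}\|w_s-w_*\|^2$. Applying constraint (2) to the $\|A_*\|^2$ pieces (rewriting $\|A_*\|^2\le\|A_*\|(\|A_*\|+\rho R)$) and constraints (3), (5) to the remaining pieces shows each is dominated by its share of $2\eta\beta c_1(\|A_*\|+\rho R)\sum_s\beta^{t-2-s}\|w_s-w_*\|^2$, giving the middle term of the conclusion.

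The main obstacle will be the bookkeeping in the last step: the three step-size constraints (2), (3), (5) suggest that the budget $c_1(\|A_*\|+\rho R)$ must be split carefully across (i) the $e_s$-residual proportional to $\tilde c_w\|A_*\|^2$, (ii) the $A_*^2$ part of $\|\nabla f(w_s)\|^2$ proportional to $c_\beta\omega^\beta_{t-2}\|A_*\|^2$, and (iii) the $\rho^2R^2$ part proportional to $c_\beta\omega^\beta_{t-2}\rho^2R^2$; the asymmetric form of the three constraints indicates that the author avoids the crude bound $\|\nabla f\|^2\le(\|A_*\|+\rho R)^2\|w-w_*\|^2$ and instead disentangles the $A_*$ and $\rho R$ contributions so that each residual is matched against a factor it can saturate. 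Once the dominance is established termwise, collecting the resulting inequalities yields exactly the stated upper bound.
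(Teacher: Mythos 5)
Your plan is essentially the paper's: start from Lemma~\ref{lem:00a}, absorb the $w_{t-1}$-quadratic via constraint (1), drop $g_s$ via constraint (4), split $d_s+e_s$ into the preserved quadratic form plus a residual $2\eta\tilde{c}_w A_*^2$, expand $\|z_{t-2}\|^2$ via Jensen, and pay for the remaining residuals with constraints (2), (3), (5). Two small bookkeeping corrections are needed to make the sketch close: (i) the $\|z_{t-2}\|^2$ prefactor is $c_\beta-2c_w$ and must first be bounded by $L\beta$ before constraint (2), which carries $L\beta$ in its denominator, can absorb the residual — this works since $c_\beta-2c_w=L\beta-2\beta^2(1+\beta)/(1-\beta)\le L\beta$, whereas writing $c_\beta$ as you did would overshoot the constraint; and (ii) constraint (5) is not a second share of the $c_1(\|A_*\|+\rho R)$ budget for the $\rho^2R^2$ piece, but rather the handler of the $s=0$ summand of the Jensen expansion of $\|z_{t-2}\|^2$, which cannot be folded into the $e_s$-residuals because the sum of $d_s+e_s+g_s$ in Lemma~\ref{lem:key1} begins at $s=1$, while constraints (2) and (3) cover the combined $A_*^2$ residuals and the $\rho^2R^2$ piece, respectively, over $s\ge 1$.
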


\begin{proof}
From Lemma~\ref{lem:00a}, we have that
\begin{equation} \label{eq:c30}
\begin{split}
& \| w_{t+1} - w_* \|^2 
\\ &
\leq  
\big( 1 - \eta (1+\beta) \big[ \rho \| w_{t} \| - ( \gamma - \frac{ \rho \|w_*\| - \gamma}{ 2 }  ) \big]  \big)
\| w_{t} - w_* \|^2
\\ & 
+ \eta^2  (w_{t-1} - w_*)^\top ( 2 c_{\beta} A_*^2 + 2 \beta^2 L I_d + 2 c_{\beta} \rho^2 R^2I_d ) (w_{t-1} - w_*)
+ \eta^2 ( c_{\beta} - 2 c_{w} )  \| z_{t-2} \|^2
\\ & 
+ 2 \eta \beta^2 \sum_{s=1}^{t-2} \beta^{t-2-s} ( d_s + e_s + g_s ),
\\ &
\overset{(a)}{\leq}  
\big( 1 - \eta (1+\beta) \big[ \rho \| w_{t} \| - ( \gamma - \frac{ \rho \|w_*\| - \gamma}{ 2 }  ) \big]  \big)
\| w_{t} - w_* \|^2
+ \eta \beta c_0 \| w_{t-1} - w_* \|^2
 + \eta^2 ( c_{\beta} - 2 c_w ) \| z_{t-2} \|^2
\\ &
+ 2 \eta \beta^2 \sum_{s=1}^{t-2} \beta^{t-2-s} ( d_s + e_s + g_s ),
\\ & \overset{(b)}{\leq}  
\big( 1 - \eta (1+\beta) \big[ \rho \| w_{t} \| - ( \gamma - \frac{ \rho \|w_*\| - \gamma}{ 2 }  ) \big]  \big)
\| w_{t} - w_* \|^2
+ \eta \beta c_0 \| w_{t-1} - w_* \|^2
\\ & + 2 \eta^2 (c_{\beta} - 2 c_w) \beta^2 \omega^{\beta}_{t-2}
\sum_{s=0}^{t-2} \beta^{t-2-s} 
(w_s - w_*)^\top ( A_*^2 + \rho^2 R^2 I_d ) (w_s -w_*) 
\\ & 
+ 2 \eta \beta^2 \sum_{s=1}^{t-2} \beta^{t-2-s} ( d_s + e_s + g_s ),
\end{split}
\end{equation}
where (a) is by a constraint of $\eta$ so that
$\eta  ( 2 c_{\beta} \| A_* \|_2^2 + 2 \beta^2 L  + 2 c_{\beta} \rho^2 R^2 ) \leq c_0 \beta$,
and (b) is due to the following, denote $\omega^{\beta}_t: = \sum_{s=0}^t \beta $, we have that
\begin{equation} \label{eq:Bt}
\begin{split}
\| z_t \|^2 & := \| \sum_{s=0}^t \beta^{t-s+1} \nabla f(w_s) \|^2
   = \beta^2 (\omega^{\beta}_t)^2 \| \sum_{s=0}^t \frac{ \beta^{t-s} }{ \omega^{\beta}_t } \nabla f(w_s) \|^2
\\ & \leq  \beta^2 (\omega^{\beta}_t)^2  \big( \sum_{s=0}^t \frac{ \beta^{t-s} }{ \omega^{\beta}_t } \| \nabla f(w_s) \|^2 \big)
\\ & \leq 2 \beta^2 \omega^{\beta}_t \big( \sum_{s=0}^t \beta^{t-s} 
(w_s - w_*)^\top ( A_*^2 + \rho^2 R^2 I_d ) (w_s -w_*) \big).
\end{split}
\end{equation}
where the first inequality of (\ref{eq:Bt}) is due to Jensen's inequality and the second inequality of (\ref{eq:Bt}) is due to the following upper-bound of the gradient norm,
$\| \nabla f(w_s) \|^2  = \| A_* (w_s - w_* ) - \rho ( \| w_* \| - \| w_s \| ) w_s \|^2
\leq 2 ( w_s - w_* )^\top A_*^2 (w_s - w_*)
+ 2 \rho^2 (  \| w_* \| - \| w_s \| )^2 \| w_s \|^2
\leq 2 ( w_s - w_* )^\top A_*^2 (w_s - w_*)
+ 2 \rho^2 \| w_* - w_s \|^2 R^2
$.
By the definitions of $d_s,e_s,g_s$, we further have that
\begin{equation} \label{eq:deg}
\begin{split}
 \| w_{t+1} - w_* \|^2 
 \leq  &
\big( 1 - \eta (1+\beta) \big[ \rho \| w_{t} \| - ( \gamma - \frac{ \rho \|w_*\| - \gamma}{ 2 }  ) \big]  \big)
\| w_{s} - w_* \|^2
+ \eta \beta c_0 \| w_{t-1} - w_* \|^2
\\ &
- 2 \eta \beta^2 
\sum_{s=1}^{t-2} \beta^{t-2-s} 
(w_s - w_*)^\top \big( A_* - \frac{\rho}{2} (\| w_* \| - \| w_s \|) I_d \big) (w_s -w_*) 
\\ &
+ 2 \eta \beta^2 \sum_{s=1}^{t-2} \beta^{t-2-s} 
(w_s - w_*)^\top \big(  \eta (2 \tilde{c}_w +  (c_{\beta} - 2 c_w) \omega^{\beta}_{t-2} ) A_*^2            \big) (w_s - w_*)
\\ & 
+  2 \eta^2 \rho^2 R^2 \beta^2 \omega^{\beta}_{t-2} (c_{\beta} - 2 c_w) 
\sum_{s=1}^{t-2} \beta^{t-2-s} \| w_s - w_* \|^2 
\\ & 
- \eta \beta^2 \rho
\sum_{s=1}^{t-2} \beta^{t-2-s}
( \| w_* \| - \| w_s \|)^2
\big( \| w_* \| + \| w_s \| - 4 \eta \rho \tilde{c}_w \| w_s \|^2  \big)
\\ & 
+ \eta^2 (c_{\beta} - 2 c_w) 2 \beta^{t} \omega^{\beta}_{t-2} 
\big( \| A_* \|^2_2 + \rho^2 R^2 ) \| w_0 - w_* \|^2
\\ 
\leq &
\big( 1 - \eta (1+\beta) \big[ \rho \| w_{t} \| - ( \gamma - \frac{ \rho \|w_*\| - \gamma}{ 2 }  ) \big]  \big)
\| w_{t} - w_* \|^2
+ \eta \beta c_0 \| w_{t-1} - w_* \|^2
\\ &
- 2 \eta \beta^2 \sum_{s=1}^{t-2} \beta^{t-2-s} (w_s - w_*)^\top
\big(  A_* - \frac{\rho}{2} \big( \| w_* \| - \| w_s \| \big) I_d \big) (w_s - w_*)
\\ & + 2 \eta \beta c_1  
\sum_{s=1}^{t-2} \beta^{t-2-s} 
(w_s - w_*)^\top ( A_* + \rho R I_d)  (w_s -w_*) 
\\ & 
- \eta \beta^2 \rho
\sum_{s=1}^{t-2} \beta^{t-2-s}
( \| w_* \| - \| w_s \|)^2
\big( \| w_* \| + \| w_s \| - 4 \eta \rho \tilde{c}_w \| w_s \|^2  \big)
\\ & 
+ \eta^2 (c_{\beta} - 2 c_w) 2 \beta^{t} \omega^{\beta}_{t-2} 
\big( \| A_* \|^2_2 + \rho^2 R^2 ) \| w_0 - w_* \|^2,
\end{split}
\end{equation}
where the last inequality is due to
(1):
$
 2 \eta^2 \beta^2 (2 \tilde{c}_w +  (c_{\beta} - 2 c_w)  \omega^{\beta}_{t-2}) A_*^2 \preceq  
 2 \eta^2 \beta^2 (2 \tilde{c}_w +  L \beta  \omega^{\beta}_{t-2}) A_*^2
\preceq 2 \eta \beta c_1 A_*$, as $c_{\beta} - 2 c_w \leq L \beta$ and that
$\eta \leq \frac{c_1}{ \beta( 2 \tilde{c}_w + L \beta \omega^{\beta}_{t-2}) \| A_* \|  }$,
and (2) that $2 \eta^2 \rho^2 R^2 \beta^2 \omega^{\beta}_{t-2} (c_{\beta} - 2 c_w)  
\leq 
2 \eta^2 \rho^2 R^2 L \beta^3 \omega^{\beta}_{t-2} 
\leq 2 \eta \rho \beta c_1 R,$
as
$\eta  \leq \frac{c_1}{\rho \beta^2 L \omega^{\beta}_{t-2} R }.$

To continue, let us bound the last two terms on (\ref{eq:deg}).
For the second to the last term, by using that $\| w_t \| \leq R$ for all $t$
and that $\eta \leq \frac{1}{4 \rho \tilde{c}_w R}$,
we have that
$\big( \| w_* \| + \| w_s \| - 4 \eta \rho \tilde{c}_w \| w_s \|^2  \big) \geq \| w_*\|$.
Therefore, we have that the second to last term on (\ref{eq:deg}) is non-positive, namely,
$- \eta \beta^2 \rho
\sum_{s=1}^{t-2} \beta^{t-2-s}
( \| w_* \| - \| w_s \|)^2
\big( \| w_* \| + \| w_s \| - 4 \eta \rho \tilde{c}_w \| w_s \|^2  \big) \leq 0$.
For the last term on  (\ref{eq:deg}), by using that 
$\eta \leq \frac{ c_1 \big( \| A_* \| + \rho R \big) }{ L \beta^2  \omega^{\beta}_{t-2} ( \| A_* \|^2 + \rho^2 R^2  )  } $,
we have that
$\eta^2 (c_{\beta} - 2 c_w) 2 \beta^{t} \omega^{\beta}_{t-2} 
\big( \| A_* \|^2_2 + \rho^2 R^2 ) \| w_0 - w_* \|^2
\leq 
\eta^2 2 L \beta^{t+1} \omega^{\beta}_{t-2} 
\big( \| A_* \|^2_2 + \rho^2 R^2 ) \| w_0 - w_* \|^2
\leq  2 \eta \beta^{t-1} c_1 \big( \| A_* \| + \rho R \big) \| w_0 - w_* \|^2   $. 

Combining the above results, we have that
\begin{equation}
\begin{split}
 \| w_{t+1} - w_* \|^2 & 
\leq  
\big( 1 - \eta (1+\beta) \big[ \rho \| w_{t} \| - ( \gamma - \frac{ \rho \|w_*\| - \gamma}{ 2 }  ) \big]  \big)
\| w_{t} - w_* \|^2
+ \eta \beta c_0 \| w_{t-1} - w_* \|^2
\\ & + 2 \eta \beta c_1 \big( \| A_* \| + \rho R \big) \sum_{s=0}^{t-2} \beta^{t-2-s} \| w_s - w_* \|^2
\\ &
- 2 \eta \beta^2 \sum_{s=1}^{t-2} \beta^{t-2-s} (w_s - w_*)^\top
\big(  A_* - \frac{\rho}{2} \big( \| w_* \| - \| w_s \| \big) I_d \big) (w_s - w_*).
\end{split}
\end{equation}

\end{proof}

The following lemma will be used for getting the iteration complexity from Lemma~\ref{lem:0a}.

\begin{lemma} \label{lem:HBrate}
For a non-negative sequence $\{ y_t \}_{t\geq0}$, suppose that it satisfies
\begin{equation} \label{eq:HBratedym} 
\begin{split}
y_{t+1} & \leq p y_t + q y_{t-1} + r \beta \bar{y}_{t-2}
+ x \beta \sum_{s=\tau}^{t-2} \beta^{t-2-s} y_s + z \beta^{t- \tau + 1}
, \text{ for all } t \geq \tau,
\end{split}
\end{equation}
for non-negative numbers $p,q,r,x,z \geq 0$ and $\beta \in [0,1)$, where we denote 
\begin{equation}
\bar{y}_{t} := \sum_{s=0}^{t} \beta^{t-s} y_s.
\end{equation}
Fix some numbers $\phi \in (0,1)$ and $\psi > 0$.
Define $\theta = \frac{-p + \sqrt{p^2 + 4( q + \phi  )}   }{ 2 } $.
Suppose that 
\[
\beta \leq \frac{\frac{p + \sqrt{p^2 + 4( q + \phi )}   }{ 2 }  \phi}{ r+\phi+x }
\text{ and } 
\beta \leq \frac{p + \sqrt{p^2 + 4( q + \phi )}   }{ 2 } - \frac{1}{\psi}.
\]
then we have that for all $t \geq \tau$, 
\begin{equation}
y_t \leq \big(  \frac{p + \sqrt{p^2 + 4( q + \phi )}   }{ 2 }  \big)^{t-\tau}  c_{\tau,\beta},
\end{equation}
where $c_{\tau,\beta}$ is an upper bound that satisfies, for any $t \leq \tau$, 
\begin{equation}
 y_{t} + \theta y_{t-1} + \phi \bar{y}_{t-2} + \beta  \psi z \leq c_{\tau,\beta},
\end{equation}
\end{lemma}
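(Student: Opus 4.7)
The plan is to build a Lyapunov function $V_t$ that (a) dominates $y_t$, (b) is bounded by $c_{\tau,\beta}$ at $t=\tau$, and (c) contracts at rate $\lambda := \tfrac{p+\sqrt{p^2+4(q+\phi)}}{2} = p+\theta$ per step. The natural candidate is
\[
V_t \;:=\; y_t + \theta\, y_{t-1} + \phi\, \bar y_{t-2} + \psi\, \beta^{\,t-\tau+1}\, z, \qquad t \geq \tau,
\]
which matches the given initial-condition bound (so $V_\tau \leq c_{\tau,\beta}$) and satisfies $y_t \leq V_t$ by nonnegativity. The desired conclusion $y_t \leq \lambda^{t-\tau} c_{\tau,\beta}$ will then follow from a trivial induction, so the entire task reduces to establishing the one-step contraction $V_{t+1} \leq \lambda V_t$ for all $t \geq \tau$.

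To derive that contraction, I would substitute the recurrence (\ref{eq:HBratedym}) into $V_{t+1}$, use the identity $\bar y_{t-1} = \beta\, \bar y_{t-2} + y_{t-1}$ to reshape the $\phi\, \bar y_{t-1}$ summand, and bound the partial sum $\sum_{s=\tau}^{t-2}\beta^{t-2-s}y_s \leq \bar y_{t-2}$ (legal since $y_s \geq 0$). Grouping like terms and matching against $\lambda V_t$ reduces the proof to four scalar inequalities, one for each of the ``basis quantities'' $y_t$, $y_{t-1}$, $\bar y_{t-2}$, $\beta^{t-\tau+1}z$: namely $p+\theta \leq \lambda$; $q+\phi \leq \lambda\theta$; $(r+\phi+x)\beta \leq \lambda\phi$; and $1+\psi\beta \leq \lambda\psi$. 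The first two are forced by the quadratic $\lambda^2 - p\lambda - (q+\phi) = 0$ together with $\theta = \lambda - p$, which recovers exactly the $\theta$ and $\lambda$ specified in the statement. The remaining two rearrange to $\beta \leq \tfrac{\lambda\phi}{r+\phi+x}$ and $\beta \leq \lambda - 1/\psi$, matching the two $\beta$ hypotheses.

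The only real design decision is the form of $V_t$. The summand $\phi\, \bar y_{t-2}$ is tailored to absorb both the explicit memory contribution $r\beta\,\bar y_{t-2}$ and the truncated sum forcing $x\beta \sum_{s=\tau}^{t-2}\beta^{t-2-s}y_s$ in a single weight $\phi$. The less standard term $\psi\, \beta^{t-\tau+1} z$ is chosen so that its one-step shift $\psi\, \beta^{t-\tau+2} z$ plus the inhomogeneous forcing $z\,\beta^{t-\tau+1}$ from the recurrence together lie below $\lambda\, \psi\, \beta^{t-\tau+1} z$; this turns what would otherwise be an additive residue into a contracting quantity. Once this ansatz is fixed the two $\beta$ hypotheses are exactly tight for the term-by-term comparison, and I do not anticipate any real obstacle beyond the algebraic bookkeeping sketched above.
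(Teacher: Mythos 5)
Your proposal is correct and takes essentially the same approach as the paper: the paper defines (implicitly) the same Lyapunov quantity $V_t = y_t + \theta y_{t-1} + \phi \bar y_{t-2} + \psi z\beta^{t-\tau+1}$, shows $V_{t+1}\le (p+\theta)V_t$ via the same algebraic reshuffling (the identity $\bar y_{t-1}=y_{t-1}+\beta\bar y_{t-2}$, extending the truncated sum to $\bar y_{t-2}$, and the four coefficient comparisons), and telescopes. Your framing in terms of an explicit Lyapunov function and the list of four scalar inequalities is a cleaner presentation of the same argument.
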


\begin{proof}

For a non-negative sequence $\{ y_t \}_{t\geq0}$, suppose that it satisfies
\begin{equation} 
\begin{split}
y_{t+1} & \leq p y_t + q y_{t-1} + r \beta \bar{y}_{t-2}
+ x \beta \sum_{s=\tau}^{t-2} \beta^{t-2-s} y_s + z \beta^{t- \tau + 1}
, \text{ for all } t \geq \tau,
\end{split}
\end{equation}
for non-negative numbers $p,q,r,x,z$ and $\beta \in [0,1)$, where we denote 
\begin{equation}
\bar{y}_{t} := \sum_{s=0}^{t} \beta^{t-s} y_s.
\end{equation}

\begin{equation}
\begin{split}
& y_{t+1} + \theta y_t + \phi \bar{y}_{t-1} + \psi z \beta^{t- \tau + 2}
\\ & 
\overset{(a)}{\leq} (p + \theta) y_t +  q  y_{t-1} + r \beta \bar{y}_{t-2} + \phi \bar{y}_{t-1} + x \beta \sum_{s=\tau}^{t-2} \beta^{t-2-s} y_s + (\psi \beta +1) z \beta^{t- \tau + 1}
\\ &
\overset{(b)}{=} (p + \theta ) y_t + (q + \phi) y_{t-1} + (r \beta + \phi \beta ) \bar{y}_{t-2} + x \beta \sum_{s=\tau}^{t-2} \beta^{t-2-s} y_s +  (\psi \beta +1) z \beta^{t- \tau + 1}
\\ & 
\leq (p + \theta) ( y_t + \frac{ ( q + \phi ) }{ p + \theta } y_{t-1} ) + \beta (r + \phi + x )  \bar{y}_{t-2} +  (\psi \beta +1) z \beta^{t- \tau + 1}
\\ & 
\overset{(c)}{\leq }
(p + \theta ) ( y_t + \theta y_{t-1} ) + \beta (r + \phi + x) \bar{y}_{t-2} +  (\psi \beta+1) z \beta^{t- \tau + 1}
\\ & 
\overset{(d)}{\leq }
(p + \theta ) ( y_t + \theta y_{t-1} + \phi \bar{y}_{t-2} + \psi z \beta^{t-\tau+1} )
\leq (p + \theta )^2 ( y_{t-1} +  \theta y_{t-2}  + \phi  \bar{y}_{t-3} +  \psi z \beta^{t-\tau-1} ) \leq \dots 
\\ & := (p + \theta ) ^{t - \tau +1} c_{\tau,\beta}.
\end{split}
\end{equation}
where $(a)$ is due to the dynamics (\ref{eq:HBratedym}) , $(b)$ is because $\bar{y}_{t-1} = y_{t-1} + \beta \bar{y}_{t-2}$,
$(c)$ is by,
\begin{equation} \label{e:1}
\frac{ q + \phi  }{p+\theta } \leq  \theta,
\end{equation}
and $(d)$ is by
\begin{equation} \label{e:2}
\beta \leq \frac{(p+\theta) \phi}{ r+\phi + x}
\text{ and } \beta \leq p+\theta - \frac{1}{\psi}.
\end{equation}
Note that (\ref{e:1}) holds if,
\begin{equation}
\theta \geq \frac{-p + \sqrt{ p^2 + 4 (q + \phi )   }  }{2}.
\end{equation}
Let us choose the minimal
$\theta = \frac{-p + \sqrt{ p^2 + 4 (q + \phi )   }  }{2}.$
So we have $p + \theta =
\frac{ p + \sqrt{ p^2 + 4 ( q + \phi )   }  }{2}$, which completes the proof.
\end{proof}

\begin{lemma} \label{lem:0b}
Assume that for all $t$, $\| w_t \| \leq R$ for some number $R$.
Fix the numbers $c_0, c_1 > 0$ in Lemma~\ref{lem:0a}
so that $c_1 \leq \frac{ c_0 }{40 ( \| A_* \| + \rho R )} $.
Suppose that the step size $\eta$ satisfies 
(1) $\eta  \leq \frac{ c_0 \beta }{ 2 c_{\beta} \| A_* \|^2_2 + 2 \beta^2 L  + 2 c_{\beta} \rho^2 R^2 }$,
(2) $\eta \leq \frac{c_1 }{ \beta( 2 \tilde{c}_w + L \beta \omega^{\beta}_{t-2}) \| A_* \|  }$,   
(3) $\eta \leq \frac{c_1}{ \rho \beta^2 L \omega^{\beta}_{t-2} R  }$,
(4) $\eta \leq \frac{1}{4 \rho \tilde{c}_w R}$,
(5) $\eta \leq \frac{ c_1 \big( \| A_* \| + \rho R \big) }{ L \beta^2 \omega^{\beta}_{t-2} ( \| A_* \|^2 + \rho^2 R^2  )  } $,
(6) $\eta \leq \frac{1 + \beta}{2 \rho R}$, and
(7) $\eta \leq \frac{1+\beta}{4 \| A_* \|_2} $ for all $t$,
where
$c_{\beta} := ( 2 \beta^2 + 4 \beta + L \beta)$, $L := \| A \|_2 + 2 \rho R$,
$c_w:=\frac{2 \beta}{1-\beta}$, $\tilde{c}_w := (1+c_w) + (1+c_w)^2$ and  $\omega^{\beta}_{t-2}:= \sum_{s=0}^{t-2} \beta$. Furthermore,
suppose that the momentum parameter $\beta$ satisfies
$\beta \leq \min\{  \frac{10}{11} \big( 1- \eta (1 + \beta ) \delta  \big)
, 1 - \eta (1 + \beta ) \delta -0.1 \big\}.$
Fix numbers $\delta,\delta'$ so that 
$\frac{1}{2} ( \rho \| w_* \| - \gamma ) > \delta > 0 \text{ and } \frac{c_0}{20} \geq \delta' > 0$.
Assume that $\| w_t \|$ is non-decreasing.
If $\rho \| w_t \| \geq \gamma - \frac{1}{2} ( \rho \| w_* \| - \gamma ) + \delta$
and $\rho \| w_t \| \geq \gamma - \delta'$
for some $t \geq t_*$, then we have that
\[
\begin{aligned}
\| w_{t} - w_* \|^2 
 & \leq
 \big( 1- \eta (1 + \beta ) \delta  + \frac{ 2 \eta \beta c_0 }{ 1- \eta (1 + \beta ) \delta} \big)^{t - t_*} c_{\beta,t_*}
\end{aligned}
\]
where
$c_{\beta,t_*}$ is a number that satisfies for any $t \leq t_*$,
$ \| w_{t} - w_* \|^2 + \frac{2 \eta c_0 \beta}{ 1 - \eta (1+\beta) \delta } \| w_{t-1} - w_* \|^2 + \eta \beta c_0 
\sum_{s=0}^{t-2} \beta^{t-2-s} \| w_s - w_* \|^2 + \max\{0,
\beta 20 \eta \sum_{s=1}^{t_*-1} \beta^{t_* -1 - s }
\big( \frac{\gamma}{2} - \frac{\rho}{2} \| w_s \|  \big)
\| w_s - w_* \|^2 \} \leq c_{\beta,t_*}$ .
\end{lemma}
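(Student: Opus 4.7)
The plan is to apply Lemma~\ref{lem:0a} to obtain a weighted-history recursion on $y_t := \|w_t - w_*\|^2$, to split that recursion at the entry time $t_*$ so that the pre-$t_*$ contribution gets absorbed into the constant $c_{\beta,t_*}$, and then to invoke Lemma~\ref{lem:HBrate} to convert what remains into the stated geometric rate. The seven conditions on $\eta$ are exactly those required by Lemma~\ref{lem:0a}, so applying it and using the hypothesis $\rho\|w_t\|\geq\gamma-\tfrac{1}{2}(\rho\|w_*\|-\gamma)+\delta$ in the leading coefficient yields, for all $t$,
\[
y_{t+1}\leq (1-\eta(1+\beta)\delta)\,y_t+\eta\beta c_0\,y_{t-1}+2\eta\beta c_1(\|A_*\|+\rho R)\!\sum_{s=0}^{t-2}\beta^{t-2-s}y_s-2\eta\beta^2\!\sum_{s=1}^{t-2}\beta^{t-2-s}Q_s,
\]
where $Q_s:=(w_s-w_*)^\top\bigl(A_*-\tfrac{\rho}{2}(\|w_*\|-\|w_s\|)I_d\bigr)(w_s-w_*)$.

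Next I split the last sum at $s=t_*$. Because $\|w_s\|$ is non-decreasing and $\rho\|w_{t_*}\|\geq\gamma-\delta'$, for every $s\geq t_*$ the smallest eigenvalue $-\gamma+\tfrac{\rho}{2}(\|w_*\|+\|w_s\|)$ of the matrix inside $Q_s$ is non-negative, so the tail portion $-\sum_{s\geq t_*}\beta^{t-2-s}Q_s\leq 0$ may be discarded. For $s<t_*$ this eigenvalue can be negative, but it is at least $-(\gamma/2-\rho\|w_s\|/2)$ once $\rho\|w_*\|\geq\gamma$ is used, so the pre-$t_*$ portion is bounded above by $\beta^{\,t-t_*+1}$ times the max-clamped sum that appears inside $c_{\beta,t_*}$; the coefficient $20\eta\beta$ in the displayed expression matches after invoking $c_1(\|A_*\|+\rho R)\leq c_0/40$ to compare that sum with the $c_1$-weighted cross-term. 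The same $\beta^{\,t-t_*}$ trick absorbs the $s<t_*$ portion of the $c_1$-weighted sum into $c_{\beta,t_*}$.

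What remains is a recursion of the form required by Lemma~\ref{lem:HBrate}, with $p=1-\eta(1+\beta)\delta$, $q=\eta\beta c_0$, $r=0$, $x=\eta c_0/20$, and $z$ gathering the pre-$t_*$ residual. Choosing $\phi=\eta\beta c_0$ and a suitable $\psi$, the two hypotheses $\beta\leq(p+\theta)\phi/(r+\phi+x)$ and $\beta\leq p+\theta-1/\psi$ of Lemma~\ref{lem:HBrate} reduce, after substituting these values and using $p+\theta\geq p$, to the two stated conditions $\beta\leq\tfrac{10}{11}(1-\eta(1+\beta)\delta)$ and $\beta\leq 1-\eta(1+\beta)\delta-0.1$; this is where those specific numerical constants come from. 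Lemma~\ref{lem:HBrate} then yields the growth factor $\bigl(p+\sqrt{p^2+4(q+\phi)}\bigr)/2$, which by the elementary estimate $\sqrt{p^2+4u}\leq p+2u/p$ is at most $1-\eta(1+\beta)\delta+2\eta\beta c_0/(1-\eta(1+\beta)\delta)$, matching the claimed bound.

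The main obstacle is the bookkeeping in the middle step: verifying that the bad pre-$t_*$ portion of the $-2\eta\beta^2\sum Q_s$ sum, together with the pre-$t_*$ portion of the $c_1$-weighted history sum, absorbs into $c_{\beta,t_*}$ with exactly the displayed coefficient $20\eta\beta$ and the exact scalar $\tfrac{\gamma}{2}-\tfrac{\rho}{2}\|w_s\|$. A secondary obstacle is the numerical matching of the two $\beta$-constraints from Lemma~\ref{lem:HBrate} with the constants $10/11$ and $0.1$, which is routine but requires keeping track of the $40$ appearing in the hypothesis $c_1\leq c_0/(40(\|A_*\|+\rho R))$ and of the slack introduced by bounding $p+\theta$ below by $p$.
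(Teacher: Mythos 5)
Your overall strategy — Lemma~\ref{lem:0a} to obtain a weighted-history recursion on $y_t = \|w_t - w_*\|^2$, a split of the history at $t_*$, and an appeal to Lemma~\ref{lem:HBrate} with $\tau = t_*$ — is exactly the paper's route. The gap is in the middle step where you handle the two history sums.

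First, you claim that for $s \geq t_*$ the smallest eigenvalue $-\gamma + \tfrac{\rho}{2}(\|w_*\| + \|w_s\|)$ of $A_* - \tfrac{\rho}{2}(\|w_*\|-\|w_s\|)I_d$ is non-negative, citing $\rho\|w_s\| \geq \gamma - \delta'$. That hypothesis only gives $-\gamma + \tfrac{\rho}{2}(\|w_*\|+\|w_s\|) \geq -\gamma + \tfrac{\rho\|w_*\|}{2} + \tfrac{\gamma-\delta'}{2} = \tfrac{\rho\|w_*\|-\gamma}{2} - \tfrac{\delta'}{2}$, which can be negative because the lemma imposes no upper bound on $\delta'$ relative to $\rho\|w_*\|-\gamma$. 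The paper therefore does \emph{not} discard the $s\geq t_*$ tail of the $Q_s$ sum; it keeps the residual as $A_* - \tfrac{\rho}{2}(\|w_*\|-\|w_s\|)I_d \succeq -\tfrac{\delta'}{2}I_d$, which contributes the term $\eta\beta^2\delta'\sum_{s=t_*}^{t-2}\beta^{t-2-s}y_s$ and becomes $x = \eta\beta\delta'$ in Lemma~\ref{lem:HBrate}. (You could make non-negativity work by invoking the \emph{other} hypothesis $\rho\|w_s\| \geq \gamma - \tfrac{1}{2}(\rho\|w_*\|-\gamma) + \delta$, which does force the eigenvalue positive, but you cited the wrong one, and the paper does not take that shortcut.)

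Second, your Lemma~\ref{lem:HBrate} parameters do not match what the recursion actually produces. You set $r = 0$, but the $c_1$-weighted sum $2\eta\beta c_1(\|A_*\|+\rho R)\sum_{s=0}^{t-2}\beta^{t-2-s}y_s$ from Lemma~\ref{lem:0a} does not vanish; the paper carries it whole as $r = 2\eta(\|A_*\|+\rho R)c_1$, using $\beta\bar{y}_{t-2}$ rather than splitting it at $t_*$. Relatedly, the $20\eta\beta$ coefficient in front of $D_{t_*}$ inside $c_{\beta,t_*}$ does not arise from $c_1 \leq c_0/(40(\|A_*\|+\rho R))$; it arises from the choice $\psi = 10$ in Lemma~\ref{lem:HBrate} combined with $z = 2\eta D_{t_*}\mathbbm{1}\{D_{t_*}\geq 0\}$ so that $\beta\psi z = 20\eta\beta D_{t_*}$. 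The $c_1 \leq c_0/40$ and $\delta' \leq c_0/20$ hypotheses enter only at the very end, where the paper verifies $\beta(r + \phi + x) \leq \tfrac{p+\sqrt{p^2+4(q+\phi)}}{2}\phi$ with $\phi = q = \eta\beta c_0$ by bounding $r \leq \eta c_0/20$ and $x \leq \eta\beta c_0/20$. Once $r$, $x$, and $z$ are identified correctly, the remaining reduction of the two $\beta$-constraints to $\beta \leq \tfrac{10}{11}(1-\eta(1+\beta)\delta)$ and $\beta \leq 1-\eta(1+\beta)\delta - 0.1$, and the final estimate $\tfrac{p+\sqrt{p^2+4(q+\phi)}}{2} \leq p + 2(q+\phi)/(2p)$, go through as you say.
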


\begin{proof}
From Lemma~\ref{lem:0a}, we have that
\[
\begin{split}
\| w_{t+1} - w_* \|^2 & \leq
\big( 1 - \eta (1+\beta) \big[ \rho \| w_{t} \| - ( \gamma - \frac{ \rho \|w_*\| - \gamma}{ 2 }  ) \big]  \big)
\| w_{t} - w_* \|^2
+ \eta \beta c_0  \| w_{t-1} - w_* \|^2
\\ & 
+ 
2 \eta \beta c_1 (\| A \|_* + \rho R ) \sum_{s=0}^{t-2} \beta^{t-2-s} \| w_s - w_* \|^2
\\ &
- 2 \eta \beta^2 \sum_{s=1}^{t-2} \beta^{t-2-s} (w_s - w_*)^\top
\big(  A_* - \frac{\rho}{2} \big( \| w_* \| - \| w_s \| \big) I_d \big) (w_s - w_*).
\end{split}
\]
Using that for all $t \geq t_*$, $\rho \| w_{t} \| \geq \gamma - \frac{1}{2} ( \rho \| w_* \| - \gamma ) + \delta$,
we have that
\begin{equation} \label{eq:s0}
\begin{split}
\| w_{t+1} - w_* \|^2 & \leq  
\big( 1 - \eta (1+\beta) \delta \big)
\| w_{t} - w_* \|^2
+ \eta \beta c_0 \| w_{t-1} - w_* \|^2
\\ & 
 + 2 \eta \beta c_1 (\| A \|_* + \rho R ) \sum_{s=0}^{t-2} \beta^{t-2-s} \| w_s - w_* \|^2
\\ &
- 2 \eta \beta^2 \sum_{s=1}^{t-2} \beta^{t-2-s} (w_s - w_*)^\top
\big(  A_* - \frac{\rho}{2} \big( \| w_* \| - \| w_s \| \big) I_d \big) (w_s - w_*)
\\ & 
= \big( 1 - \eta (1+\beta) \delta \big)
\| w_{t} - w_* \|^2
+ \eta \beta c_0 \| w_{t-1} - w_* \|^2
\\ & 
 + 2 \eta \beta c_1 (\| A \|_* + \rho R ) \sum_{s=0}^{t-2} \beta^{t-2-s} \| w_s - w_* \|^2
\\ &
- 2 \eta \beta^2 \sum_{s=t_*}^{t-2} \beta^{t-2-s} (w_s - w_*)^\top
\big(  A_* - \frac{\rho}{2} \big( \| w_* \| - \| w_s \| \big) I_d \big) (w_s - w_*)
\\ & 
- 2 \eta \beta^2 \sum_{s=1}^{t_*-1} \beta^{t-2-s} (w_s - w_*)^\top
\big(  A_* - \frac{\rho}{2} \big( \| w_* \| - \| w_s \| \big) I_d \big) (w_s - w_*).
\end{split}
\end{equation}
We bound the second to last term of (\ref{eq:s0}) as follows.
Note that for $s \geq t_*$, we have that $\rho \| w_s \| \geq \gamma - \delta'$ by the 
assumption that $\| w_t \|$ is non-decreasing. Therefore,
once $\rho \|w_t\|$ exceeds any level blow $\gamma-\delta'$, it will not fall below $\gamma-\delta'$. So we have that
\begin{equation}
\begin{split}
A_* - \frac{\rho}{2} \big( \| w_* \| - \| w_s \| \big) I_d
& \overset{(a)}{\succeq} A_* - \frac{\rho}{2} \| w_* \|  I_d + \frac{\gamma - \delta' }{2} I_d 
\\ & \overset{(b)}{\succeq} (-\gamma + \rho \|w_* \| ) I_d 
- \frac{\rho}{2} \| w_* \|  I_d +  \frac{\gamma - \delta' }{2} I_d 
\\ & \overset{(c)}{\succeq} -  \frac{\delta'}{2}  I_d 
\end{split}
\end{equation}
where (a) uses that $\rho \| w_s \| \geq \gamma - \delta'$ for some number $\delta'>0$,
(b) uses the fact that $A_* \succeq ( -\gamma + \rho \|w_*\| ) I_d$, 
and (c) uses the fact that $\rho \| w_* \| \geq \gamma$.
Using the result, we can bound the second to the last term of (\ref{eq:s0}) as
\begin{equation} \label{eq:s1}
\begin{split}
& - 2 \eta \beta^2 \sum_{s=t_*}^{t-2} \beta^{t-2-s} (w_s - w_*)^\top
\big(  A_* - \frac{\rho}{2} \big( \| w_* \| - \| w_s \| \big) I_d \big) (w_s - w_*)
\\ & \leq \eta \beta^2 \delta' \sum_{s=t_*}^{t-2} \beta^{t-2-s} \| w_s - w_* \|^2.
\end{split}
\end{equation}
Now let us switch to the last term of (\ref{eq:s0}). Using the fact that
$A_* \succeq ( -\gamma + \rho \|w_*\| ) I_d$,
and that $\rho \| w_* \| \geq \gamma$ by the characterization of the optimizer $\| w_* \|$,
we have that
\begin{equation}
\begin{split}
A_* - \frac{\rho}{2} \big( \| w_* \| - \| w_s \| \big) I_d
 & \succeq \frac{\rho}{2} \big( \| w_* \| + \| w_s \| \big) I_d - \gamma
  \succeq \big( \frac{\rho}{2} \| w_s \|  - \frac{\gamma}{2} \big) I_d.
\end{split}
\end{equation}
So we can bound the last term as
\begin{equation} \label{eq:s2}
\begin{split}
&
- 2 \eta \beta^2 \sum_{s=1}^{t_*-1} \beta^{t-2-s} (w_s - w_*)^\top
\big(  A_* - \frac{\rho}{2} \big( \| w_* \| - \| w_s \| \big) I_d \big) (w_s - w_*)
\\ & 
\leq 2 \eta \beta^{t-t_* +1} \sum_{s=1}^{t_*-1} \beta^{t_*-1 -s }
(w_s - w_*)^\top \big( \frac{\gamma}{2} - \frac{\rho}{2} \| w_s \|  \big) I_d (w_s - w_*) := 2 \eta \beta^{t-t_* +1} D_{t*},
\end{split}
\end{equation}
where we denote $D_{t*} := \sum_{s=1}^{t_*-1} \beta^{t_*-1 - s }
(w_s - w_*)^\top \big( \frac{\gamma}{2} - \frac{\rho}{2} \| w_s \|  \big) I_d (w_s - w_*)$.
Combining (\ref{eq:s0}), (\ref{eq:s1}), (\ref{eq:s2}), we have that
\begin{equation} \label{eq:s3}
\begin{split}
\| w_{t+1} - w_* \|^2 & \leq  
 \big( 1 - \eta (1+\beta) \delta \big)
\| w_{t} - w_* \|^2
+ \eta \beta c_0 \| w_{t-1} - w_* \|^2
\\ & 
 + 2 \eta \beta c_1 (\| A \|_* + \rho R ) \sum_{s=0}^{t-2} \beta^{t-2-s} \| w_s - w_* \|^2
\\ & + \eta \beta^2 \delta' \sum_{s=t_*}^{t-2} \beta^{t-2-s} \| w_s - w_* \|^2
 +  2 \eta \beta^{t-t_* +1} D_{t*}.
\end{split}
\end{equation}

Now we are ready to use Lemma~\ref{lem:HBrate}.
Set $p,q,r,x,z$ in Lemma~\ref{lem:HBrate} as follows.
\begin{itemize}
\item $p = 1 - \eta (1+\beta) \delta$
\item $q = \eta \beta c_0 $
\item $r = 2 \eta ( \| A_* \| + \rho R ) c_1$
\item $x = \eta \beta \delta' $
\item $z = 2 \eta D_{t*} \mathbbm{1}\{D_{t*} \geq 0\}$.
\item $\phi = q = \eta \beta c_0$
\item $\psi = 10$.
\end{itemize}
we have that 
\begin{equation}
\begin{split}
\| w_{t} - w_* \|^2 
 & \leq \big( \frac{ 1- \eta (1 + \beta ) \delta + \sqrt{ (1- \eta (1 + \beta ) \delta)^2 + 4 (2 \eta \beta c_0)   } }{2}  \big)^{t - t_*} c_{\beta,t_*}
\\ & \leq
 \big( 1- \eta (1 + \beta ) \delta  + \frac{ 2 \eta \beta c_0 }{ 1- \eta (1 + \beta ) \delta} \big)^{t - t_*} c_{\beta,t_*}
\end{split}
\end{equation}
where we use that
$\frac{a+ a\sqrt{ 1 + 4 b / a^2 } }{2}  \leq \frac{a + a ( 1 + \frac{2 b}{a^2}) }{2}
= a + \frac{b}{a}$ for number $a,b$ that satisfy $4 b / a^2 \geq -1$.
Now let us check the conditions of Lemma~\ref{lem:HBrate}
to see if 
$\beta \leq \frac{\frac{p + \sqrt{p^2 + 4( q + \phi )}   }{ 2 }  \phi}{ r+\phi+x }
\text{ and } 
\beta \leq \frac{ p + \sqrt{ p^2 + 4 ( q + \phi )   }  }{2}  - \frac{1}{\psi}$.
For the first condition,
$\beta ( r + \phi + x) \leq \frac{ p + \sqrt{ p^2 + 4 ( q + \phi )   }  }{2} \phi$
is equivalent to
$\beta \big( 2 \eta ( \| A_* \| + \rho R ) c_1  + \eta \beta c_0 + \eta \beta  \delta' \big)
 \leq \frac{ p + \sqrt{ p^2 + 4 ( q + \phi )   }  }{2} \eta \beta c_0$,
which can be satisfied by $c_1 \leq \frac{ c_0 }{40 ( \| A_* \| + \rho R )} $
and $\delta' \leq \frac{c_0}{20}$, leading to an upper bound of $\beta$, which is
$\beta \leq \frac{10}{11} \frac{ p + \sqrt{ p^2 + 4 ( q + \phi )   }  }{2}$.
Using the expression of $p,q$, $\phi$, and $\psi$,  it suffices to have that
$\beta \leq \frac{10}{11} \big( 1- \eta (1 + \beta ) \delta  \big)$.
On the other hand, for the second condition, 
$\beta \leq \frac{ p + \sqrt{ p^2 + 4 ( q + \phi )   }  }{2}  - \frac{1}{\psi}$,
by using the expression of $p,q$, $\phi$, and $\psi$, 
it suffices to have that
$\beta \leq 1- \eta (1 + \beta ) \delta  - 0.1$.

\end{proof}

\subsection{Proof of Theorem~\ref{thm:final}}

\begin{proof}
Let $\delta = c_{\delta} \big( \rho \| w_* \| - \gamma \big)$ for some number $c_{\delta} < 0.5$.
Denote
$ c^{converge}_{\eta \delta}  := 1 - \frac{2 \tilde{c}_0 }{ 1 - 2 \eta  \delta} $ for some number $\tilde{c}_0 > 0$.
By Lemma~\ref{lem:0b}, we have that
\begin{equation} \label{eq:t1}
\begin{aligned}
\| w_{t} - w_* \|^2 
 & \leq
 \big( 1- \eta (1 + \beta ) \delta  + \frac{ 2 \eta \beta \tilde{c}_0 \delta }{ 1- \eta (1 + \beta ) \delta} \big)^{t - t_*} c_{\beta,t_*}
\\ & \leq \big( 1 -  \eta \delta ( 1 + \beta  c^{converge}_{\eta \delta} )  \big)^{t - t_*} c_{\beta,t_*}
\end{aligned}
\end{equation}
where
$c_{\beta,t_*}$ is a number that satisfies for any $t \leq t_*$,
$ \| w_{t} - w_* \|^2 + \frac{2 \eta c_0 \beta}{ 1 - \eta (1+\beta) \delta } \| w_{t-1} - w_* \|^2 + \eta \beta c_0 
\sum_{s=0}^{t-2} \beta^{t-2-s} \| w_s - w_* \|^2 + \max \{ 0, 
\beta 20 \eta \sum_{s=1}^{t_*-1} \beta^{t_* -1 - s }
\big( \frac{\gamma}{2} - \frac{\rho}{2} \| w_s \|  \big)
\| w_s - w_* \|^2  \}\leq c_{\beta,t_*}$. 
Note that we can obtain a trivial upper-bound $c_{\beta,t}$ for any $t$ as follows.
Using that $\| w_t - w_* \|^2 \leq 4 R^2$ and that
$c_0 \leftarrow \tilde{c}_0 \delta$ and that $\delta \leftarrow c_{\delta} \big( \rho \| w_* \| - \gamma \big)$, we can upper-bound the term as
\begin{equation} \label{eq:t1-1}
\begin{split}
c_{\beta,t} & \leq 4 R^2 \big( 1 + \frac{2 \eta \tilde{c}_0 c_{\delta} \big( \rho \| w_* \| - \gamma \big) \beta}{ 1 - \eta (1+\beta) c_{\delta} \big( \rho \| w_* \| - \gamma \big) }
+  \eta \beta \tilde{c}_0 c_{\delta} \big( \rho \| w_* \| - \gamma \big) \frac{1-\beta^{t}}{1-\beta}  + \beta 10 \eta \max\{0, \gamma\} \frac{1-\beta^{t}}{1-\beta} 
   \big)
\\ & \leq 4 R^2 ( 1 + \eta \beta \tilde{C}_{\beta})
\\ & := \tilde{c}_{\beta}.
\end{split}
\end{equation}
where we define $\tilde{C}_{\beta}:= \frac{2 \tilde{c}_0 c_{\delta} \big( \rho \| w_* \| - \gamma \big) }{ 1 - \eta (1+\beta) c_{\delta} \big( \rho \| w_* \| - \gamma \big) }
+ \frac{1}{1-\beta} \big( \tilde{c}_0 c_{\delta} ( \rho \| w_* \| - \gamma ) + 10 \max\{0, \gamma \} \big)$ and $\tilde{c}_{\beta}:=4 R^2 ( 1 + \eta \beta \tilde{C}_{\beta}) $.

Now denote 
$c^{converge}:= 1 - \frac{2 \tilde{c}_0 }{ 1 - 2 \eta c_{\delta} \big( \rho \| w_* \| - \gamma \big) }.$
We have that
\begin{equation}
\begin{split}
f(w_{t_*+t}) - f(w_*) 
& \overset{(a)}{\leq} \frac{ \| A \|_2 + 2 \rho R }{2} \| w_{t_*+t} - w_* \|^2
\\ & \overset{(b)}{\leq} \frac{  \| A \|_2 + 2 \rho R  }{2} \tilde{c}_{\beta} \exp\big(  - \eta c_{\delta} ( \rho \| w_* \| - \gamma) ( 1 + \beta c^{converge}    )  t \big)
\end{split}
\end{equation}
where (a) uses the $(\| A \|_2 + 2 \rho R )$-smoothness of function $f(\cdot)$ in the region of $\{ w: \| w \| \leq R \}$, and (b) uses (\ref{eq:t1}), (\ref{eq:t1-1}) and that $\delta := c_{\delta} \big( \rho \| w_* \| - \gamma \big)$.
So we see that the number of iterations in the linear convergence regime is at most
\begin{equation}
t \leq \hat{T} := \frac{1}{ \eta  c_{\delta}  ( \rho \| w_* \| - \gamma ) ( 1 + \beta c^{converge}  )   } 
\log ( \frac{ (\| A \|_2 + 2 \rho R)  \tilde{c}_{\beta} }{ 2 \epsilon}  ).
\end{equation}

Lastly, let us check if the step size $\eta$ satisfies the constraints of Lemma~\ref{lem:0b}.
Recall the notations that
$c_w:=\frac{2 \beta}{1-\beta}$, $\tilde{c}_w := (1+c_w) + (1+c_w)^2$,
$c_{\beta} := ( 2 \beta^2 + 4 \beta + L \beta)$, and
$L:=\| A \|_2 + 2 \rho R$.
Lemma~\ref{lem:0b} has the following constraints, 
(1) $\eta  \leq \frac{ c_0 \beta }{ 2 c_{\beta} \| A_* \|^2_2 + 2 \beta^2 L  + 2 c_{\beta} \rho^2 R^2 }$,
(2) $\eta \leq \frac{c_1 }{ \beta( 2 \tilde{c}_w + L \beta / (1-\beta )) \| A_* \|  }$,   
(3) $\eta \leq \frac{c_1}{\rho \beta^2 L R  / (1-\beta )  }$,
(4) $\eta \leq \frac{1}{4 \rho \tilde{c}_w R}$,
(5) $\eta \leq \frac{ c_1 \big( \| A_* \| + \rho R \big) }{ L \beta^2 ( \| A_* \|^2 + \rho^2 R^2  )  / (1-\beta ) } $,
(6) $\eta \leq \frac{1 + \beta}{2 \rho R}$, and
(7) $\eta \leq \frac{1+\beta}{4 \| A_* \|_2}$.
For the constraints of (1), using $c_0 = \tilde{c}_0 \delta$ and that $\delta = c_{\delta} \big( \rho \| w_* \| - \gamma \big)$,
it can be rewritten as
\begin{equation}
\eta \leq \frac{ \tilde{c}_0 c_{\delta} \big( \rho \| w_* \| - \gamma \big)  }{ 
2 (2\beta+ 4 + L) \big( \| A_* \|^2_2 +  \rho^2 R^2  \big) + 2 \beta L   }.
\end{equation}
For the constraints of (2), using $c_1 \leq \frac{ \tilde{c}_0 \delta }{40 ( \| A_* \| + \rho R  )} $ and that $\delta = c_{\delta} \big( \rho \| w_* \| - \gamma \big)$, it can be rewritten as
\begin{equation}
\eta \leq \frac{ \tilde{c}_0 c_{\delta} \big( \rho \| w_* \| - \gamma \big)  }{ 40 \beta  ( \| A_* \|^2 + \rho R \| A_* \| )  ( 2 \tilde{c}_w + L \beta / (1-\beta ) ) }.
\end{equation}
The constraints of (3) can be written as, using $c_1 \leq \frac{ \tilde{c}_0 \delta }{40 ( \| A_* \| + \rho R  )} $ and that $\delta = c_{\delta} \big( \rho \| w_* \| - \gamma \big)$,
\begin{equation}
\eta \leq \frac{ \tilde{c}_0 c_{\delta} \big( \rho \| w_* \| - \gamma \big) }{ 40  \rho R
( \| A_* \|  + \rho R ) 
 \beta^2 L / (1-\beta ) }.
\end{equation}
The constraints of (5) translates into, using $c_1 \leq \frac{ \tilde{c}_0 \delta }{40 ( \| A_* \| + \rho R  )} $ and that $\delta = c_{\delta} \big( \rho \| w_* \| - \gamma \big)$,
\begin{equation}
\eta \leq \frac{ \tilde{c}_0 c_{\delta} \big( \rho \| w_* \| - \gamma \big)  }{ L \beta^2 ( \| A_* \|^2 + \rho^2 R^2  ) / (1-\beta ) }.
\end{equation}
Considering all the above constraints, it suffices to let $\eta$ satisfies
\begin{equation} \label{eq:etafinal}
\eta \leq \min \big(  \frac{1}{4 (\| A_* \| + \tilde{c}_w \rho R) },  \frac{ \tilde{c}_0 c_{\delta} \big( \rho \| w_* \| - \gamma \big) }{ C_{\beta} ( \| A_* \| + \rho R  )( 1 +  \| A_* \| + \rho R ) + 2 \beta L  }  \big),
\end{equation}
where $C_{\beta} := \max \big( 4 \beta + 8 + 2L,    
 \frac{ 40 \beta^2 L}{1-\beta}  + 80 \beta \tilde{c}_w, 4  \tilde{c}_w 
 \big)$.

Note that the constraint of $\eta$ satisfies 
$\eta c_{\delta} \big( \rho \| w_* \| - \gamma \big) \leq \frac{1}{4}  c_{\delta} \leq \frac{1}{8} $. Using this inequality, we can simplify the constraint regarding the parameter $\beta$ in Lemma~\ref{lem:0b}, which leads to $\beta \in [0,0.65]$. Consequently, we can simplify and upper bound the constants $\tilde{c}_w$, $C_{\beta}$ and $\tilde{C}_{\beta}$, which leads to the theorem statement.

Thus, we have completed the proof.

\end{proof}

\section{More discussions} \label{app:more}

Recall the discussion in the main text,
we showed that the iterate $w_{t+1}$ generated by HB satisfies
\begin{equation}
\textstyle \langle w_{t+1}, u_i \rangle = 
( 1 + \eta \lambda_i) \langle w_{t}, u_i \rangle
+ \beta (  \langle w_{t}, u_i \rangle -  \langle w_{t-1}, u_i \rangle ),
\end{equation}
which is in the form of dynamics shown in Lemma~\ref{lem:agrow}.
Hence, one might be able to show that with the use of the momentum, the growth rate of the projection on the eigenvector $u_i$ (i.e. $|\langle w_{t+1}, u_i \rangle|$) is faster as the momentum parameter $\beta$ increases. Furthermore, the top eigenvector projection
$|\langle w_{t+1}, u_1 \rangle|$ is the one that grows at the fastest rate. As the result,
after normalization (i.e. $\frac{w_T}{| w_T| }$ ), the normalized solution will converge to the top eigenvector after a few iterations $T$.

However, we know that power iteration or Lanczos method are the standard, specialized, state-of-the-art algorithms for computing the top eigenvector. 
It is true that HB is outperformed by these methods.
But in the next subsection, we will show an implication of the acceleration result, compared to vanilla gradient descent, of top eigenvector computations.

\begin{figure}[h]
\tiny
\centering
     \subfloat[\tiny $\eta=1 \times 10^{-2}$]{%
       \includegraphics[width=0.23\textwidth]{./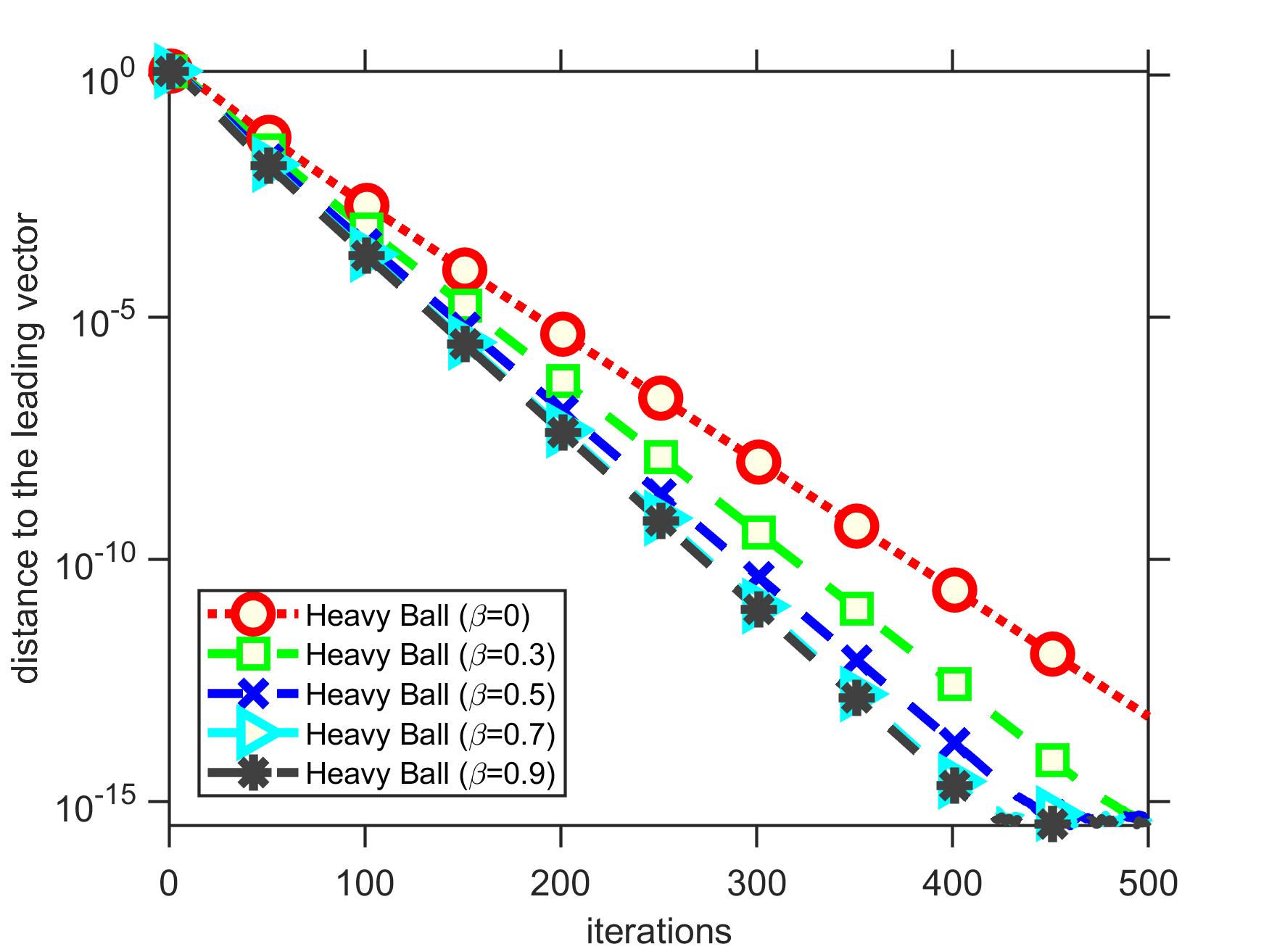}
     }
     \subfloat[\tiny $\eta=5 \times 10^{-3}$]{%
       \includegraphics[width=0.23\textwidth]{./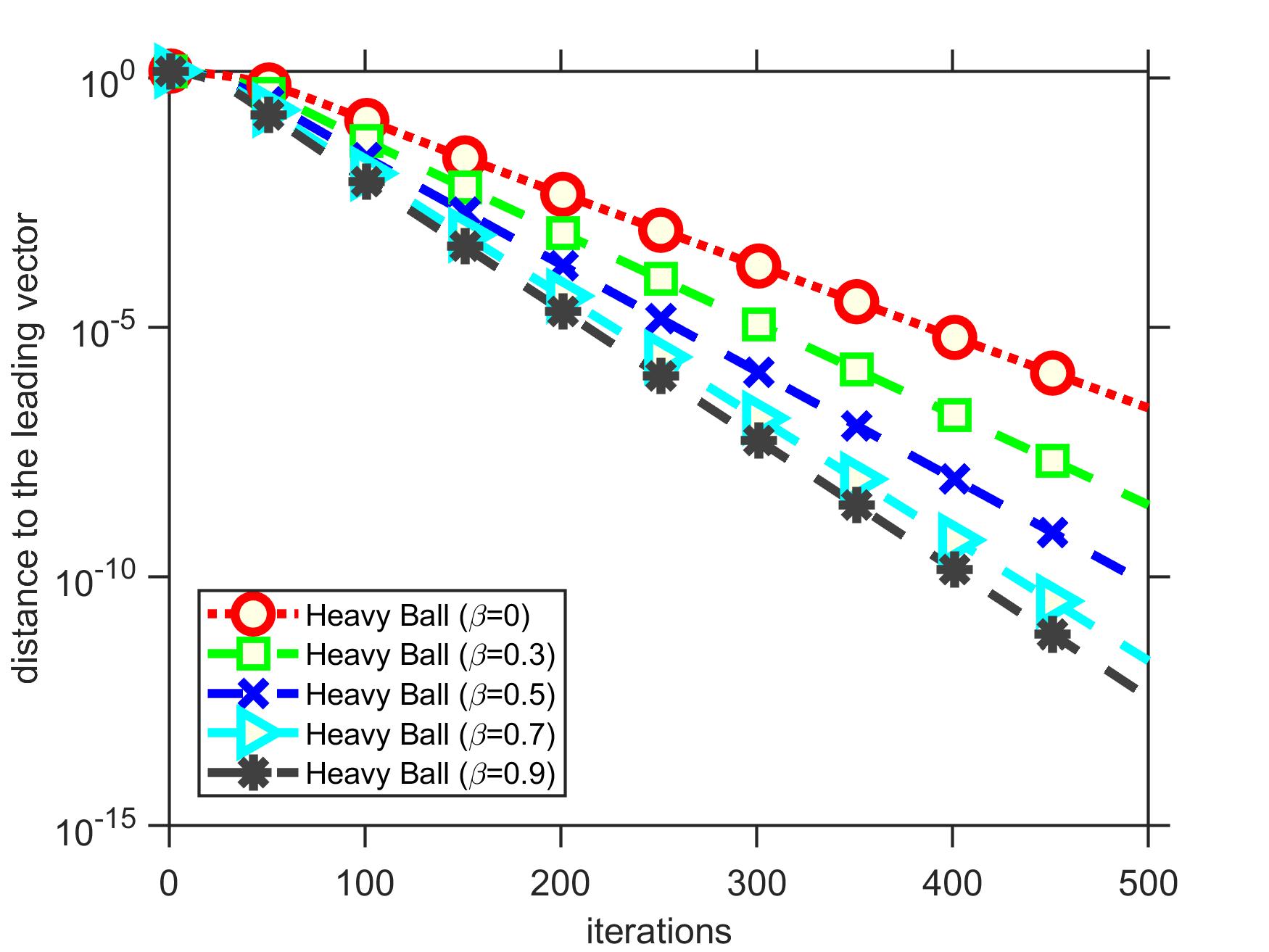}
     }
     \subfloat[\tiny $\eta=1 \times 10^{-3}$]{%
       \includegraphics[width=0.23\textwidth]{./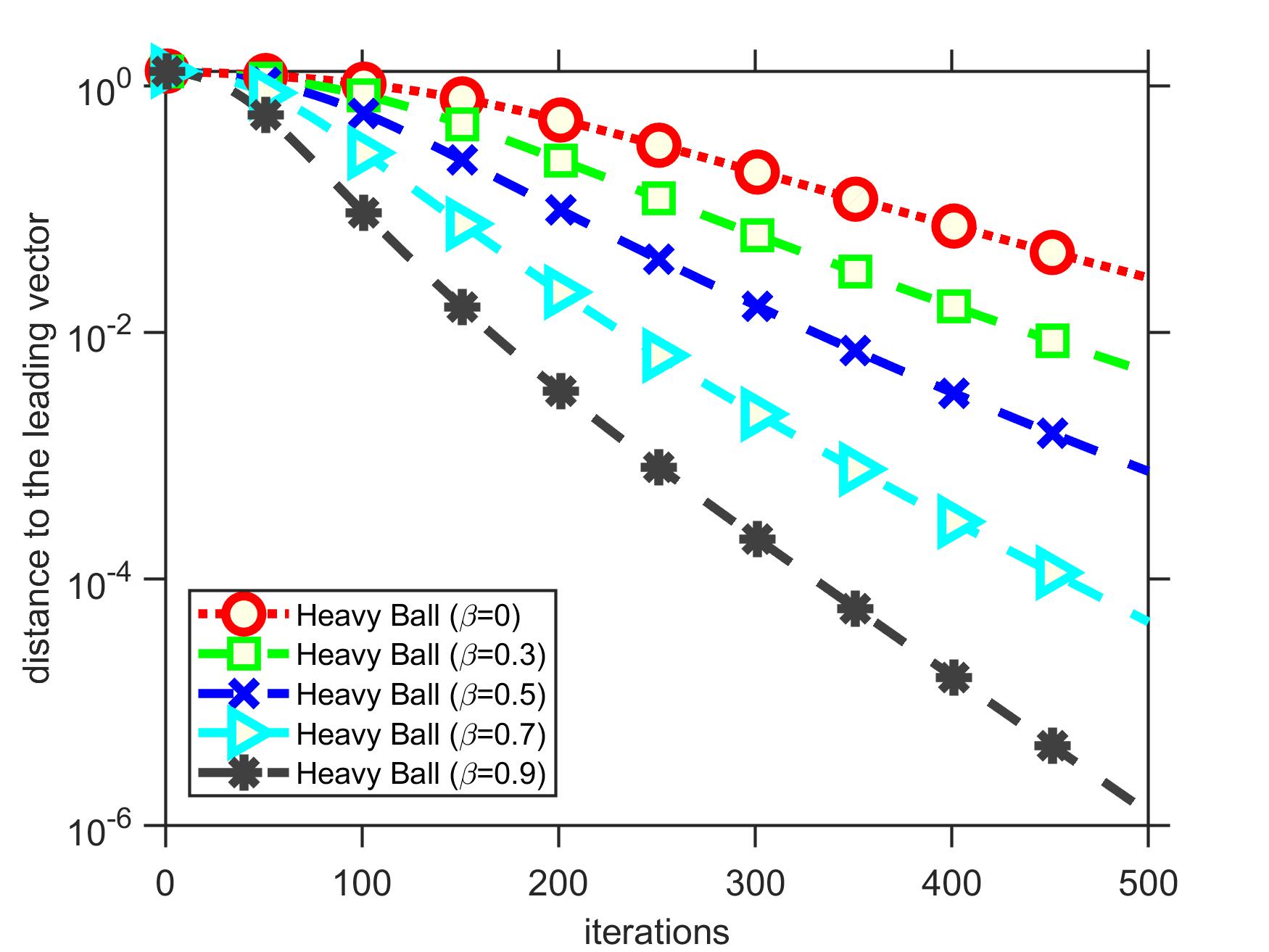}
     }
     \subfloat[\tiny $\eta=5 \times 10^{-4}$]{%
       \includegraphics[width=0.23\textwidth]{./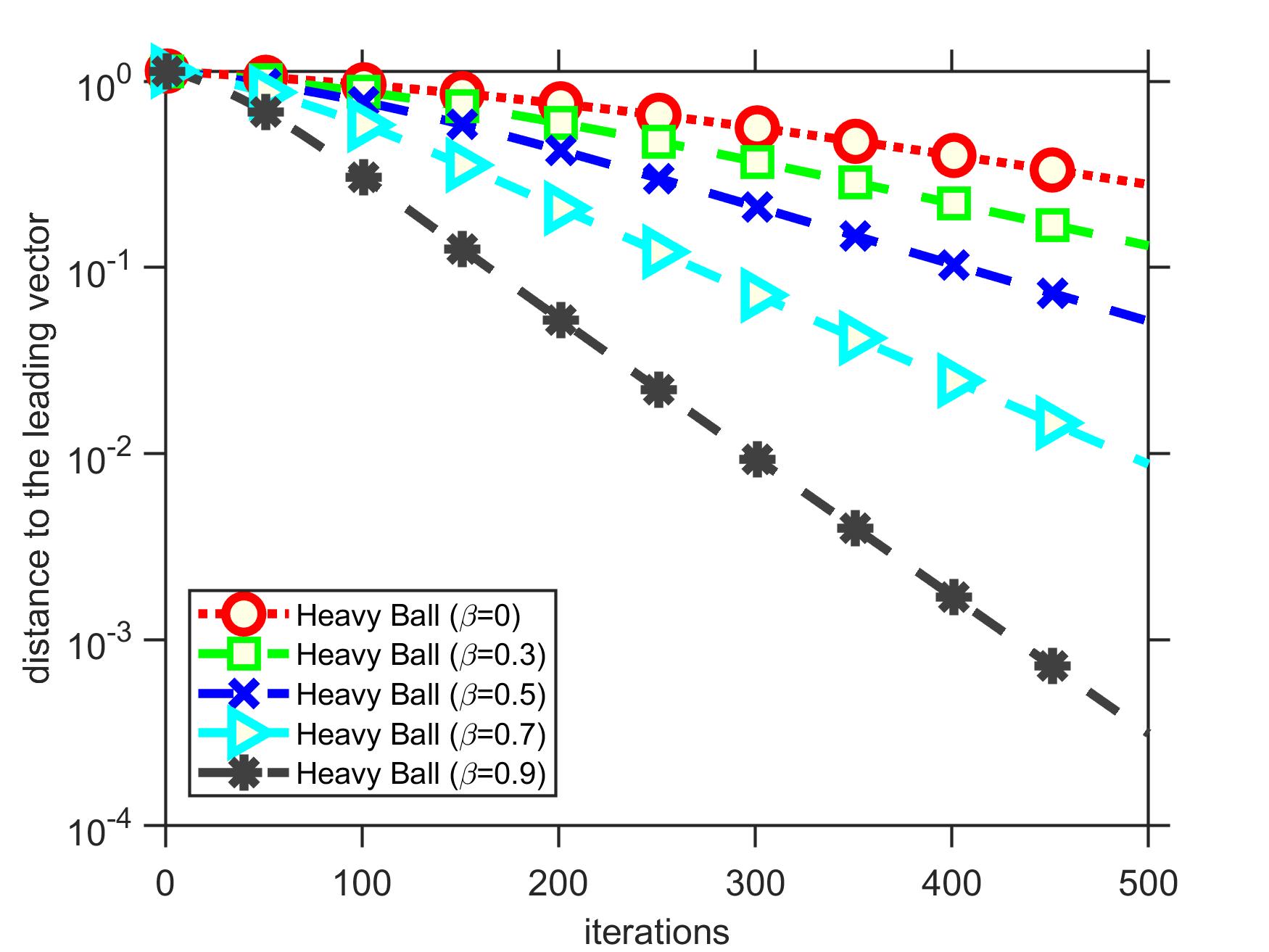}
     }
\caption{\small Distance to the top leading eigenvector vs. iteration when applying HB for solving $\min_{w} \frac{1}{2} w^\top A w$. The acceleration effect due to the use of momentum is more evident for small $\eta$. Here we construct the matrix $A= BB^\top \in \reals^{10 \times 10}$ with each entry of $B \in \reals^{10 \times 10}$ sampled from $\mathcal{N}(0,1).$}    \label{fig:eign}  
\vspace{-0.02in}
\end{figure}

\subsection{Implication: escape saddle points faster}

Recent years there is a growing trend in designing algorithms to quickly find a second order stationary point in non-convex optimization
(e.g. \cite{CDHS18,AABHM17,AL18,XRY18,GHJY15,KL16,FLZCOLT19,JGNKJ17,CNJ18,JNGKJ19,DKLH18,SRRKKS19,WCA20}). 
The common assumptions are that the gradient is $L$-Lipschitz: $\| \nabla f(x) - \nabla f(y) \| \leq L \| x - y \|$ and that the Hessian is $\rho$-Lipschitz: $\| \nabla^2 f(x) - \nabla^2 f(y) \| \leq \rho \| x - y \|$, while some related works make some additional assumptions. All the related works agree that if the current iterate $w_{\tt}$ is in the region of strict saddle points, defined as that the gradient is small (i.e. $\| \nabla f(w_{\tt}) \| \leq \epsilon_g$ ) but the least eigenvalue of the Hessian is strictly negative (i.e. $\lambda_{\min}( \nabla^2 f(w_{\tt}) )  \preceq - \epsilon_h I_d$),
then the eigenvector corresponding the least eigenvalue $\lambda_{\min}( \nabla^2 f(w_{\tt}) )$ is the escape direction.
To elaborate, by $\rho$-Lipschitzness of the Hessian,
$ \textstyle f(w_{\tt+t}) - f(w_{\tt})
\leq \langle \nabla f(w_{\tt}) , w_{\tt + t} - w_{\tt} \rangle
\textstyle +\underbrace{ \frac{1}{2} ( w_{\tt+t}  - w_{\tt} )^\top \nabla^2 f(w_{\tt} ) (w_{\tt+t} - w_{\tt} ) }_{ \text{exhibit negative curvature} }
  \textstyle + \frac{\rho}{3} \| w_{\tt+t} - w_{\tt} \|^3.$
So if $w_{\tt+t}  - w_{\tt} $ is in the direction of the bottom eigenvector of $\nabla^2 f(w_{\tt})$, then $\frac{1}{2} ( w_{\tt+t}  - w_{\tt} )^\top \nabla^2 f(w_{\tt} ) (w_{\tt+t} - w_{\tt} ) \leq - c' \epsilon_h$ for some $c' > 0$. Together with the fact that the gradient is small when in the region of saddle points and the use of a sufficiently small step size can guarantee that the function value decreases sufficiently (i.e.$f(w_{\tt+t}) - f(w_{\tt}) \leq - c \epsilon_h$ for some $c>0$).
Therefore, many related works design fast algorithms by leveraging the problem structure to quickly compute the bottom eigenvector of the Hessian (see e.g. \cite{CDHS18,AABHM17,AL18,XRY18}). 

An interesting question is as follows ``\textit{If the Heavy Ball algorithm is used directly to solve a non-convex optimization problem, can it escape possible saddle points faster than gradient descent?}''
Before answering the question, let us first conduct an experiment to see if the Heavy Ball algorithm can accelerate the process of escaping saddle points. Specifically, we consider a problem that was consider by \cite{SRRKKS19,RZSPBSS18,WCA20} for the challenge of escaping saddle points. The problem is
\begin{equation} \label{obj:simu}
\textstyle \min_w f(w)  :=    \frac{1}{n} \sum_{i=1}^n \big(  \frac{1}{2} w^\top H w + x_i^\top w + \| w \|^{10}_{10}  \big)
\end{equation}
with
$\textstyle H:= \begin{bmatrix}  1 & 0 \\ 0 & -0.1 \end{bmatrix}$.
where $x_i \sim \mathcal{N}( 0, \text{diag}([0.1, 0.001]))$ and the small variance in the second component will provide smaller component of gradient in the escape direction. 
At the origin, we have that the gradient is small but that the Hessian exhibits a negative curvature. For this problem, \cite{WCA20} observe that SGD with momentum escapes the saddle points faster but they make strong assumptions in their analysis. We instead consider the Heavy Ball algorithm (i.e. Algorithm~\ref{alg:HB}, the deterministic version of SGD with momentum). Figure~\ref{fig:saddle} shows the result and we see that the higher the momentum parameter $\beta$, the faster the process of escaping the saddle points.

We are going to argue that the observation can be explained by our theoretical result that 
the Heavy Ball algorithm computes the top eigenvector faster than gradient descent.
Let us denote $h(w):= \frac{1}{n} \sum_{i=1}^n \big( x_i^\top w + \| w \|^{10}_{10}   \big)$. We can rewrite the objective (\ref{obj:simu}) as
$f(w) := \frac{1}{2} w^\top H w + h(w).$
Then, the Heavy Ball algorithm generates the iterate according to 
\begin{equation} \label{eq:upsyn}
\begin{split}
& \textstyle w_{t+1} 
= \underbrace{ \begin{bmatrix} 1 - \eta & 0 \\ 0  &  1 + \frac{ \eta}{10}  \end{bmatrix} }_{:=A}w_t   
+ \beta ( w_t - w_{t-1}) - \eta \nabla h(w_t).
\end{split}
\end{equation}
By setting $\eta \leq 1$, we have that the top eigenvector of $A$ is $u_1 = e_2$, which is the escape direction. Now observe the similarity (if one ignores the term $\eta \nabla h(w_t)$) between the update (\ref{eq:upsyn}) and \ref{eq:neweig}. It suggests that a similar analysis 
could be used for explaining the escape process. In Appendix~\ref{app:escape}, we provide a detailed analysis of the observation.
However, problem (\ref{obj:simu}) has a fixed Hessian and is a synthetic objective function. 
\textit{For general smooth non-convex optimization, can we have a similar explanation?}
Consider applying the Heavy Ball algorithm to $\min_w f(w)$ and suppose that at time $\tt$, the iterate is in the region of strict saddle points. 
We have that
$ \textstyle w_{t+1+\tt} - w_{\tt} 
= \big( I_d - \eta \nabla^2 f(w_{\tt}) \big) \big( w_{t+\tt} - w_{\tt} \big)
+ \beta \big( (w_{t+\tt} - w_{\tt}) - (w_{t+\tt-1} - w_{\tt})  \big)
+ \eta \big( \underbrace{ \nabla^2 f(w_{\tt}) (w_{t+\tt}- w_{\tt}) - \nabla f( w_{t+\tt})  }_{\text{deviation}} \big) 
$.
\begin{wrapfigure}[14]{t}{0.3\textwidth} 
\centering
 \includegraphics[width=0.3\textwidth]{./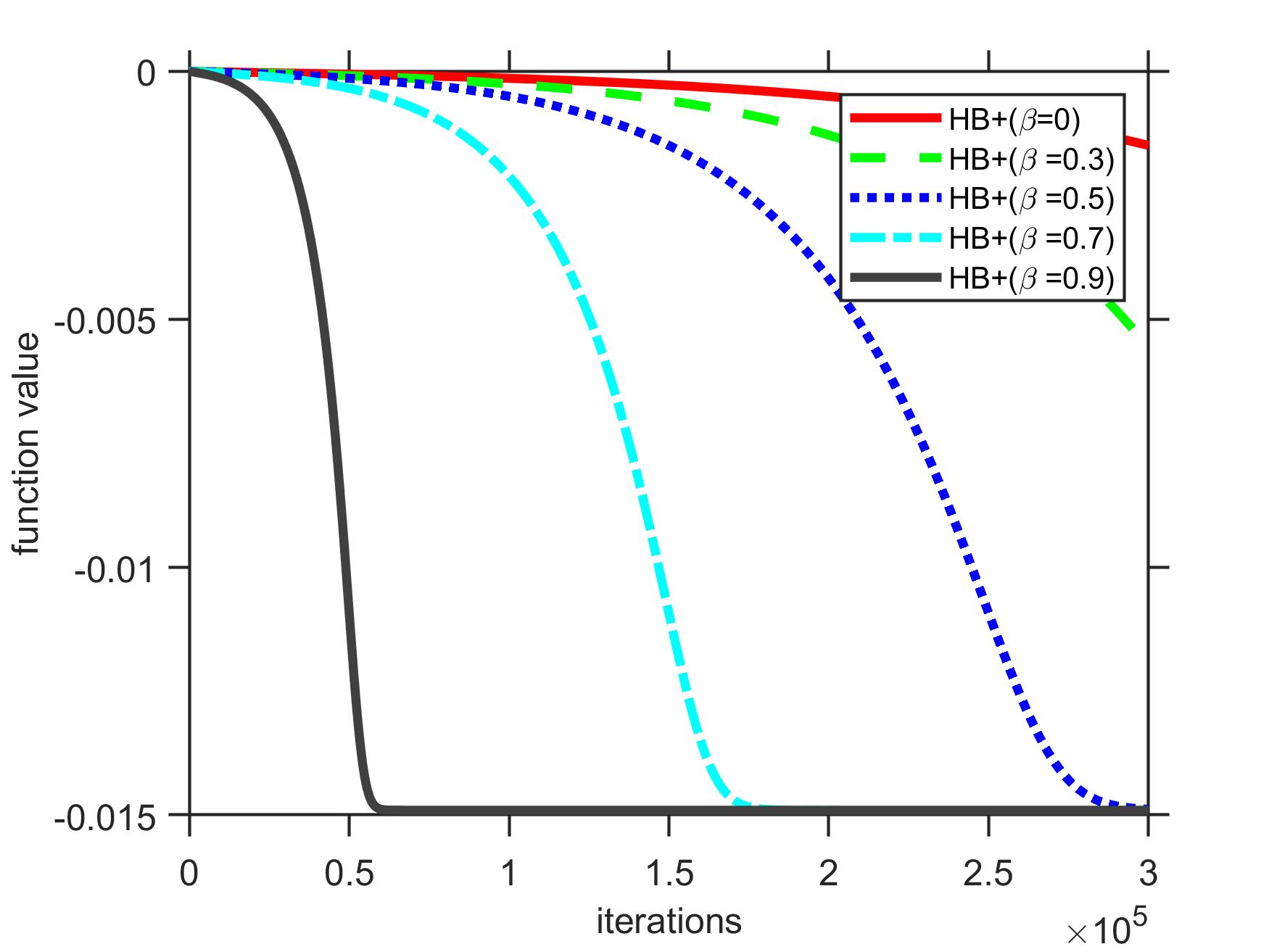}
     \caption{\footnotesize Solving (\ref{obj:simu}) (with $n=10$) by the Heavy Ball algorithm with different $\beta$.
     }
     \label{fig:saddle}
\end{wrapfigure}
By setting $\eta \leq \frac{1}{L}$ with $L$ being the smoothness constant of the problem, we have that the top eigenvector $\big( I_d - \eta \nabla^2 f(w_{\tt}) \big)$ is the eigenvector that corresponds to the smallest eigenvalue of the Hessian $\nabla^2 f(w_{\tt})$,
which is an escape direction. Therefore, if the deviation term can be controlled, the dynamics of the Heavy Ball algorithm can be viewed as implicitly and approximately computing the eigenvector, and hence we should expect that higher values of momentum parameter accelerate the process. To control $\nabla^2 f(w_{\tt}) (w_{t+\tt}- w_{\tt}) -\nabla f(w_{t+\tt}) $, one might want to exploit the $\rho$-Lipschitzness assumption of the Hessian and might need further mild assumptions, as \cite{DJLJPS18} provide examples showing that gradient descent can take exponential time $T = \Omega( \exp( d) )$ to escape saddles points.
Previous work of \cite{WCA20} makes strong assumptions to avoid the result of exponential time to escape. On the other hand, \cite{LPPSJR19} show that first order methods can escape strict saddle points almost surely. So we conjecture that under additional mild conditions, if gradient descent can escape in polynomial time, then using Heavy Ball momentum ($\beta \neq 0$) will accelerate the process of escape. We leave it as a future work.

\subsection{Escaping saddle points of (\ref{obj:simu})} \label{app:escape}

Recall the objective is 
\begin{equation} 
\textstyle \min_w f(w)  :=    \frac{1}{n} \sum_{i=1}^n \big(  \frac{1}{2} w^\top H w + x_i^\top w + \| w \|^{10}_{10}  \big)
\end{equation}
with
\begin{equation}
\textstyle H:= \begin{bmatrix}  1 & 0 \\ 0 & -0.1 \end{bmatrix}.
\end{equation}
where $x_i \sim \mathcal{N}( 0, \text{diag}([0.1, 0.001]))$ and the small variance in the second component will provide smaller component of gradient in the escape direction. 
At the origin, we have that the gradient is small but that the Hessian exhibits a negative curvature. 
Let us denote $h(w):= \frac{1}{n} \sum_{i=1}^n \big( x_i^\top w + \| w \|^{10}_{10}   \big)$. We can rewrite the objective as
$f(w) := \frac{1}{2} w^\top H w + h(w).$
Then, the Heavy Ball algorithm generates the iterate according to 
\begin{equation} \label{eq:syn}
\begin{split}
& \textstyle w_{t+1} = w_t - \eta H w_t  + \beta ( w_t - w_{t-1} )  - \eta \nabla h(w_t)
\\ & \textstyle
= \begin{bmatrix} 1 - \eta & 0 \\ 0  &  1 + 0.1 \eta  \end{bmatrix} w_t   
+ \beta ( w_t - w_{t-1}) - \eta \nabla h(w_t)
\\ & \textstyle := A w_t + \beta ( w_t - w_{t-1}) - \eta \nabla h(w_t), 
\end{split}
\end{equation}
where the matrix $A$ is defined as  
$A := \begin{bmatrix} 1 - \eta & 0 \\ 0  &  1 + 0.1 \eta
\end{bmatrix}$.
By setting $\eta \leq 1$, we have that the top eigen-vector of $A$ is $u_1 = e_2 = \begin{bmatrix}0 \\ 1\end{bmatrix}$, which is the escape direction. 
In the following, let us denote $u_2 := e_1 = \begin{bmatrix}1 \\ 0\end{bmatrix}$
and denote $\bar{x} = \frac{1}{n} \sum_{i=1}^n x_i$.
Since
$\| w \|_p^p :=  \big(  \sum_{i=1}^d | w_i |^p   \big)$ and that $ \frac{ \partial \| w \|^p_p }{  \partial w_j } := p | w_j |^{p-1} \frac{ \partial | w_j| }{ \partial w_j} = p | w_j |^{p-2} w_j  $,
we have that
\begin{equation} \label{eq:hgrad}
\nabla h(w) = \bar{x} + 10 abs(w)^{8} \circ w,
\end{equation}
where $\circ$ denotes element-wise product and $abs(\cdot)$ denotes the absolution value of its argument
in the element-wise way.

By initialization $w_0=w_{-1}=0$, we have that
$(w_{0}^\top u_1) = (w_{-1}^\top u_1) = (w_{0}^\top u_2)  = (w_{-1}^\top u_2) = 0$ 
and the dynamics
\begin{equation} \label{eq:dym0}
\begin{split}
  w_{t+1}^\top u_2   & = (1 - \eta ) w_{t}^\top u_2 + \beta( w_{t}^\top u_2 - w_{t-1}^\top u_2 ) - \eta u_2^\top \nabla h(w_t)
\\ w_{t+1}^\top u_1  & = (1+ \frac{\eta}{10} ) w_{t}^\top u_1 + \beta( w_{t}^\top u_1 - w_{t-1}^\top u_1 ) - \eta u_1^\top \nabla h(w_t), 
\end{split}
\end{equation}
while we also have the initial condition that
\begin{equation} \label{eq:sq-1}
\begin{split}
w_{1}^\top u_2 & = - \eta u_2^\top \nabla h(w_0) = - \eta u_2^\top \nabla h( \begin{bmatrix} 0 \\ 0 \end{bmatrix} ) = - \eta u_2^\top \bar{x} = - \eta \bar{x}[1]
\\  
w_{1}^\top u_1 & = - \eta u_1^\top \nabla h(w_0) = - \eta u_1^\top \nabla h( \begin{bmatrix} 0 \\ 0 \end{bmatrix}  )
= - \eta u_1^\top \bar{x} = - \eta \bar{x}[2].
\end{split}
\end{equation}
That is,
\begin{equation}
w_1 = - \eta \bar{x}.
\end{equation}
Using (\ref{eq:hgrad}), we can rewrite (\ref{eq:dym0}) as
\begin{equation} \label{eq:dym1}
\begin{split}
  w_{t+1}^\top u_2   & = (1 - \eta ) w_{t}^\top u_2 + \beta( w_{t}^\top u_2 - w_{t-1}^\top u_2 ) - \eta \bar{x}[1] - 10 \eta (w_{t+1}^\top u_2)^9
\\ w_{t+1}^\top u_1  & = (1+ \frac{\eta}{10} ) w_{t}^\top u_1 + \beta( w_{t}^\top u_1 - w_{t-1}^\top u_1 ) - \eta \bar{x}[2] - 10 \eta (w_{t+1}^\top u_1)^9.
\end{split}
\end{equation}
Note that we have that
\begin{equation}
\begin{split}
\nabla f(w) & := H w + \nabla h(w) = H w + \bar{x} +  10 abs(w)^{8} \circ w
\\ & = \begin{bmatrix}  1 + 10 |w[1]|^8 & 0 \\ 0  & -0.1 + 10 |w[2]|^{8} \end{bmatrix} w + \bar{x} 
\end{split}
\end{equation}
So the stationary points of the objective function satisfy
\begin{equation} \label{eq:sq1}
\begin{split}
&(1 + 10 |w[1]|^8 ) w[1] + \bar{x}[1] = 0
\\ &(-0.1 + 10 |w[2]|^8 ) w[2] + \bar{x}[2] = 0
\end{split}
\end{equation}
To check if the stationary point is a local minimum, we can use the expression of
the Hessian 
\begin{equation} \label{eq:synH}
\begin{split}
\nabla^2 f(w) & := H + \nabla^2 h(w)
\\ & = \begin{bmatrix}  1  +  90 | w[1] |^8 & 0 \\ 0 &    -0.1 + 90 | w[2] |^8 \end{bmatrix}.  
\end{split}
\end{equation}
Therefore, the Hessian at a stationary point is positive semi-definite as long as $| w[2] |^8 \geq \frac{1}{900}$, which can be guaranteed by some realizations of $\bar{x}[2]$ according to (\ref{eq:sq1}).

Now let us illustrate why higher momentum $\beta$ leads to the faster convergence in a high level way,
From (\ref{eq:sq1}), we see that one can specify the stationary point by determining $\bar{x}[1]$ and $\bar{x}[2]$. Furthermore, from (\ref{eq:synH}), once the iterate $w_t$ satisfies $| w_t[2] |^8 > \frac{1}{900}$, the iterate enters the locally strongly convex and smooth region, for which the local convergence of the Heavy Ball algorithm is known (\cite{GFJ15}).
W.l.o.g, let us assume that $\bar{x}[2]$ is negative. 
From (\ref{eq:sq-1}), we have that
$w_{1}^\top u_1 > 0$ when $\bar{x}[2]$ is negative.
Moreover, from (\ref{eq:dym1}),
if, before the iterate satisfies $| w_t[2] |^8 > \frac{1}{900}$ , we have that $\frac{1}{10} (w_{t}^\top u_1) - 10 (w_{t}^\top u_1)^9 - \bar{x}[2] > 0$,
then the contribution of the projection on the escape direction (i.e. $w_{t+1}^\top u_1$) due to the momentum term  
$\beta ( w_{t}^\top u_1 - w_{t-1}^\top {u_1}   )$ is positive,
which also implies that the larger $\beta$, the larger the contribution and hence the faster the growing rate of $| w_t[2] |$.
Now let us check the condition,
$\frac{1}{10} (w_{t}^\top u_1) - 10 (w_{t}^\top u_1)^9 - \bar{x}[2] > 0$
before $| w_t[2] |^8 > \frac{1}{900}$.
A sufficient condition is that $( \frac{1}{10} - \frac{1}{90}  ) (w_{t}^\top u_1) - \bar{x}[2] > 0  $, which we immediately see that it is true given that  $\bar{x}[2]$ is negative and that $w_1^\top u_1 > 0$ and that the magnitude of $w_t^\top u_1$ is increasing.
A similar reasoning can be conducted on the other coordinate $w_{t+1}[1]:=  w_{t+1}^\top u_2$.  


\section{Empirical results} \label{app:exp}

\subsection{Phase retrieval}

\begin{figure}[h]
\vskip 0.2in
\begin{center}
\centerline{
\includegraphics[width=0.5\columnwidth]{./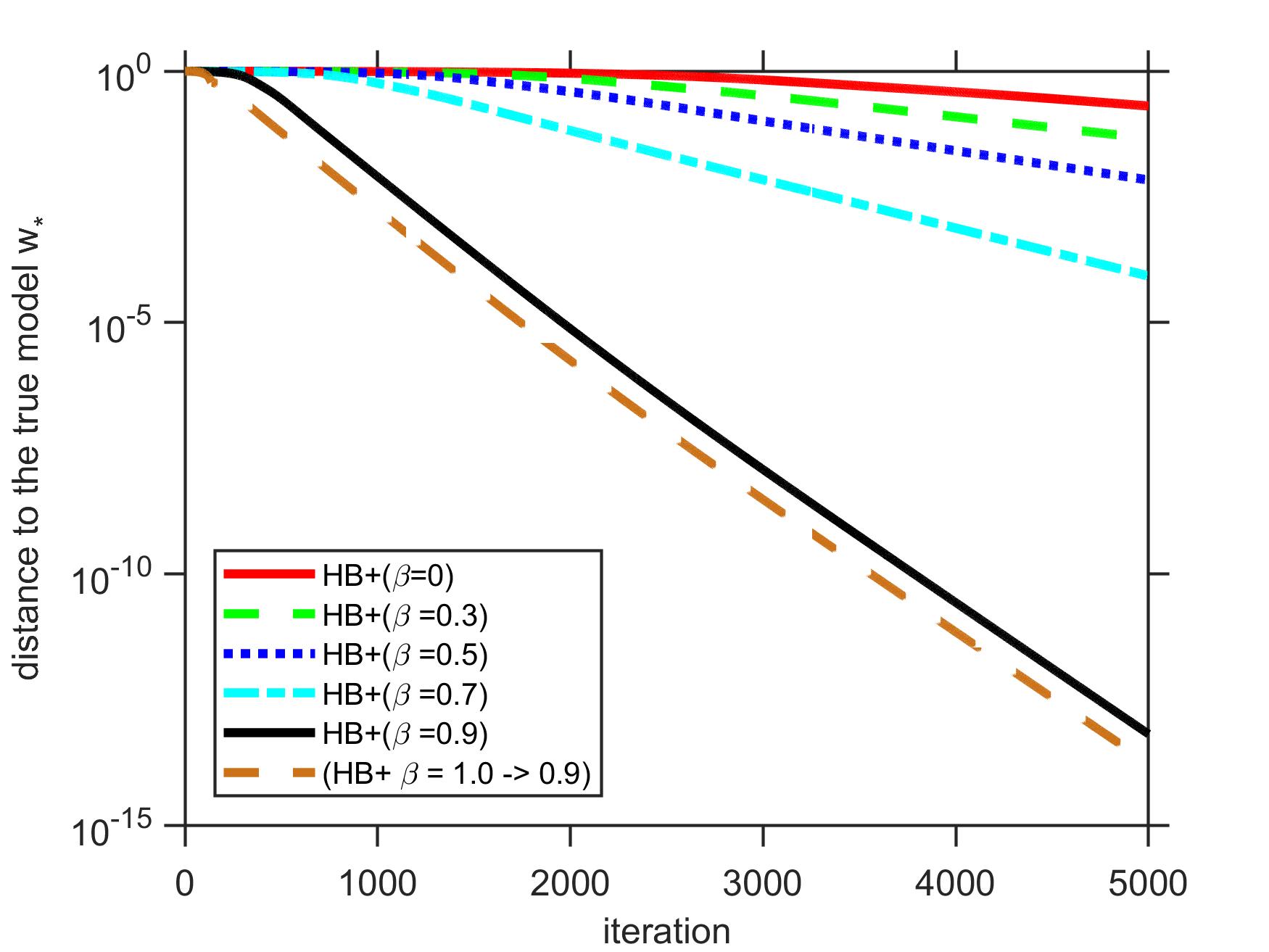}
\includegraphics[width=0.5\columnwidth]{./figures/newover_d10_n200_node1_tr_compare.jpg}}
\caption{
Performance of gradient descent with Heavy Ball momentum with different values of $\beta= \{ 0, 0.3, 0.5, 0.7, 0.9, 1.0 \rightarrow 0.9\}$ for solving the phase retrieval problem (\ref{obj}). The case of $\beta =0$ corresponds to the standard gradient descent. Left:
we plot the convergence to the true model $w_*$, defined as $\min ( \| w_t - w_* \|, \| w_t + w^* \| )$, as the global sign of the objective \eqref{obj} is unrecoverable. 
Right: we plot the objective value (\ref{obj}) vs. iteration $t$.
}
\label{exp:phase}
\end{center}
\vspace{-0.12in}
\end{figure}

\begin{figure}[h]
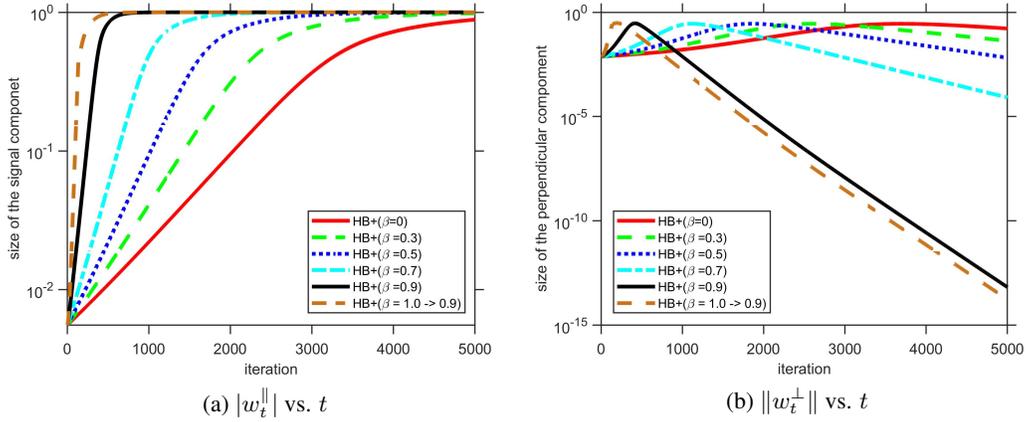

\vskip 0.2in
\begin{center}
\centerline{
     \subfloat[\footnotesize $| w_t^\parallel|$ vs. $t$]{%
       \includegraphics[width=0.5\textwidth]{./figures/newover_d10_n200_node1_at.jpg}
     }
     \subfloat[\footnotesize $\| w_t^\perp \|$ vs. $t$]{%
       \includegraphics[width=0.5\textwidth]{./figures/newover_d10_n200_node1_bt.jpg}
     }
}
\caption{
\footnotesize
Performance of HB with different $\beta= \{ 0, 0.3, 0.5, 0.7, 0.9, 1.0 \rightarrow 0.9\}$ for phase retrieval. 
(a): The size of projection of $w_t$ on $w_*$ over iterations (i.e. $| w_t^\parallel|$ vs. $t$), which is non-decreasing throughout the iterations until reaching an optimal point (here, $\|w_*\| = 1$). (b): The size of the perpendicular component over iterations (i.e. $\| w_t^\perp \|$ vs. $t$), which is increasing in the beginning and then it is decreasing towards zero after some point.
We see that the slope of the curve corresponding to a larger momentum parameter $\beta$ is steeper than that of a smaller one, which confirms Lemma~\ref{lem:agrow} and Lemma~\ref{lem:bdecay}.
} 
\label{exp:phase2}
\end{center}
\vspace{-0.12in}
\end{figure}

All the lines are obtained by initializing the iterate at the same point $w_0 \sim \mathcal{N}( 0, \mathcal{I}_d / (10000 d))$ and using the same step size $\eta = 5 \times 10^{-4}$. Here we set $w_* = e_1$
and sample $x_i \sim \mathcal{N}( 0, \mathcal{I}_d )$ with dimension $d=10$ and number of samples $n=200$. We see that the higher the momentum parameter $\beta$, the faster the algorithm enters the linear convergence regime.
For the line represented by HB$ + (\beta=1.0 \rightarrow 0.9)$, it means switching to the use of $\beta=0.9$ from $\beta =1 $ after some iterations. Below, Algorithm~\ref{alg:HBreset1} and Algorithm~\ref{alg:HBreset2}, we show two equivalent presentations of this practice. In our experiment, for the ease of implementation, we let the criteria of the switch be $\mathbbm{1}\{\frac{f(w_1)-f(w_t)}{f(w_1)} \geq 0.5 \}$, i.e. if the relative change of objective value compared to the initial value has been increased to $50\%$.

\begin{algorithm}[h]
\begin{algorithmic}[1]
\small
\caption{Switching $\beta=1$ to $\beta < 1$} 
\label{alg:HBreset1}
\STATE Required: step size $\eta$ and momentum parameter $\beta \in [0,1)$.
\STATE Init: $u = v = w_{0} \in \reals^d $ and $\hat{\beta}=1$.
\FOR{$t=0$ to $T$}
\STATE Update iterate $w_{t+1} = w_t - \eta \nabla f(w_t) + \hat{\beta} ( u - v ) $.
\STATE Update auxiliary iterate $u = v$ 
\STATE Update auxiliary iterate $v= w_t$.
\STATE \textbf{If} \{ Criteria is met \} 
\STATE \quad $\hat{\beta} = \beta$.
\STATE \quad $u = v = w_{t+1}$  \qquad $\#$ (reset momentum)    
\STATE \textbf{end} 
\ENDFOR
\end{algorithmic}
\end{algorithm}

\begin{algorithm}[h]
\begin{algorithmic}[1]
\small
\caption{Switching $\beta=1$ to $\beta < 1$.} 
\label{alg:HBreset2}
\STATE Required: step size $\eta$ and momentum parameter $\beta \in [0,1)$.
\STATE Init: $w_{0} \in \reals^d $, $m_{-1} = 0_d$, $\hat{\beta} = 1$.
\FOR{$t=0$ to $T$}
\STATE Update momentum $m_t := \hat{\beta} m_{t-1} +  \nabla f(w_t)$.
\STATE Update iterate $w_{t+1} := w_t - \eta m_t$.
\STATE \textbf{If} \{ Criteria is met \} 
\STATE \quad $\hat{\beta} = \beta$.
\STATE \quad $m_t = 0.$  \qquad $\#$ (reset momentum)    
\STATE \textbf{end} 
\ENDFOR
\end{algorithmic}
\end{algorithm}

\subsection{Cubic-regularized problem}

Figure~\ref{fig:cubic} shows empirical results of solving the cubic-regularized problem by Heavy Ball with different values of momentum parameter $\beta$.
Subfigure (a) shows that larger momentum parameter $\beta$ results in a faster growth rate of $\| w_t \|$, which confirms Lemma~\ref{lem:0d} and shows that it enters the benign region $\mathbbm{B}$ faster with larger $\beta$.
Note that here we have that $\| w_* \| =1$. It suggests that the norm is non-decreasing during the execution of the algorithm for a wide range of $\beta$ except very large $\beta$
For $\beta=0.9$, the norm starts decreasing only after it arises above $\| w_* \|$. 
Subfigure (b) show that higher $\beta$ also accelerates the linear convergence, as one can see that the slope of a line that corresponds to a higher $\beta$ is steeper than that of the lower one (e.g. compared to $\beta=0$), which verifies Theorem~\ref{thm:final}. 
We also observe a very interesting phenomenon: when $\beta$ is set to a very large value (e.g. $0.9$ here), the pattern is intrinsically different from the smaller ones. The convergence is not monotone and its behavior (bump and overshoots when decreasing); 
furthermore,
the norm of $\| w_t\|$ generated by the high $\beta$ is larger than $\| w_* \|$ of the minimizer at some time during the execution of the algorithm, 
which is different from the behavior due to using smaller values of $\beta$ (i.e. non-decreasing of the norm until the convergence). Our theoretical results cannot explain the behavior of such high $\beta$, as such value of $\beta$ exceeds the upper-threshold required by the theorem. An investigation and understanding of the observation might be needed in the future.

Now let us switch to describe the setup of
the experiment. 
We first set step size $\eta = 0.01$,
dimension $d=4$, $\rho = \| w_* \| = \| A \|_2 = 1$, $\gamma = 0.2$ and $\textbf{gap} = 5 \times 10^{-3}$. Then we set $A = \text{diag}( [-\gamma; -\gamma + \textbf{gap}; a_{33}; a_{44} ]         )$, where the entries $a_{33}$ and $a_{44}$ are sampled uniformly random in
$[-\gamma + \textbf{gap}; \| A \|_2]$.
We draw $\tilde{w} = (A + \rho \| w_* \| I_d)^{-\xi} \theta$, where $\theta \sim \mathcal{N}(0; I_d)$ and $\log_2 \xi$ is uniform on $[-1,1]$. We set $w_* = \frac{ \|w_*\| }{ \| \tilde{w}\| } \tilde{w}$ and $b= - (A + \rho \| w_*\| I_d ) w_*$. The procedure makes $w_*$ the global minimizer of problem instance $(A,b,\rho)$. 
Patterns shown on this figure exhibit for other random problem instances as well.

\end{document}